\newtheorem{definition}{Definition}
\newtheorem{remark}{Remark}
\newtheorem{theorem}{Theorem}
\newtheorem{proposition}{Proposition}
\newtheorem{lemma}{Lemma}
\newtheorem{corollary}{Corollary}
\newenvironment{customtheorem}[2]
  {\innercustomtheorem}
  {\endinnercustomtheorem}
\newenvironment{customproposition}[2]
  {\innercustomproposition}
  {\endinnercustomproposition}
\newenvironment{customcorollary}[2]
  {\innercustomcorollary}
  {\endinnercustomcorollary}
\DeclarePairedDelimiterX{\infdivx}[2]{(}{)}{%
  #1\;\delimsize\|\;#2%
}
\definecolor{darkgreen}{rgb}{0.0, 0.7, 0.0}
\newcommand{\kld}{KL divergence\xspace}
\crefname{appsec}{appendix}{appendices}
\Crefname{appsec}{Appendix}{Appendices}
\definecolor{mydarkblue}{rgb}{0,0.08,0.45}
\newcommand{\ba}{\mathbf{a}}
\newcommand{\bA}{\mathbf{A}}
\newcommand{\bg}{\mathbf{g}}
\newcommand{\bs}{\mathbf s}
\newcommand{\bS}{\mathbf S}
\newcommand{\calA}{\mathcal{A}}
\newcommand{\calD}{\mathcal{D}}
\newcommand{\calF}{\mathcal{F}}
\newcommand{\calH}{\mathcal{H}}
\newcommand{\calJ}{\mathcal{J}}
\newcommand{\calQ}{\mathcal{Q}}
\newcommand{\calR}{\mathcal{R}}
\newcommand{\calS}{\mathcal{S}}
\newcommand{\calT}{\mathcal{T}}
\newcommand{\R}{\mathbb{R}}
\newcommand{\btau}{\bm{\tau}}
\newcommand{\closer}[3]{{\kern-#1ex{#2}\kern-#3ex}}
\newcommand{\DKL}{\mathbb{D}_{\text{KL}}\infdivx}
\DeclareMathOperator*{\argmin}{arg\,min}
\DeclareMathOperator*{\argmax}{arg\,max}
\mathchardef\mhyphen="2D
\DeclareMathOperator{\E}{\mathbb{E}}
\newcommand{\vbar}{\,|\,}
\newcommand{\rewardt}[1]{r(\bs_{#1}, \ba_{#1}, \bg; q_{\Delta})}
\newcommand{\dyn}{p_d}
\newcommand{\transitiont}[1]{\dyn(\bs_{#1+1} \vbar \bs_{#1} , \ba_{#1} )}
\newcommand{\goaltransitiont}[1]{\dyn(\bg \vbar \bs_{#1} , \ba_{#1} )}
\newcommand{\policyt}[1]{\pi(\ba_{#1} \vbar \bs_{#1})}
\newcommand{\likelihoodt}[1]{
    \dyn(\bg \vbar \bs_{#1}, \ba_{#1})
}
\newcommand{\policytdot}[1]{
    \pi( \cdot \vbar \bs_{#1})
}
\newcommand{\policypriort}[1]{
    p( \ba_{#1} \vbar \bs_{#1})
}
\newcommand{\policypriortdot}[1]{
    p( \cdot \vbar \bs_{#1})
}
\newcommand{\qtrajnosodot}{
    q_{\Trajnoso_{0:T} | \bS_{0}}( \cdot \vbar \bs_{0})
}
\newcommand{\qtrajnosotvar}{
    q_{\Trajnoso_{0:T} | T, \bS_{0}}( \trajnoso_{0:t} \vbar t, \bs_{0})
}
\newcommand{\qtrajnosotdot}{
    q_{\Trajnoso_{0:T} | T, \bS_{0}}( \cdot \vbar t, \bs_{0})
}
\newcommand{\ptrajnosotvar}{
    p_{\Trajnoso_{0:T} | T, \bS_{0}}( \trajnoso_{0:t} \vbar t, \bs_{0})
}
\newcommand{\ptrajnosotdot}{
    p_{\Trajnoso_{0:T} | T, \bS_{0}}( \cdot \vbar t, \bs_{0})
}
\newcommand{\qtvar}{
    q_{T}( t )
}
\newcommand{\qtdot}{
    q_{T}
}
\newcommand{\ptvar}{
    p_{T}( t )
}
\newcommand{\ptdot}{
    p_{T}
}
\newcommand{\qdeltatdot}[1]{
    q_{\Delta_{#1}}
}
\newcommand{\qdeltatzero}[1]{
    q_{\Delta_{#1}}( \Delta_{#1} = 0 )
}
\newcommand{\qdeltatone}[1]{
    q_{\Delta_{#1}}( \Delta_{#1} = 1 )
}
\newcommand{\pdeltatdot}[1]{
    p_{\Delta_{#1}}
}
\newcommand{\pdeltatzero}[1]{
    p_{\Delta_{#1}}( \Delta_{#1} = 0 )
}
\newcommand{\pdeltatone}[1]{
    p_{\Delta_{#1}}( \Delta_{#1} = 1 )
}
\newcommand{\odac}{\textsc{odac}\xspace}
\newcommand{\sigmoid}{\sigma}
\newcommand{\Trajnoso}{\tilde{\bTau}}
\newcommand{\trajnoso}{\tilde{\btau}}
\newcommand{\pq}{\phi}
\newcommand{\pqtarget}{{\bar{\pq}}}
\newcommand{\pdyn}{\psi}
\newcommand{\ppi}{\theta}
\newcommand{\bTau}{\bm{\mathcal{T}}}
\newcommand{\Qpgc}{Q}
\newcommand\Bstrut{\rule[-0.9ex]{0pt}{0pt}}   %
\newcommand\defines{\,\,\dot{=}\,\,}
\title{
    Outcome-Driven Reinforcement Learning via Variational Inference
}
\author{%
    \hspace*{8pt}Tim G. J. Rudner\thanks{Equal contribution. $^\dagger$\,Corresponding authors: \href{mailto:tim.rudner@cs.ox.ac.uk}{\texttt{tim.rudner@cs.ox.ac.uk}} and \href{mailto:tim.rudner@cs.ox.ac.uk}{\texttt{vitchyr@berkeley.edu}}.}~~$^\dagger$ \\
    University of Oxford \\
    \And
    \hspace*{4pt}Vitchyr H. Pong$^\ast$$^\dagger$ \\
    University of California, Berkeley \\
    \AND
    Rowan McAllister \\
    University of California, Berkeley \\
    \And
    Yarin Gal \\
    University of Oxford \\
    \And
    Sergey Levine \\
    University of California, Berkeley \\
}
\begin{document}

\maketitle

\begin{abstract}
While reinforcement learning algorithms provide automated acquisition of optimal policies, practical application of such methods requires a number of design decisions, such as manually designing reward functions that not only define the task, but also provide sufficient shaping to accomplish it.
In this paper, we view reinforcement learning as inferring policies that achieve desired outcomes, rather than as a problem of maximizing rewards.
To solve this inference problem, we establish a novel variational inference formulation that allows us to derive a well-shaped reward function which can be learned directly from environment interactions.
From the corresponding variational objective, we also derive a new probabilistic Bellman backup operator and use it to develop an off-policy algorithm to solve goal-directed tasks.
We empirically demonstrate that this method eliminates the need to hand-craft reward functions for a suite of diverse manipulation and locomotion tasks and leads to effective goal-directed behaviors.
\end{abstract}

\section{Introduction}
\label{sec:introduction}

Reinforcement learning (RL) provides an appealing formalism for automated learning of behavioral skills, but requires considerable care and manual design to use in practice.
One particularly delicate decision is the design of the reward function, which has a significant impact on the resulting policy but is largely heuristic in practice, often lacks theoretical grounding, can make effective learning difficult, and may lead to reward mis-specification.

To avoid these shortcomings, we propose to circumvent the process of manually specifying a reward function altogether:
Instead of framing the reinforcement learning problem as finding a policy that maximizes a heuristically-defined reward function, we express it probabilistically, as inferring a state--action trajectory distribution conditioned on a desired future outcome.
By building off of prior work on probabilistic perspectives on RL~\citep{Fellows2019Virel,kappen2012pgm,rawlik2013soc,toussaint2009soc,toussaint2006probabilistic,ziebart2008maxent} and goal-directed RL in particular~\citep{attias2003planning,choi2021variational,hoffman2009expectation,toussaint06probabilisticinference}, we derive a tractable variational objective, an temporal-difference algorithm that provides a shaping-like effect for effective learning, as well as a reward function that captures the semantics of the underlying decision problem and facilitates effective learning.

We demonstrate that unlike prior works that proposed inference methods for finding policies that achieve desired outcomes~\citep{attias2003planning,fu2018vice,hoffman2009expectation,toussaint06probabilisticinference}, 
the resulting algorithm, Outcome-Driven Actor--Critic (\odac), is amenable to off-policy learning and applicable to complex, high-dimensional continuous control tasks over finite and infinite horizons.
The resulting variational algorithm can be interpreted as an automatic shaping method, where each iteration learns a reward function that automatically provides dense rewards, as we visualize in \Cref{fig:heatmaps}.
In tabular settings, \odac is guaranteed to converge to an optimal policy, and in non-tabular settings with linear Gaussian transition dynamics, the derived optimization objective is convex in the policy, facilitating easier learning.
In high-dimensional and non-linear domains, our method can be combined with deep neural network function approximators to yield a deep reinforcement learning method that does not require manual specification of rewards, and leads to good performance on a range of benchmark tasks.

\paragraph{Contributions.}
The core contributions of this paper are the probabilistic formulation of a general framework for inferring policies that lead to desired outcomes and the derivation of a variational objective from which we obtain a novel outcome-driven Bellman backup operator.
We show that this Bellman backup operator induces a shaping-like effect which ensures a clear and dense learning signal even in the early stages of training.
Crucially, unlike heuristic approaches for incorporating shaping often used in standard RL, this ``shaping'' emerges automatically from variational inference.
We demonstrate that the resulting variational objective is a lower bound on the log-marginal likelihood of achieving the outcome given an initial state and that it leads to an off-policy temporal-difference algorithm.
We evaluate this algorithm---Outcome-Driven Variational Inference (\odac)---on a range of reinforcement learning tasks without having to manually specify task-specific reward functions.
In our experiments, we find that our method results in significantly faster learning across a variety of robot manipulation and locomotion tasks than alternative approaches.

\begin{figure}[t!]
\vspace*{-2pt}
~\,
\centering
\begin{subfigure}[t]{0.2\columnwidth}
    \includegraphics[trim={0 -35pt 0 0},clip, width=\columnwidth]{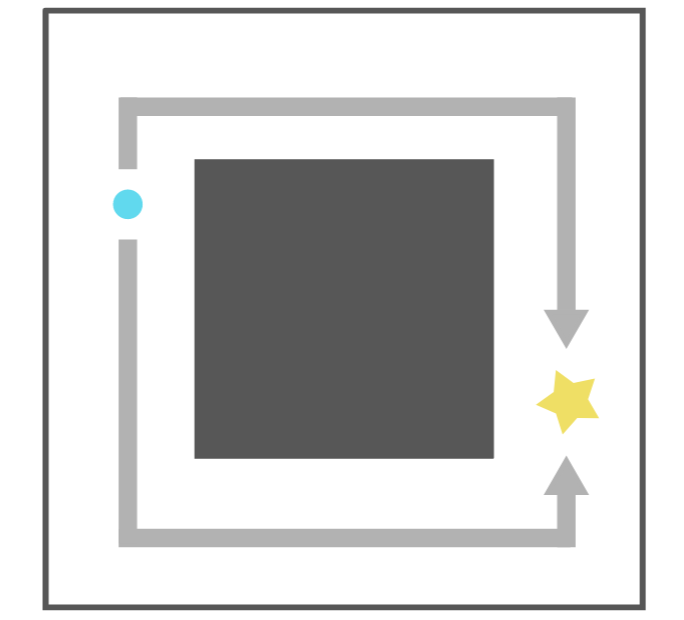}
\end{subfigure}
\,
\begin{subfigure}[t]{0.22\columnwidth}
    \includegraphics[width=\columnwidth]{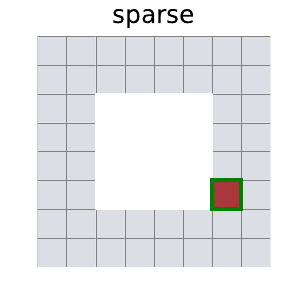}
\end{subfigure}
\begin{subfigure}[t]{0.22\columnwidth}
    \includegraphics[width=\columnwidth]{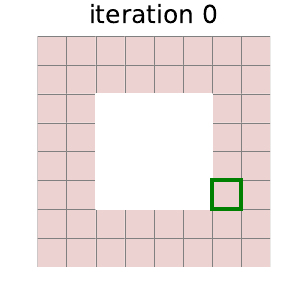}
\end{subfigure}
\begin{subfigure}[t]{0.22\columnwidth}
    \includegraphics[width=\columnwidth]{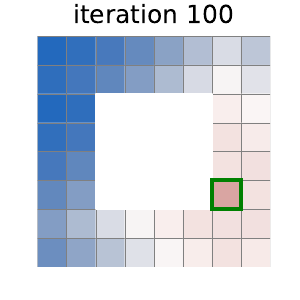}
\end{subfigure}
\vspace*{-8pt}
\caption{
    Illustration of the shaping effect of the reward function derived from the goal-directed variational inference objective.
    \textbf{Left:} A 2-dimensional grid world with a desired outcome marked by a star.
    \textbf{Center-left} The corresponding sparse reward function provides little shaping.
    \textbf{Center-right:} The reward function derived from our variational inference formulation at initialization.
    \textbf{Right:} The derived reward function after training.
    We see that the derived reward learns to provide a dense reward signal everywhere in the state space.
    }
\label{fig:heatmaps}
\vspace{-12pt}
\end{figure}

\section{Preliminaries}
\label{sec:preliminaries}

Standard reinforcement learning (RL) addresses reward maximization in a Markov decision process (MDP) defined by the tuple $( \mathcal{S}, \mathcal{A}, p_{\bS_{0}}, \dyn, r, \gamma )$~\citep{sutton1998rl,szepesvari2010}, where $\mathcal{S}$ and $\calA$ denote the state and action space, respectively, $p_{0}$ denotes the initial state distribution, $\dyn$ is a state transition distribution, $r$ is an immediate reward function, and $\gamma$ is a discount factor.
To sample trajectories, an initial state is sampled according to $p_{\bS_{0}}$, and successive states are sampled from the state transition distribution \mbox{$\bS_{t+1} \sim \dyn(\cdot \vbar \bs_{t}, \ba_{t})$} and actions from a policy \mbox{$\bA_t \sim \pi(\cdot \vbar \bs_t)$}.
We will write \mbox{$\bTau_{0:t} = \{ \bS_0, \bA_0, \bS_1, \dots, \bS_t, \bA_t \}$} to represent a finite-horizon and \mbox{$\bTau_{0} \defines \{\bA_{t}, \bS_{t+1} \}_{t=0}^\infty$} to represent an infinite-horizon stochastic state--action trajectory, and write \mbox{$\btau_{0:t} = \{ \bs_0, \ba_0, \bs_1, \dots, \bs_t, \ba_t \}$} and \mbox{$\btau_{0} \defines \{\ba_{t}, \bs_{t+1} \}_{t=0}^\infty$} for the respective trajectory realizations.
Given a reward function $r:\calS \times \calA \to \mathbb{R}$ and discount factor $\gamma \in (0,1)$, the objective in reinforcement learning is to find a policy $\pi$ that maximizes the returns, defined as
$
    \E_{p_\pi} \left[
    \sum_{t=0}^\infty \gamma^t r(\bs_t, \ba_t)
    \right],
$
where $p_{\pi}$ denotes the distribution of states induced by a policy $\pi$.

\paragraph{Goal-Conditioned Reinforcement Learning.}
In goal-conditioned reinforcement learning~\citep{kaelbling1993goals}, which can be considered a special case of the broader class of stochastic boundary value problems~\citep{aly74boundaryvalue,goebel90boundaryvalue}, the objective is for an agent to reach some pre-specified goal state, $\bg \in \calS$, so that the policy and reward function introduced above become dependent on the goal and are expressed as $\pi(\ba \vbar \bs, \bg)$ and $r(\bs, \ba, \bg)$, respectively.
Typically, such a reward function needs to be defined manually, with a common choice being to use a sparse indicator reward \mbox{$r(\bs, \bg) = \mathbb{I} \{ \bs = \bg \}$}.
However, this approach presents a number of challenges both in theory and in practice.
From a theoretical perspective, the indicator reward will almost surely equal zero for environments with continuous goal spaces and non-trivial stochastic dynamics.
From a practical perspective, such sparse rewards can be slow to learn from, as most transitions provide no reward supervision, while manually designing dense reward functions that provide a better learning signal is time-consuming and often based on heuristics.
In~\Cref{sec:theory}, we will present a framework that addresses these practical and theoretical considerations by casting goal-conditioned RL as probabilistic inference.

\paragraph{$\bm{Q}$-Learning.}
Off-policy $Q$-learning algorithms~\citep{watkins1992q} allow training  policies from data collected under alternate decision rules by estimating the expected return $Q^\pi$ for a given state--action pair:
\begin{align*}
    Q^\pi(\bs, \ba)
    &\defines
    \E_{p_\pi} \Big[
    \sum\nolimits_{t=0}^\infty \gamma^t r(\bs_t, \ba_t)
    \Big| \bS_0 = \bs, \bA_0 = \ba
    \Big].
\end{align*}
Crucially, the expected return given a state--action pair can be expressed recursively as
\begin{align}
\label{eq:recursive-q-def}
    Q^\pi(\bs, \ba)
    &=
    r(\bs, \ba)
    \hspace*{-1pt}+\hspace*{-1pt} 
    \gamma
    \E_{p_\pi} [
    Q^\pi(\bs_1, \ba_1)
    \vbar \bS_0 = \bs, \bA_0 = \ba
    ],
\end{align}
which makes it possible to estimate the expectation on the right-hand side from single-step transitions.
The resulting estimates can then be used to find a policy that results in actions which maximize the expected return $Q^\pi(\bs, \ba)$ for all available state--action pairs.

\section{Outcome-Driven Reinforcement Learning}
\label{sec:theory}

In this section, we derive a variational inference objective to infer an approximate posterior policy for achieving desired outcomes.
Instead of using the heuristic goal-reaching rewards discussed in~\Cref{sec:preliminaries}, we will derive a general framework for \emph{inferring actions that lead to desired outcomes} by formulating a probabilistic objective, using the tools of variational inference.
As we will show in the following sections, we use this formulation to translate the problem of inferring a policy that leads to a desired outcome into a tractable variational optimization problem, which we show corresponds to an RL problem with a well-shaped, dense reward signal from which the agent can learn more easily.

We start with a warm-up problem that demonstrates how to frame the task of achieving a desired outcome as an inference problem in a simplified setting.
We then describe how to extend this approach to more general settings.
Finally, we show that the resulting variational objective can be expressed as a recurrence relation, which allows us to derive an outcome-driven variational Bellman operator and prove an outcome-driven probabilistic policy iteration theorem.

\begin{wrapfigure}{r}{0.57\linewidth}
\centering
    \vspace*{-12pt}
    \begin{tikzpicture}
    \small
    \tikzstyle{unobserved}=[draw, circle, minimum size=1.125cm]
    \tikzstyle{observed}=[draw, circle, fill=lightgray, fill opacity=0.2, text opacity=1, minimum size=1.125cm]
    \tikzstyle{arrow}=[-latex, thick]

    \node[observed] (s0) at (-3.0, 0) {$\bS_{0}$};
    \node[unobserved] (a0) at (-3.0, -1.5) {$\bA_{0}$};

    \node[unobserved] (s1) at (-1.5, 0) {$\bS_1$};
    \node[unobserved] (a1) at (-1.5, -1.5) {$\bA_1$};

    \node[unobserved] (s2) at (0, 0) {$\dots$};
    \node[unobserved] (a2) at (0, -1.5) {$\dots$};

    \node[unobserved] (s3) at (1.5, 0) {$\bS_{t^\star\hspace*{-2pt}-1}$};
    \node[unobserved] (a3) at (1.5, -1.5) {$\bA_{t^\star\hspace*{-2pt}-1}$};

    \node[observed] (s4) at (3, 0) {$\bS_{t^\star}$};

    \path[arrow] (s0) edge (s1);
    \path[arrow] (s0) edge (a0);
    \path[arrow] (a0) edge (s1);

    \path[arrow] (s1) edge (s2);
    \path[arrow] (s1) edge (a1);
    \path[arrow] (a1) edge (s2);

    \path[arrow] (s2) edge (s3);
    \path[arrow] (s2) edge (a2);
    \path[arrow] (a2) edge (s3);

    \path[arrow] (s2) edge (s3);
    \path[arrow] (s2) edge (a2);
    \path[arrow] (a2) edge (s3);

    \path[arrow] (s3) edge (s4);
    \path[arrow] (s3) edge (a3);
    \path[arrow] (a3) edge (s4);

    \end{tikzpicture}
    \caption{
    A Probabilistic graphical model of a state--action trajectory with observed random variables $\bS_{0}\hspace*{-1pt}=\hspace*{-1pt}\bs_{0}$ and $\bS_{t^\star}\hspace*{-3pt}=\hspace*{-1pt}\bg$\hspace*{-5pt}.
    \newline\newline
    }
    \label{fig:mdp-time-known}
    \vspace*{-30pt}
\end{wrapfigure}

\subsection{Warm-up: Achieving a Desired Outcome at a Fixed Time Step}
\label{sec:gc_probabilistic_framework_finite}

We first consider a simplified problem, where the desired outcome is to reach a specific state $\bg \in \calS$ at a specific time step $t^\star$ when starting from initial state $\bs_{0}$.
We can think of the starting state $\bs_{0}$ and the desired outcome $\bg$ as boundary conditions, and the goal is to learn a stochastic policy that induces a trajectory from $\bs_{0}$ to $\bg$.
To derive a control law that solves this stochastic boundary value problem, we frame the problem probabilistically, as inferring a state--action trajectory posterior distribution \emph{conditioned on the desired outcome and the initial state}.
We will show that, by framing the learning problem this way, we obtain an algorithm for learning outcome-driven policies without needing to manually specify a reward function.
We consider a model of the state--action trajectory up to and including the desired outcome $\bg$,
\begin{align}
    p_{\Trajnoso_{0:t}, \bS_{t^\star}}(\trajnoso_{0:t}, \bg \vbar \bs_{0}) \defines
    \goaltransitiont{t} \policypriort{t} 
    \nonumber
    \prod_{t'=0}^{t-1} \transitiont{t'} \policypriort{t'},
\end{align}
where $t^\star \defines t + 1$, $\Trajnoso_{0:t}$ is the state--action trajectory up to an including $t$ but excluding $\bS_{0}$, $\policypriort{t}$ is a conditional action prior, and $\transitiont{t}$ is the environment's state transition distribution.
If the dynamics are simple (e.g., tabular or Gaussian), the posterior over actions can be computed in closed form~\citep{attias2003planning}, but we would like to be able to infer outcome-driven posterior policies in any environments, including those where exact inference may be intractable.
To do so, we start by expressing posterior inference as the variational minimization problem
\begin{align}
\label{eq:kl_trajectories_fixed_t}
    \min_{q_{\Trajnoso_{0:t} | \bS_{0}} \in \hat{\mathcal{Q}}} \DKL{q_{\Trajnoso_{0:t} | \bS_{0}}( \cdot \vbar \bs_{0})}{p_{\Trajnoso_{0:t} | \bS_{0}, \bS_{t^\star}}(\cdot \vbar \bs_{0}, \bg )} ,
\end{align}
where $\mathbb{D}_{\textrm{KL}}(\cdot \,\|\, \cdot)$ is the \kld, and $\hat{\calQ}$ denotes the variational family over which to optimize.
We consider a family of distributions parameterized by a policy $\pi$ and defined by
\begin{align}
\SwapAboveDisplaySkip
\label{eq:warm-up-vi-family}
    q_{\Trajnoso_{0:t} | \bS_{0}}(\trajnoso_{0:t} \vbar \bs_{0})
    \hspace*{-2pt}\defines\hspace*{-2pt} \policyt{t} \hspace*{-2pt}\prod_{t'=0}^{t - 1}\hspace*{-2pt} \transitiont{t'} \policyt{t'} ,
\end{align}
where $\pi \in \Pi$, a family of policy distributions, and where $\prod_{t=0}^{t-1} \transitiont{t}$ is the true action-conditional state transition distribution up to but excluding the state transition at \mbox{$t^\star-1$},
since \mbox{$\bS_{t^\star} = \bg$} is observed.
Under this variational family, the inference problem in~\Cref{eq:kl_trajectories_fixed_t} can be equivalently stated as the problem of maximizing the following objective with respect to the policy $\pi$:
\begin{proposition}%
\label{prop:maxent_objective_fixed_time}
Given $q_{\Trajnoso_{0:t} | \bS_{0}}(\trajnoso_{0:t} \vbar \bs_{0})$ from~\Cref{eq:warm-up-vi-family}, any state $\bs_{0}$, termination time $t^\star$, and outcome $\bg$,
solving~\Cref{eq:kl_trajectories_fixed_t} is equivalent to maximizing this objective with respect to $\pi \in \Pi$:
\begin{align}
\begin{split}
\label{eq:maxent_objective_fixed_time}
    \bar{\calF}(\pi, \bs_{0}, \bg)
    \defines
    \mathbb{E}_{q_{\Trajnoso_{0:t} | \bS_{0}}(\trajnoso_{0:t} \vbar \bs_{0})} \bigg[
    \log \goaltransitiont{t} 
    - \sum_{t'=0}^{t-1}
    \mathbb{D}_{\emph{\textrm{KL}}}(\policytdot{t'} \,||\, \policypriortdot{t'}) \bigg].
\end{split}
\end{align}
\end{proposition}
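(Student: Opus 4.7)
The plan is to show the equivalence by direct manipulation of the KL divergence, peeling off terms that do not depend on the policy and cancelling factors shared by the target posterior and the variational family. First I would apply Bayes' rule to rewrite the conditional posterior as
\begin{equation*}
    p_{\trajnoso_{0:t}}(\trajnoso_{0:t} \vbar \bs_{0}, \bS_{t^\star}=\bg)
    =
    \frac{p_{\btau_{0:t},\bS_{t+1}}(\trajnoso_{0:t}, \bg \vbar \bs_{0})}{p(\bg \vbar \bs_{0})},
\end{equation*}
so that
\begin{equation*}
    \DKL{\qtrajnosodot}{p(\cdot \vbar \bs_{0},\bg)}
    = \E_{q}\!\left[\log \frac{q(\trajnoso_{0:t}\vbar \bs_{0})}{p(\trajnoso_{0:t},\bg \vbar \bs_{0})}\right] + \log p(\bg \vbar \bs_{0}).
\end{equation*}
The marginal $\log p(\bg \vbar \bs_{0})$ is a constant with respect to $\pi \in \Pi$, so the variational problem reduces to minimizing the first term.

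Next I would substitute the explicit factorizations. The joint model factors as
\begin{equation*}
    p(\trajnoso_{0:t},\bg \vbar \bs_{0}) = \goaltransitiont{t}\,\policypriort{t}\prod_{t'=0}^{t-1}\transitiont{t'}\,\policypriort{t'},
\end{equation*}
while $q$ has the identical dynamics factors but with $\pi$ in place of the action prior and without the terminal transition to $\bg$. Taking the log-ratio, every state transition term $\dyn(\bs_{t'+1}\vbar \bs_{t'},\ba_{t'})$ appears in both numerator and denominator and cancels, leaving only the likelihood term $-\log \goaltransitiont{t}$ and per-step log-ratios $\log[\pi(\ba_{t'}\vbar \bs_{t'})/p(\ba_{t'}\vbar \bs_{t'})]$.

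I would then push the expectation through the sum and apply the tower property: conditioning on $\bs_{t'}$ and integrating over $\ba_{t'}\sim\pi(\cdot\vbar \bs_{t'})$ turns each log-ratio into a per-state KL divergence
\begin{equation*}
    \E_{q(\bs_{t'}\vbar \bs_{0})}\!\big[D_{\mathrm{KL}}\!\left(\policytdot{t'}\,\|\,\policypriortdot{t'}\right)\big],
\end{equation*}
because the dynamics-induced marginal over $\bs_{t'}$ under $q$ is the same as under the full model. Putting everything together yields
\begin{equation*}
    \DKL{\qtrajnosodot}{p(\cdot\vbar \bs_{0},\bg)} = -\bar{\calF}(\pi,\bs_{0},\bg) + \log p(\bg\vbar \bs_{0}),
\end{equation*}
from which the claimed equivalence follows immediately because $\log p(\bg\vbar \bs_{0})$ does not depend on $\pi$.

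The main obstacle I anticipate is purely notational: being careful about which indices are included in $\trajnoso_{0:t}$ (actions $\ba_{0},\dots,\ba_{t}$ and states $\bs_{1},\dots,\bs_{t}$, with $\bs_{0}$ and $\bs_{t^\star}=\bg$ treated as boundary data) so that the cancellation of dynamics terms is exact and the remaining policy-prior log-ratios are counted correctly. Once the bookkeeping is fixed, the argument is a mechanical rearrangement of a KL divergence into a constant plus an expectation whose negation is precisely $\bar{\calF}$.
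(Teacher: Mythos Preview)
Your proposal is correct and follows essentially the same route as the paper's proof: both apply Bayes' rule to split off the constant $\log p(\bg\vbar \bs_0)$, substitute the factorizations of $q$ and $p$, cancel the shared dynamics terms, and identify the remaining per-step action log-ratios as KL divergences under the trajectory expectation. The only point worth flagging is the one you already anticipated---the index bookkeeping---since the policy/prior log-ratio appears at every $t'=0,\dots,t$ (including the final action $\ba_t$), whereas the displayed statement sums only to $t-1$; the paper's own appendix derivation in fact carries the sum to $t$.
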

\vspace{-10pt}
\begin{proof}
    See~\Cref{appsec:posterior_trajectory_inference}.
\end{proof}
\vspace{-5pt}
A variational problem of this form---which corresponds to finding a posterior distribution over state--action trajectories---can equivalently be viewed as a reinforcement learning problem:
\begin{corollary}
\label{cor:maxent_finite_time}
The objective in~\Cref{eq:maxent_objective_fixed_time} corresponds to KL-regularized reinforcement learning with a time-varying reward function given by
$
    r(\bs_{t'}, \ba_{t'}, \bg, t')
    \defines
    \mathbb{I}\{ t' = t \} \log \goaltransitiont{t'} .
$
\end{corollary}
\Cref{cor:maxent_finite_time} illustrates how a reward function \emph{emerges automatically} from a probabilistic framing of outcome-driven reinforcement learning problems where the sole specification is which variable ($\bS_{t^\star}$) should attain which value ($\bg$).
In particular,~\Cref{cor:maxent_finite_time} suggests that we ought to learn the environment's state-transition distribution, and view the log-likelihood of achieving the desired outcome given a state--action pair as a ``reward'' that can be used for off-policy learning as described in~\Cref{sec:preliminaries}.
Importantly---and unlike in model-based RL---such a transition model would not have to be accurate beyond single-step predictions, as it would not be used for planning (see~\Cref{appsec:additional-exps}).
Instead, \mbox{$\log \goaltransitiont{t}$} only needs to be well shaped, which we expect to happen for commonly used model classes.
For example, when the dynamics are linear-Gaussian, using a conditional Gaussian model~\citep{nagaband2018mbmf} yields a reward function that is quadratic in $\bS_{t+1}$, making the objective convex and thus more amenable to optimization.

\subsection{Outcome-Driven Reinforcement Learning as Variational Inference}
\label{sec:gc_approximate_inference}

Thus far, we assumed that the time at which the outcome is achieved is given. In many problem settings, we do not know (or care) when an outcome is achieved.
In this section, we present a variational inference perspective on achieving desired outcomes in settings where \emph{no reward function} and \emph{no termination time} are given, but only a desired outcome is provided.
As in the previous section, we derive a variational objective and show that a principled algorithm and reward function emerge automatically when framing the problem as variational inference.

To derive such an objective, we modify the probabilistic model used in the previous section to model that the time at which the outcome is achieved is not known.
As before, we define an ``outcome'' as a point in the state space, but instead of defining the event of ``achieving a desired outcome'' as a realization $\bS_{t^\star} = \bg$ for a known $t^\star$, we define it as a realization $\bS_{T^\star} = \bg$ for an \emph{unknown} termination time $T^\star$ at which the outcome is achieved.
Specifically, we model the distribution over the trajectory and the unknown termination time with
\begin{align}
\SwapAboveDisplaySkip
    p_{\Trajnoso_{0:T}, \bS_{T^\star}, T | \bS_{0}}(\trajnoso_{0:t}, \bg, t \vbar \bs_0)
    =
    p_{T}(t) \goaltransitiont{t} \policypriort{t}
    \prod_{t'=0}^{t-1} \transitiont{t'} \policypriort{t'},
    \label{eq:trajectory_mixture}
\end{align}
where $p_{T}(t)$ is the probability of reaching the outcome at $t+1$.
Since the trajectory length is itself a random variable, the joint distribution in~\Cref{eq:trajectory_mixture} is a \textit{transdimensional} distribution defined on \mbox{$\biguplus_{t=0}^\infty \{ t \} \times \calS^{t+1} \times \calA^{t+1}$}~\citep{hoffman2009expectation}.

Unlike in the warm-up, the problem of finding an outcome-driven policy that eventually achieves the desired outcome corresponds to finding the posterior distribution over state--action trajectories \emph{and} the termination time $T$ conditioned on the desired outcome $\bS_{T^\star}$ and a starting state.
Analogously to~\Cref{sec:gc_probabilistic_framework_finite}, we can express this inference problem variationally as
\begin{align}
\label{eq:gc_kl_minimization}
    \min_{q_{\Trajnoso_{0:T}, T | \bS_{0}} \in \calQ} \DKL{
    q_{\Trajnoso_{0:T}, T | \bS_{0}}( \cdot \vbar \bs_{0})
    }{
    p_{\Trajnoso_{0:T}, T | \bS_{0}, \bS_{T^{\star}}}( \cdot \vbar \bs_{0}, \bg)
    } ,
\end{align}
where $t$ denotes the time immediately \emph{before} the outcome is achieved, $\calQ$ denotes the variational family.
In general, solving this variational problem in closed form is challenging, but by choosing a variational family
\mbox{$q_{\Trajnoso_{0:T}, T | \bS_{0}}(\trajnoso_{0:t}, t \vbar \bs_0) = q_{\Trajnoso_{0:T} | T, \bS_{0}}(\trajnoso_{0:t} \vbar t, \bs_0) q_{T}(t)$}, where $q_{T}$ is a distribution over $T$ in some variational family $\calQ_{T}$
parameterized by
\begin{align}
\SwapAboveDisplaySkip
\label{eq:qT-delta-form}
    \qtvar = \qdeltatone{t+1}  \prod_{t'=1}^{t} \qdeltatzero{t'},
\end{align}
with Bernoulli random variables $\Delta_{t}$ denoting the event of ``reaching $\bg$ at time $t$ given that $\bg$ has not yet been reached by time \mbox{$t-1$},''
we can equivalently express the variational problem in~\Cref{eq:gc_kl_minimization} in a way that is tractable and amenable to off-policy optimization:
\begin{theorem}%
\label{thm:gc_variational_problem}
    Let $q_{T}(t)$ and \mbox{$q_{\Trajnoso_{0:T} | T, \bS_{0}}(\trajnoso_{0:t} \vbar t, \bs_{0})$} be as defined before,
    and define
    \begin{align}
    \label{eq:odac_variational_objective}
        V_{\text{{}}}^\pi(\bs_{t}, \bg; q_{T})
        &
        \defines
        \E_{\policyt{t}} \hspace{-1pt} \left[ Q_{\text{{}}}^{\pi}(\bs_{t}, \ba_{t}, \bg ; q_{T}) \right]
        - \mathbb{D}_{\emph{\textrm{KL}}}(\policytdot{t} \,\|\,\policypriortdot{t})
        \\
        Q_{\text{{}}}^{\pi}(\bs_t, \ba_t, \bg; q_{T})
        &
        \defines
        \rewardt{t}
        +
        \qdeltatzero{t+1}
        \E_{\dyn(\bs_{t+1} \vbar \bs_t, \ba_t)} \hspace{-1pt}\left[ V^\pi(\bs_{t+1}, \bg ; \pi, q_{T})
        \right]
        \\
        \label{eq:derived-reward}
        \rewardt{t}
        &
        \defines
        \qdeltatone{t+1} \log \goaltransitiont{t}
        - \mathbb{D}_{\emph{\textrm{KL}}}(\qdeltatdot{t+1} \,\|\, \pdeltatdot{t+1}).
    \end{align}
    Then given any initial state $\bs_{0}$ and outcome $\bg$,
    \begin{align*}
        \mathbb{D}_{\emph{\textrm{KL}}}( q_{\Trajnoso_{0:T}, T | \bS_{0}}( \cdot \vbar \bs_{0})
        \,\|\,
        p_{\Trajnoso_{0:T}, T | \bS_{0}, \bS_{T^\ast}}( \cdot \vbar \bs_{0}, \bg)
        )
        =
        -V^\pi(\bs_0, \bg; q_{T}) + \log p(\bg \vbar \bs_{0}),
    \end{align*}
    where $ \log p(\bg \vbar \bs_{0})$ is independent of $\pi$ and $q_{T}$
    and hence maximizing $V^\pi(\bs_{0}, \bg ; \pi, q_{T})$ is equivalent to
    minimizing~\Cref{eq:gc_kl_minimization}.
\end{theorem}
\vspace{-10pt}
\begin{proof}
    See~\Cref{appsec:variational_approximation}.
\end{proof}
\vspace{-5pt}
This theorem tells us that the maximizer of $V^\pi(\bs_{t}, \bg; q_{T})$ is equal to the minimizer of~\Cref{eq:gc_kl_minimization}.
In other words,~\Cref{thm:gc_variational_problem} presents a variational objective with dense reward functions defined solely in terms of the desired outcome and the environment dynamics, which we can learn directly from environment interactions.
It further makes precise that the variational objective, \mbox{$V_{\text{{}}}^\pi(\bs_{0}, \bg; q_{T})$}, is a lower bound on the log-marginal likelihood, that is, $\log p(\bg \vbar \bs_{0}) \geq V_{\text{{}}}^\pi(\bs_{0}, \bg; q_{T})$, where
\begin{align*}
    V_{\text{{}}}^\pi(\bs_{0}, \bg; q_{T})
    =
    \E \Bigg[ \sum_{{t} = 0}^\infty \left( \prod_{t'=1}^{t} \qdeltatzero{t'} \right)\hspace*{-3pt}\Big(\rewardt{t} - \DKL{\policytdot{t}}{ \policypriortdot{t}} \Big) \Bigg],
\end{align*}
with the expectation taken with respect to the infinite-horizon trajectory distribution $q_{\Trajnoso_{0} | \bS_{0}}(\trajnoso_{0} \vbar \bs_{0})$.
Thanks to the recursive expression of the variational objective, we can find the optimal variational over $T$ as a function of the current policy and $Q$-function analytically:
\begin{proposition}%
\label{prop:gc_optimal_variational_distribution_T}
    The optimal distribution $q_{T}^\star$ with respect to~\Cref{eq:odac_variational_objective} is %
    \begin{align}
    \begin{split}
    \label{eq:optimal-qt}
        q_{\Delta_{t+1}}^\star(\Delta_{t+1} = 0; \pi, Q^\pi)
        =
        \sigmoid\left(\hspace*{-1pt} \Lambda(\bs_{t}, \pi, q_{T}, Q^\pi) + \sigmoid^{-1}\hspace{-1pt}\left(\pdeltatzero{t+1} \right)
        \hspace*{-1pt}\right)\hspace*{-2pt},
    \end{split}
    \end{align}
    where
    \begin{align*}
    \Lambda(\bs_{t}, \pi, q_{T}, Q^\pi)
    \defines
    \E_{\policyt{t+1} \transitiont{t} \policyt{t}}[ Q_{\text{{}}}^{\pi}(\bs_{t+1}, \ba_{t+1}, \bg; q_{T}) - \log \goaltransitiont{t}]
    \end{align*}
    and $\sigmoid(\cdot)$ is the sigmoid function, that is, \mbox{$\sigmoid(x) = \frac{1}{e^{-x}+1}$} and \mbox{$\sigmoid^{-1}(x) = \log \frac{x}{1-x}$}.
\end{proposition}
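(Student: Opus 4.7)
The plan is to exploit the recursive form of the variational objective established in \Cref{thm:gc_variational_problem} to isolate the dependence of $V^\pi(\bs_{0}, \bg; q_{T})$ on a single Bernoulli factor $q_{\Delta_{t+1}}(\cdot \vbar \bs_{t})$, reduce the optimization to a pointwise scalar problem for each $(t, \bs_{t})$, and then apply the standard KL-regularized optimization identity to obtain a Boltzmann-form solution that we subsequently rewrite as a sigmoid. First, I would unroll the recursion in \Cref{eq:odac_variational_objective} and observe that $Q^\pi(\bs_{t}, \ba_{t}, \bg; q_{T})$ depends on $q \defines q_{\Delta_{t+1}}(\Delta_{t+1}=0 \vbar \bs_{t})$ only in the affine-plus-entropy combination
\begin{align*}
    (1-q) \log \goaltransitiont{t}
    - D_{\text{KL}}(q_{\Delta_{t+1}} \,\|\, p_{\Delta_{t+1}})
    + q\,\E_{\dyn}\!\left[V^\pi(\bs_{t+1}, \bg ; q_{T})\right],
\end{align*}
and that, when propagated up through the outer expectations defining $V^\pi(\bs_{0}, \bg; q_{T})$, the coefficients multiplying this local expression are nonnegative trajectory probabilities that are independent of $q$. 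Hence the global maximization of $V^\pi(\bs_{0}, \bg; q_{T})$ over $q_{T} \in \calQ_{T}$ decouples into an independent scalar problem for each $(t, \bs_{t})$.

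Second, I would identify this pointwise problem. Grouping terms, the local objective at $\bs_{t}$ can be put in the canonical form \mbox{$\E_{q_{\Delta_{t+1}}}[g(\Delta_{t+1})] - D_{\text{KL}}(q_{\Delta_{t+1}}\,\|\,p_{\Delta_{t+1}})$}, with $g(1) = \E_{\policyt{t}}[\log \goaltransitiont{t}]$ and $g(0) = \E[Q^{\pi}(\bs_{t+1}, \ba_{t+1}, \bg; q_{T})]$, where the expectation defining $g(0)$ is over $\policyt{t}\,\transitiont{t}\,\policyt{t+1}$. The standard KL-regularized variational identity gives the Boltzmann-type solution
\begin{align*}
    q^\star_{\Delta_{t+1}}(\Delta_{t+1}=\delta)
    =
    \frac{p_{\Delta_{t+1}}(\Delta_{t+1}=\delta)\, e^{g(\delta)}}{\sum_{\delta' \in \{0,1\}} p_{\Delta_{t+1}}(\Delta_{t+1}=\delta')\, e^{g(\delta')}},
\end{align*}
which for the $\delta = 0$ event specializes to \mbox{$q^\star_{\Delta_{t+1}}(\Delta_{t+1}=0) = \left(1 + \tfrac{p_{\Delta_{t+1}}(\Delta_{t+1}=1)}{p_{\Delta_{t+1}}(\Delta_{t+1}=0)} e^{g(1)-g(0)}\right)^{-1}$}. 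Using the identities \mbox{$\sigma^{-1}(x) = \log \frac{x}{1-x}$} and \mbox{$\sigma(x) = (1+e^{-x})^{-1}$}, and noting that \mbox{$g(0) - g(1) = \Lambda(\bs_{t}, \pi, q_{T}, Q^\pi)$} by the definition in the proposition, this rearranges to the stated formula \eqref{eq:optimal-qt}.

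The main obstacle is the first step: justifying that the joint optimization over the family $\calQ_{T}$ actually decouples across $(t, \bs_{t})$ so that pointwise scalar maximization yields the global maximum. This requires verifying that (i) $V^\pi(\bs_{0}, \bg; q_{T})$ is affine in each factor $q_{\Delta_{t+1}}(\cdot \vbar \bs_{t})$ with coefficients that do not themselves depend on that factor (this follows from the explicit unrolling of the recursion), and (ii) the regularizer $-D_{\text{KL}}(q_{\Delta_{t+1}} \,\|\, p_{\Delta_{t+1}})$ is strictly concave in the scalar parameter $q$, so the local first-order condition yields the unique local—and therefore global—maximum. Once this decoupling is in place, the remaining steps are a textbook application of the Gibbs variational principle followed by rewriting the resulting softmax over two outcomes as a sigmoid, which is mostly bookkeeping.
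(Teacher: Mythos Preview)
Your proposal is correct and follows essentially the same route as the paper's proof: both isolate the dependence of the recursive objective on a single Bernoulli parameter $q_{\Delta_{t+1}}(\Delta_{t+1}=0)$, invoke strict concavity of the negative KL term to justify that the pointwise first-order condition gives the global optimum, and then solve. The only stylistic difference is that the paper differentiates the local objective directly and solves the resulting equation, whereas you recognize the canonical form $\E_{q}[g] - D_{\text{KL}}(q\,\|\,p)$ and apply the Gibbs variational identity to read off the Boltzmann solution before rewriting it as a sigmoid; these are equivalent computations, and your decoupling argument is in fact slightly more explicit than the paper's.
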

\vspace{-10pt}
\begin{proof}
See~\Cref{appsec:alternative-posterior}
\end{proof}
\vspace{-5pt}

Alternatively, if instead of learning $q_{T}$ variationally, we fix $q_{T}$ to the prior $p_{T}$, we recover the more conventional fixed-discount factor objective~\citep{galashov2019information,haarnoja2018sac,rudner2020pathologies}:
\begin{corollary}
\label{cor:gc_variational_problem_naive}
    Let \mbox{$q_{T} = p_{T}$}, assume that $p_{T}$ is a Geometric distribution with parameter $\gamma \in (0,1)$.
    Then the inference problem in~\Cref{eq:gc_kl_minimization} of finding a goal-directed variational trajectory distribution simplifies to maximizing the following recursively defined variational objective with respect to $\pi$:
    \begin{align}
    \begin{split}
        \bar{V}^\pi_{\text{{}}}(\bs_{0} , \bg ; \gamma)
        &\defines
        \E_{\policyt{0}} \left[ Q_{\text{{}}}(\bs_{0}, \ba_{0}, \bg ; \gamma ) \right] - \mathbb{D}_{\emph{\textrm{KL}}}(\policytdot{0} \,\|\, \policypriortdot{0})),
    \end{split}
    \end{align}
    where
    \begin{align}
    \begin{split}
        &\bar{Q}^\pi_{\text{{}}}(\bs_{0}, \ba_{0}, \bg ; \gamma) \defines\,
        (1 - \gamma) \log \goaltransitiont{0} + \gamma \E_{\dyn(\bs_{1} | \bs_{0}, \ba_{0})} \big[ V_{\text{{}}}(\bs_{1}, \bg ; \gamma) \big].
    \end{split}
    \end{align}
\end{corollary}

In the next section, we derive a temporal-difference algorithm and discuss how we can learn the $Q$-function in~\Cref{thm:gc_variational_problem} using off-policy transitions.

\section{Outcome-Driven Reinforcement Learning}
\label{sec:gc_algorithms}

In this section, we show that the variational objective in~\Cref{thm:gc_variational_problem} is amenable to off-policy learning and that it can be estimated efficiently from single-step transitions.
We then describe how to instantiate the resulting outcome-driven algorithm in large environments where function approximation is necessary.

\subsection{Outcome-Driven Policy Iteration}
\label{sec:gc_policy_iteration}

To develop an outcome-directed off-policy algorithm, we define the following Bellman operator:
\begin{definition}%
\label{def:gc_bellman_operator}
    Given a function $Q_{\text{{}}}: \calS \times \calA \times \calS \to \R$, define the operator $\calT^{\pi}$ as
    \begin{align}
    \begin{split}
    \label{eq:variational-q-t-bellman-update}
        \calT^{\pi} Q_{\text{{}}}(\bs_{t}, \ba_{t}, \bg ; q_{T})
        \defines
        \rewardt{t}
        + \qdeltatzero{t+1} \E_{\transitiont{t}} \big[ V_{\text{{}}}(\bs_{t+1}, \bg; q_{T}) \big],
    \end{split}
    \end{align}
    where $\rewardt{t}$ is from~\Cref{thm:gc_variational_problem} and
    \begin{align}
    \begin{split}
        V_{\text{{}}}(\bs_{t}, \bg; q_{T})
        \defines \E_{\policyt{t}} \left[ Q_{\text{{}}}(\bs_{t}, \ba_{t}, \bg ; q_{T} ) \right]
        + \mathbb{D}_{\emph{\textrm{KL}}}(\policytdot{t} \,\|\, \policypriortdot{t}).
    \end{split}
    \end{align}
\end{definition}
Unlike the standard Bellman operator, the above operator has a varying weight factor \mbox{$\qdeltatzero{t+1}$}, with the optimal weight factor given by~\Cref{eq:optimal-qt}.
From~\Cref{eq:optimal-qt}, we see that as the 
outcome likelihood $\dyn(\bg \vbar \bs, \ba)$ 
becomes large relative to the $Q$-function, the weight factor automatically adjusts the target to rely more on the rewards.

Below, we show that repeatedly applying the operator $\calT^\pi$ (policy evaluation) and optimizing $\pi$ with respect to $Q^{\pi}$ (policy improvement) converges to a policy that maximizes the objective in~\Cref{thm:gc_variational_problem}.
\begin{theorem}%
    Assume MDP is ergodic and $|\calA| <\infty$.
    \begin{enumerate}[leftmargin=15pt]
    \item Outcome-Driven Policy Evaluation (ODPE): Given policy $\pi$ and a function $Q^{0}: \calS \times \calA \times \calS \rightarrow \R$, define $Q^{i+1}_{\text{{}}} = \calT^{\pi} Q^{i}_{\text{{}}}$. Then the sequence $Q^{i}_{\text{{}}}$ converges to the lower bound in~\Cref{thm:gc_variational_problem}.
    \item Outcome-Driven Policy Improvement (ODPI): The policy
    \begin{align}
    \begin{split}
        \pi^+
        =
        \argmax_{\pi' \in \Pi} \{ \E_{\pi'(\ba_{t} \vbar \bs_{t})} \left[ Q^{\pi}(\bs_{t}, \ba_{t}, \bg; q_{T}) \right]
        - \mathbb{D}_{\emph{\textrm{KL}}}(\pi'( \cdot \vbar \bs_{t}) \,||\, \policypriortdot{t} \}
    \end{split}
    \label{eq:odpi}
    \end{align}
    and the variational distribution over $T$ defined in \Cref{eq:optimal-qt}
    improve the variational objective, that is, $\calF(\pi^+, q_T, \bs_0) \geq \calF(\pi, q_T, \bs_0)$ and $\calF(\pi, q_T^+, \bs_0) \geq \calF(\pi, q_T, \bs_0)$ for all $\bs_0, \pi, q_T$.%
    \item Alternating between ODPE and ODPI converges to a policy $\pi^\star$ and a variational distribution over $T$, $q_{T}$, such that $\Qpgc^{\pi^\star}(\bs, \ba, \bg; q_{T}^\star) \geq \Qpgc^{\pi}(\bs, \ba, \bg; q_{T})$ for all $(\pi, q_{T}) \in \Pi \times \calQ_{T}$ and any \mbox{$(\bs, \ba) \in \calS \times \calA$}.
    \end{enumerate}
\end{theorem}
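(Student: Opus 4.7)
The plan is to follow the template of soft policy iteration, adapted to two novel features of the outcome-driven operator: first, the ``effective discount'' is the state--action-dependent factor $\qdeltatzero{t+1}$ rather than a constant $\gamma$, and second, each improvement step must jointly update both the policy $\pi$ and the time variational distribution $q_T$. The three claims of the theorem correspond to (i) showing $\calT^\pi$ is a contraction so that policy evaluation converges, (ii) showing that each joint improvement step does not decrease the variational objective $\calF$, and (iii) combining these into a monotone convergence argument.

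For ODPE, I would fix $(\pi, q_T)$ and treat $\calT^\pi$ as a deterministic operator acting on bounded functions $Q : \calS \times \calA \times \calS \to \R$. Define $\gamma \defines \sup_{t, \bs, \ba} \qdeltatzero{t+1}$, which satisfies $\gamma < 1$ because ergodicity together with a nondegenerate prior $\pdeltatdot{t+1}$ and the sigmoid form of \Cref{prop:gc_optimal_variational_distribution_T} prevent the weight from saturating at $1$. A direct calculation---using that the KL regularizer on the policy cancels in the difference of two $V$'s constructed from the same $\pi$---gives
\begin{align*}
    \|\calT^\pi Q_1 - \calT^\pi Q_2\|_\infty \leq \gamma \, \|Q_1 - Q_2\|_\infty.
\end{align*}
The Banach fixed-point theorem then yields $Q^i \to Q^\pi$, and by matching the recursion one verifies that this fixed point coincides with the $Q^\pi$ appearing in \Cref{thm:gc_variational_problem}, whose initial-state $V^\pi$ equals the variational lower bound $\calF$.

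For ODPI, the $\pi^+$ update in \Cref{eq:odpi} is a KL-regularized maximization whose solution is the Boltzmann-like policy proportional to $p(\ba \vbar \bs_t)\exp Q^\pi(\bs_t, \ba, \bg; q_T)$. By construction $\pi^+$ attains a pointwise KL-regularized value at least as large as $\pi$ itself, which gives $\calT^{\pi^+} Q^\pi \geq Q^\pi$ pointwise; iterating $\calT^{\pi^+}$ and exploiting its monotonicity then yields $Q^{\pi^+} \geq Q^\pi$, hence $\calF(\pi^+, q_T, \bs_0, \bg) \geq \calF(\pi, q_T, \bs_0, \bg)$. For the $q_T$ update, \Cref{prop:gc_optimal_variational_distribution_T} already gives the unconstrained maximizer $q_T^+$ of $\calF(\pi, \cdot, \bs_0, \bg)$ in closed form, so substituting it cannot decrease the bound.

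For the alternating scheme, let $(\pi_k, q_T^k)$ be the iterates. The two monotone-improvement results combine to give $\calF(\pi_{k+1}, q_T^{k+1}, \bs_0, \bg) \geq \calF(\pi_k, q_T^k, \bs_0, \bg)$, while boundedness of one-step rewards together with the finite action space ensure $\calF$ is uniformly bounded above on $\Pi \times \calQ_T$. The sequence therefore converges, and any limit point $(\pi^\star, q_T^\star)$ must be a joint fixed point of the soft policy-improvement map and the $q_T$ update of \Cref{prop:gc_optimal_variational_distribution_T}; these fixed-point conditions characterize the global maximizer of $\calF$ over $\Pi \times \calQ_T$, so $Q^{\pi^\star}(\bs, \ba, \bg; q_T^\star) \geq Q^\pi(\bs, \ba, \bg; q_T)$ for all competitors. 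The main obstacle I anticipate is establishing a \emph{uniform} contraction bound $\gamma < 1$ in ODPE: because the optimal $q_T$ couples to $Q^\pi$ via \Cref{prop:gc_optimal_variational_distribution_T}, the argument must either freeze $q_T$ throughout each policy evaluation sweep (as \Cref{def:gc_bellman_operator} does) or control its drift carefully, so that the operator remains a strict contraction on each sweep.
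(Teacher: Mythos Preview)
Your overall architecture matches the paper's: both follow the soft policy iteration template of \citet{haarnoja2018sac}, with ODPE establishing contraction of $\calT^\pi$, ODPI showing the greedy KL-regularized update does not decrease $Q$, and Part 3 obtained by monotone convergence plus boundedness. Your ODPI and alternating-scheme arguments are essentially the same as the paper's.

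The one substantive divergence is in ODPE, and it is exactly the obstacle you flag at the end. You attempt to extract a \emph{uniform} contraction constant $\gamma \defines \sup_{t,\bs,\ba}\qdeltatzero{t+1} < 1$ and then invoke Banach directly. The pointwise strict inequality $\qdeltatzero{t+1}<1$ from the sigmoid form does \emph{not} by itself give $\sup < 1$: the argument of the sigmoid in \Cref{prop:gc_optimal_variational_distribution_T} involves $Q^\pi$, which need not be bounded a priori, so the supremum can be $1$. The paper sidesteps this by not asking for a uniform modulus at all. Instead it rewrites the backup with an entropy-augmented reward and appeals to the convergence theory for Bellman operators with \emph{transition-dependent} discount factors due to \citet{white2017unifying}, which requires only (i) ergodicity of the induced Markov chain and (ii) the existence of \emph{some} state at which the discount is strictly below $1$. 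Condition (i) is assumed in the theorem, and (ii) follows because the sigmoid is strictly less than $1$ everywhere. So the resolution to your anticipated obstacle is not to tighten the sup bound but to replace the Banach argument with the state-dependent-discount result; once you make that substitution your proof coincides with the paper's.
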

\vspace{-10pt}
\begin{proof}
    See~\Cref{appsec:proof_policy_iteration_theorem}.
\end{proof}
\vspace{-5pt}
This result tells us that alternating between applying the outcome-driven Bellman operator in~\Cref{def:gc_bellman_operator} and optimizing the bound in~\Cref{thm:gc_variational_problem} using the resulting $Q$-function, which can equivalently be viewed as expectation maximization, will lead to a policy that induces an outcome-driven trajectory and solves the inference problem in~\Cref{eq:gc_kl_minimization}.
As we discuss in~\Cref{appsec:proof_policy_iteration_theorem}, this implies that Variational Outcome-Driven Policy Iteration is theoretically at least as good as or better than standard policy iteration for KL-regularized objectives.

\subsection{Outcome-Driven Actor--Critic (ODAC)}
\label{sec:gc_actor_critic}

We now build on previous sections to develop a practical algorithm that handles large and continuous domains.
In such domains, the expectation in the Bellman operator in~\Cref{def:gc_bellman_operator} is intractable, and so we approximate the policy $\pi_\ppi$ and $Q$-function $Q_\pq$ with neural networks parameterized by parameters $\ppi$ and $\pq$, respectively.
We train the $Q$-function to minimize
\begin{align}
\begin{split}
    \calF_Q(\pq)
    &=
    \E
    \bigg[
    \Big(
        Q_\pq (\bs, \ba, \bg) -
        (
            r(\bs, \ba, \bg; q_{\Delta})
            \label{eq:q-loss}
            + \qdeltatzero{t} \,            \hat{V}(\bs', \bg)
        )
    \Big)^2
    \bigg],
\end{split}
\end{align}
where the expectation is taken with respect to \mbox{$(\bs, \ba, \bg, \bs')$} sampled from a replay buffer, $\calD$, of data collected by a policy. 
We approximate the $\hat{V}$-function using a target $Q$-function $Q_\pqtarget$:
\mbox{$
    \hat{V}(\bs', \bg)
    \approx
    Q_{\pqtarget}(\bs', \ba', \bg) - \log \pi(\ba' \vbar \bs' ; \bg)
$},
where $ \ba' $ are samples from the amortized variational policy $\pi_\ppi(\cdot \vbar \bs'; \bg)$.
We further assume a uniform prior policy $\policypriortdot{t}$ in all of our experiments.
The parameters $\bar{\phi}$ slowly track the parameters of $\pq$ at each time step via the standard update \mbox{$\pqtarget \leftarrow \tau \pqtarget + (1-\tau) \pq$}~\citep{lillicrap2015continuous}.
We then train the policy to maximize the approximate $Q$-function
by performing gradient descent on
\begin{align}
\label{eq:pi-loss}
    \calF_\pi(\ppi) = - \E_{\bs \sim \calD, \ba \sim \pi_\ppi(\cdot \vbar \bs; \bg)} \left[
        Q_\pq(\bs, \ba, \bg) - \log \pi_{\ppi}(\ba \vbar \bs; \bg)
    \right].
\end{align}

We estimate \mbox{$\hat{q}_{\Delta_{t+1}}(\Delta_{t+1} = 0)$} with a Monte Carlo estimate of~\Cref{eq:optimal-qt} obtained via a single Monte Carlo sample \mbox{$(\bs, \ba, \bs', \ba', \bg)$} from the replay buffer.
In practice, a value of \mbox{$\qdeltatzero{t+1}=1$} can lead to numerical instabilities with bootstrapping, and so we also upper bound the estimated \mbox{$\qdeltatzero{t+1}$} by the prior distribution \mbox{$\pdeltatzero{t+1}$}.

To compute the rewards, we need to compute the likelihood of achieving the desired outcome.
If the transition dynamics are unknown, we learn a dynamics model from environment interactions by training a neural network $p_\pdyn$ that parameterizes the mean and scale of a factorized Laplace distribution.
We train this model by maximizing the log-likelihood of the data collected by the policy,
\begin{align}
\label{eq:dyn-loss}
    \calF_p(\pdyn) = \E_{(\bs, \ba, \bs') \sim \calD}[ \log p_\pdyn(\bs' \vbar \bs, \ba)],
\end{align}
and use it to compute the rewards
\begin{align}
\begin{split}
\label{eq:approx-reward}
        \hat{r}(\bs_{t}, \ba_{t}, \bg; q_{\Delta})
        \defines
        \hat{q}_{\Delta_{t+1}}(\Delta_{t+1} = 1) \log p_\pdyn(\bg \vbar \bs_{t}, \ba_{t})
        - \DKL{\qdeltatdot{t}}{\pdeltatdot{t}}.
\end{split}
\end{align}

The complete algorithm is presented in~\Cref{alg:odac-alg} and consists of alternating between collecting data via policy $\pi$ and minimizing Equations \ref{eq:q-loss}, \ref{eq:pi-loss}, and \ref{eq:dyn-loss} via gradient descent.
This algorithm alternates between approximating the lower bound in~\Cref{eq:odac_variational_objective} by repeatedly applying the outcome-driven Bellman operator to an approximate $Q$-function, and maximizing this lower bound by performing approximate policy optimization on~\Cref{eq:pi-loss}.

\begin{figure}[t!]
\vspace{-8pt}
    \begin{algorithm}[H]
      	\caption{\odac: Outcome-Driven Actor--Critic}
      	\label{alg:odac-alg}
      	\begin{algorithmic}[1]
      	 \STATE Initialize policy $\pi_\theta$, replay buffer $\mathcal{R}$, $Q$-function $Q_\phi$, and dynamics model $p_\psi$.
      	\FOR{iteration $i=1, 2, ...$}
      	    \STATE Collect on-policy samples to add to $\mathcal{R}$ by sampling $\bg$ from environment and executing $\pi$.
            \STATE Sample batch $(\bs, \ba, \bs', \bg)$ from $\mathcal R$.
            \STATE Compute approximate reward and optimal weights with~\Cref{eq:approx-reward} and~\Cref{eq:optimal-qt}.
            \STATE Update $Q_\phi$ with~\Cref{eq:q-loss}, $\pi_\theta$ with~\Cref{eq:pi-loss}, and $p_\psi$ with~\Cref{eq:dyn-loss}.
      	\ENDFOR
      	\end{algorithmic}
    \end{algorithm}
\vspace{-20pt}
\end{figure}

\section{Related Work}

\label{sec:related_work}
Our problem definition is related to goal-conditioned RL, where desired outcomes are often defined in terms of an exact goal-equality indicator~\citep{kaelbling1993goals,schaul2015uva}. Unlike in our work, goal-conditioned RL typically requires specifying a reward function reflecting the desired outcome.
The most natural choice for a reward function in this setting is an exact goal--equality indicator, which gives non-zero reward whenever an outcome is reached.
However, this type of reward function makes learning difficult, since the associated reward signal is sparse at best and impossible to attain at worst (in continuous state spaces).

To overcome this limitation, prior work has proposed heuristics for creating dense reward functions, such as the Euclidean distance or a thresholded distance to a goal~\citep{andrychowicz2017her,levy2017learning,nachum2018hiro,Plappert2018,pong2018tdm,schroecker2020universal}, or estimating auxiliary metrics to encourage learning, such as the mutual information or time between states and goal~\citep{eysenbach2019search,hartikainen2019dynamical,pong2019skewfit,venkattaramanujam2019self,wardefarley2019discern}.
In contrast, a dense, generally-applicable reward function results automatically from our variational objective.
In~\Cref{sec:experiments}, we demonstrate that this reward function is substantially easier to optimize than sparse rewards, and that it removes the need to choose arbitrary thresholds or distance metrics needed in alternative approaches.

Several prior works cast RL and control as probabilistic inference~\citep{Fellows2019Virel,fu2018vice,hoffman2009expectation,levine2018tutorial,rawlik2013soc,singh2019end,todorov2006lmdp,ziebart2008maxent} or \kld minimization~\citep{karny1996towards,peters2010reps}, but with the aim of reformulating standard reward-based RL, assuming that the reward function is given.
In other words, these prior works similarly study how optimizing $r_\text{manual}(s, a) + \mathcal{H}(\pi)$ corresponds to solving a probabilistic inference problem where $r_\text{manual}$ is used to define the outcome's log-likelihood function.
However, these prior methods assume that $r_\text{manual}$ is provided ex ante.
In contrast, our work removes the need to manually specify a task-specific reward or likelihood function, and instead derives both an objective for learning an environment-specific likelihood function and a learning algorithm from the same inference problem.

Past work has also studied control as probabilistic inference in the context of
reaching a goal or desired outcome~\citep{attias2003planning,fu2018vice, hoffman2009expectation,toussaint2006probabilistic}.
\citet{toussaint06probabilisticinference} and~\citet{hoffman2009expectation} focus on exact inference methods that require time-varying tabular or time-varying Gaussian value functions.
In contrast, we propose a variational inference method that eliminates the need to train a time-varying value function, and enables us to use expressive neural networks to represent an approximate value function, making our method applicable to high-dimensional, continuous, and non-linear domains.
Unlike the approach in~\citet{attias2003planning}, our formulation is not constrained to fixed-horizon settings, obtains a closed-loop rather than open-loop policy, and is applicable to non-tabular dynamic models.
More recently,~\citet{fu2018vice} proposed a probabilistic inference method for solving the unknown time-step formulation, but required on-policy trajectory samples.
In contrast, we derive an off-policy method by introducing a variational distribution $q_T$ over the time when the outcome is reached.

Lastly, the closely related problem of finding control laws that allow agents to move from an initial state to some desired goal state while incurring minimal cost has been studied in the stochastic control literature~\citep{chen2018optimal,grigoriadis1997minimum,hotz1987covariance,xu1992}.
\citet{rawlik2010approximate} consider a continuous-time setting and propose an expectation maximization algorithm that assumes linear Gaussian dynamics, while more recent work has explored finding control laws for non-linear stochastic system dynamics~\citep{ridderhof2019nonlinear,yi2019nonlinear}.
In contrast to this strand of research, our framework considers a discrete-time setting and does not make assumptions or assumes knowledge of the system dynamics but only requires the ability to interact with the environment to learn an outcome-driven policy.

\begin{figure*}[t!]
\centering
    \subfloat[Box 2D]{
    \label{fig:2d-env-picture}%
      \includegraphics[height=2cm]{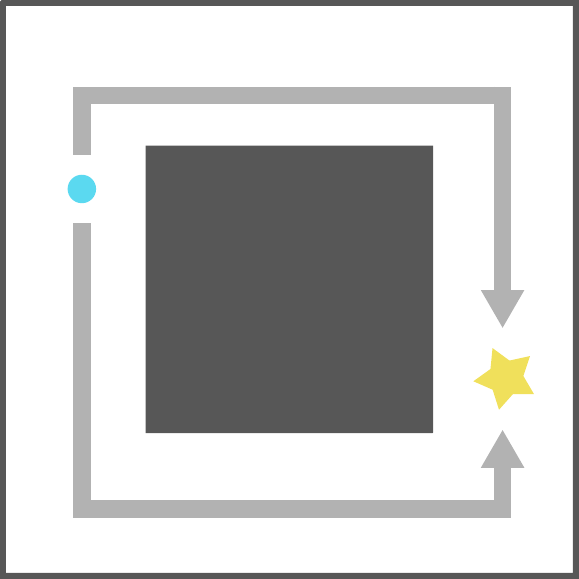}}%
    \hfill
    \subfloat[Ant]{
    \label{fig:sawyer-push-picture}%
      \includegraphics[height=2cm]{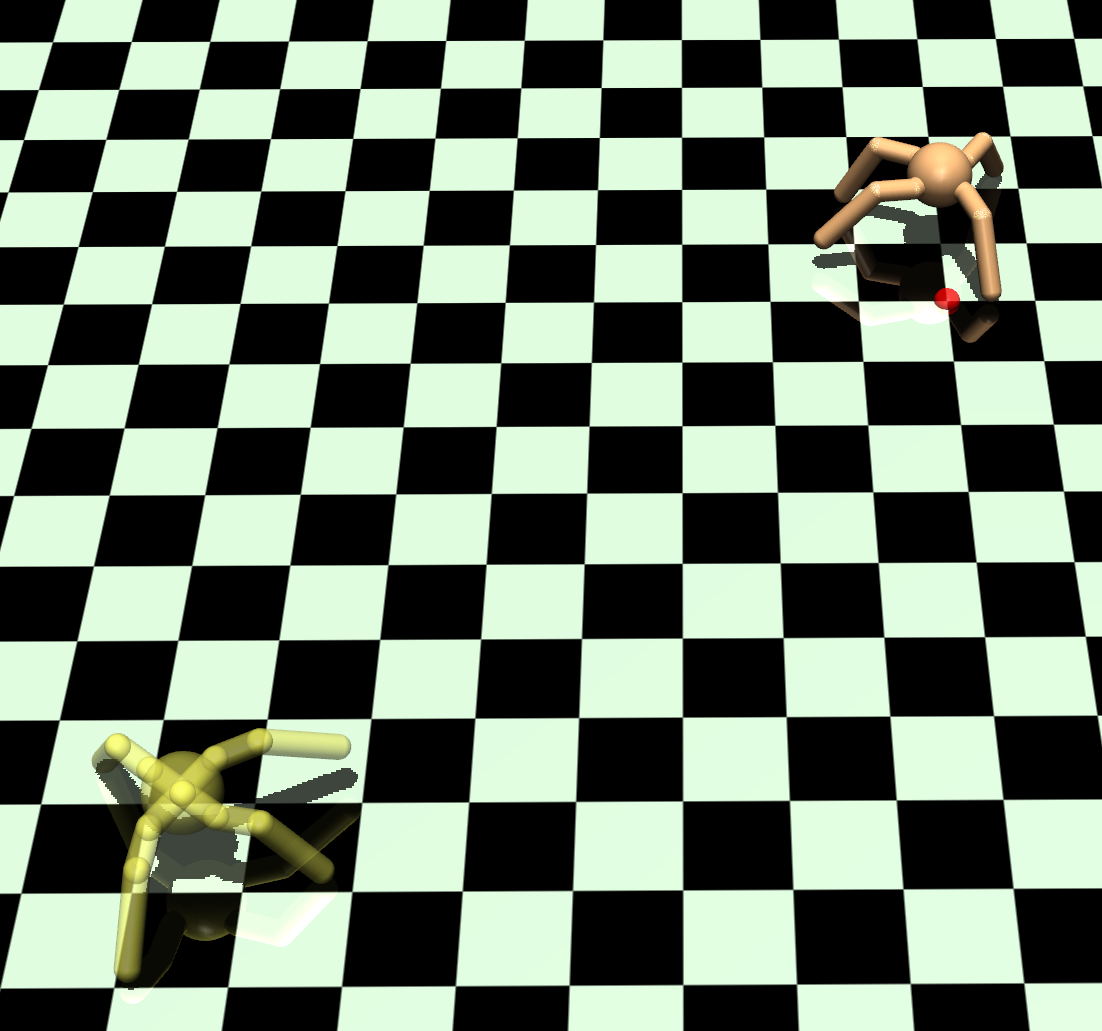}}%
    \hfill
    \subfloat[Sawyer Push]{
    \label{fig:fetch-push-picture}%
      \includegraphics[height=2cm]{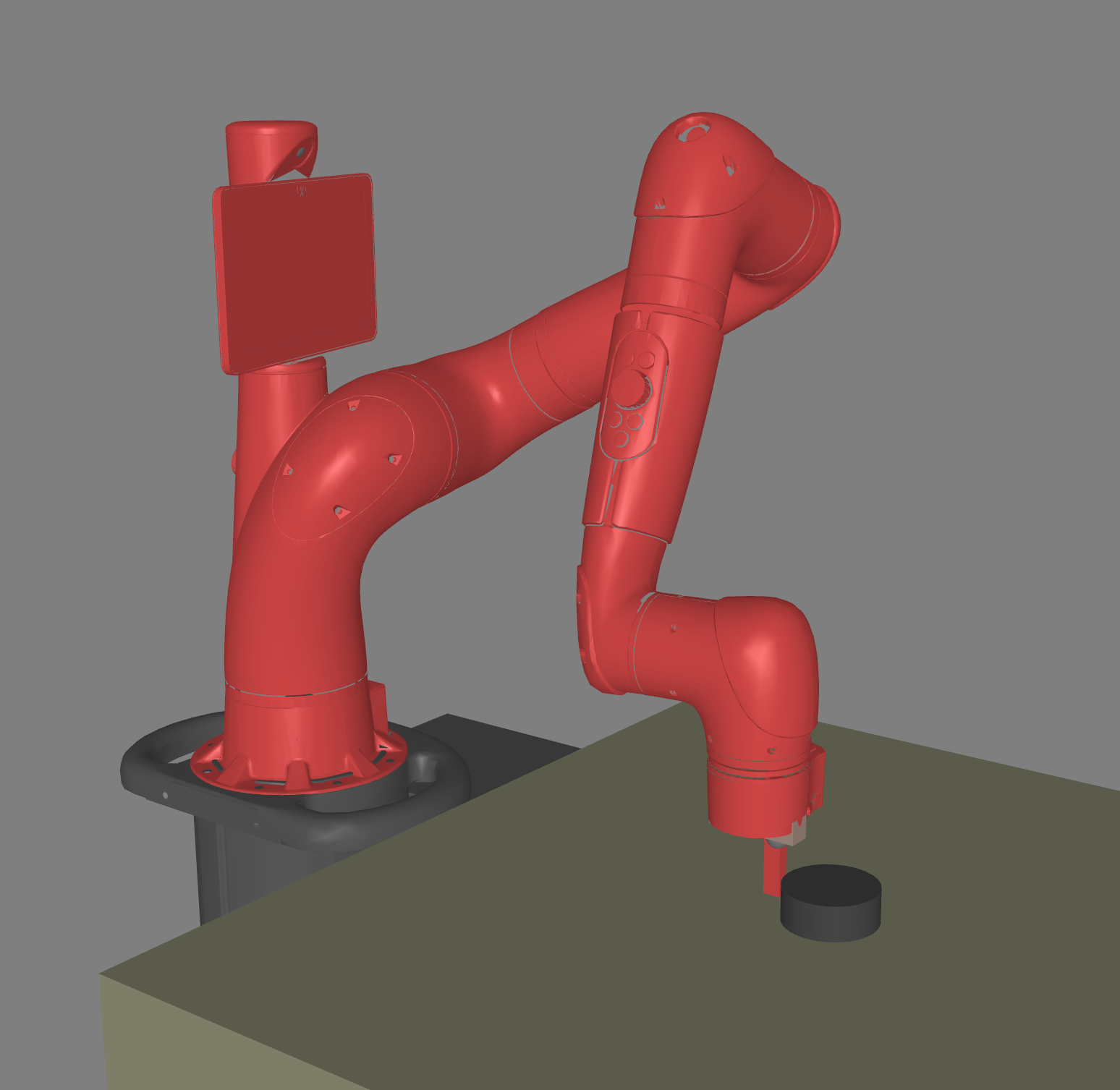}}%
    \hfill
    \subfloat[Fetch Push]{
    \label{fig:illustrative_two_moons_small_var}%
      \includegraphics[height=2cm]{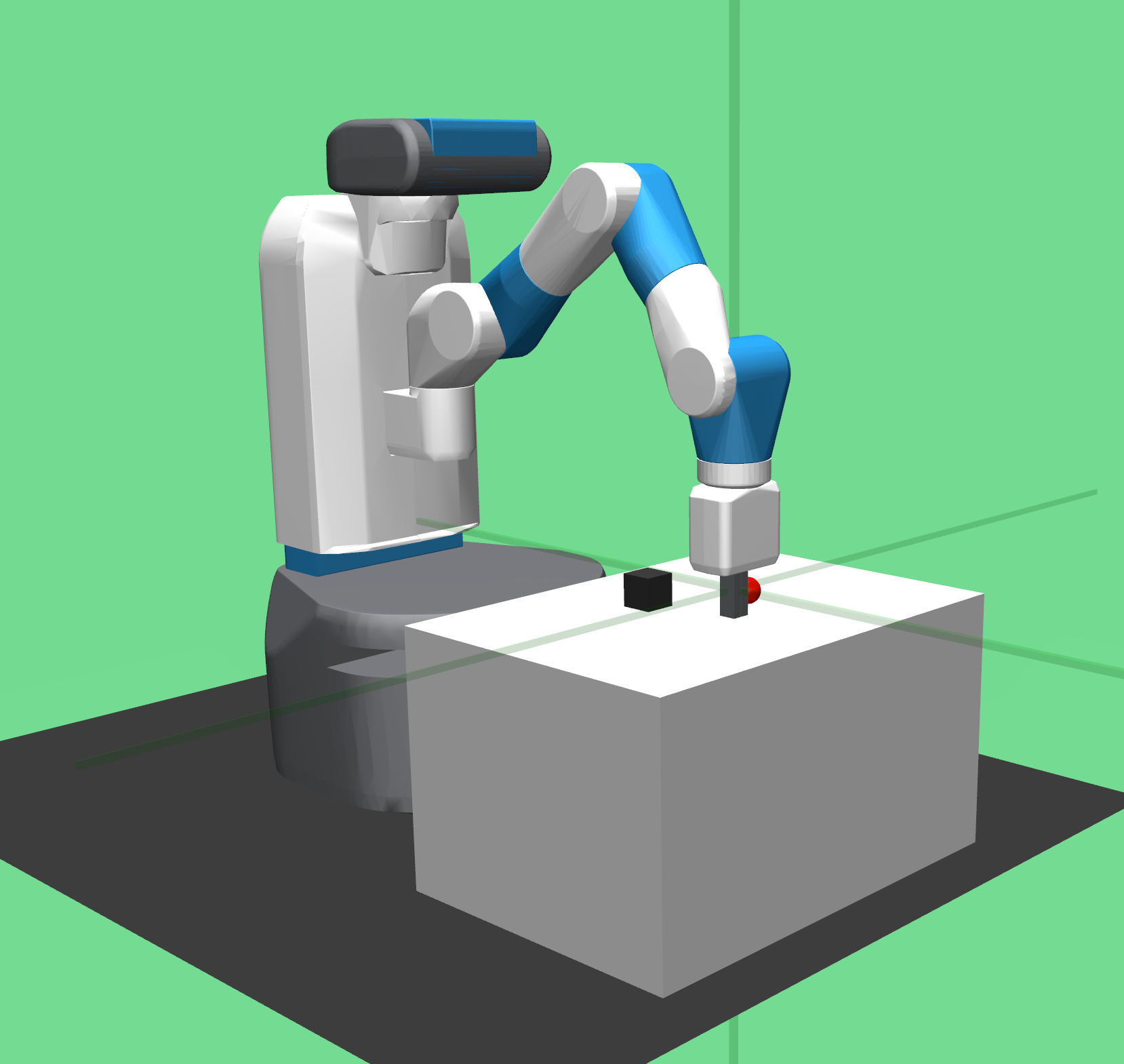}}%
    \hfill
    \subfloat[Faucet]{
    \label{fig:sawyer-faucet-picture}%
      \includegraphics[height=2cm]{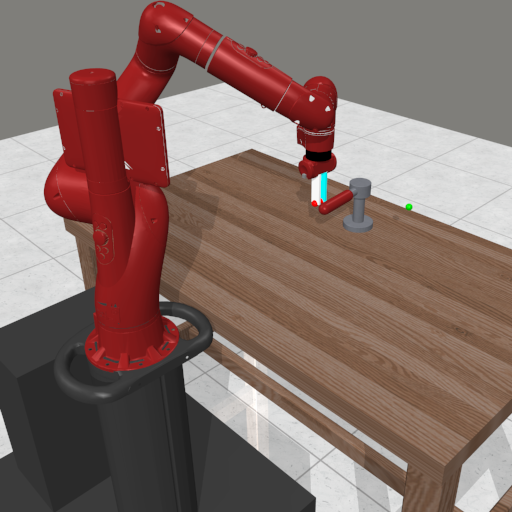}}%
    \hfill
    \subfloat[Window]{
    \label{fig:sawyer-window-picture}%
      \includegraphics[height=2cm]{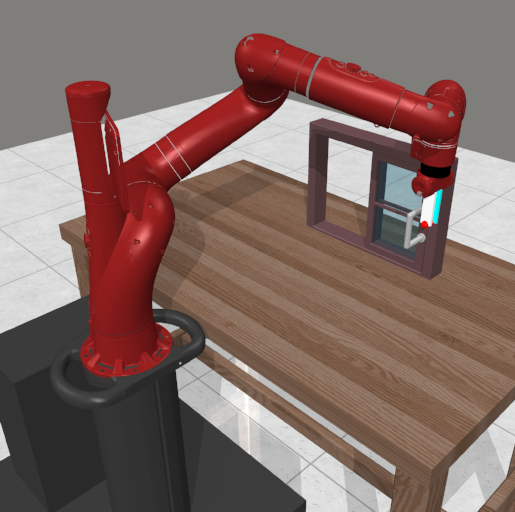}}%
    \caption{
      From left to right, we evaluate on: a 2D environment in which an agent must move around a box, a locomotion task in which a quadruped robot must match a location and pose (yellow), and four manipulation tasks in which the robot must push objects, rotate faucet valve, or open a window.
      }
      \label{fig:exp_illustrations}
    \vspace*{-10pt}
\end{figure*}

\section{Empirical Evaluation}
\label{sec:experiments}

Our experiments compare \odac to prior methods for learning goal-conditioned policies and evaluate how the components of our variational objective impact the performance of the method.
Specifically, we compare \odac to prior methods on a wide range of manipulation and locomotion tasks that require achieving a desired outcome.
To answer the second question, we conduct several ablation studies and visualize the behavior of $\qdeltatzero{t+1}$.
In our experiments, we use a uniform action prior $p(\ba)$ and the time prior $p_{T}$ is geometric with parameter $0.01$, that is, \mbox{$p_{\Delta_{t+1}}(\Delta_{t+1} = 0) = 0.99$}.
We begin by describing prior methods and environments used for the experiments.

\subsection{Learning to Achieve Desired Outcomes}

To avoid over-fitting to any one setting, we compare to these methods
across several different robot morphologies and tasks, all illustrated in~\Cref{fig:exp_illustrations} of the appendix.

\paragraph{Environments.}
We compare \odac to prior work on
a simple 2D navigation task, in which an agent must take non-greedy actions to move around a box,
as well as the \textit{Ant}, \textit{Sawyer Push}, and \textit{Fetch Push} simulated robot domains, which have each been studied in prior work on reinforcement learning for reaching goals~\citep{andrychowicz2017her,nachum2018hiro,nair2018rig,pong2019skewfit,schroecker2020universal}.
For the Ant and Sawyer tasks, desired outcomes correspond to full states (that is, desired positions and joints).
For the Fetch task, we use the same goal representation as in prior work~\citep{andrychowicz2017her} and only represent $\bg$ with the position of the object.
Lastly, we demonstrate the feasibility of replacing manually designed rewards with our outcome-driven paradigm by evaluating the methods on the \textit{Sawyer Window} and \textit{Sawyer Faucet} tasks from the MetaWorld benchmark~\citep{yu2020meta}.
These tasks come with manually designed reward functions, which we replace by simply specify a desired outcome $\bg$.
We plot the mean and standard deviation of the final Euclidean distance to the desired outcome across four random seeds.
We normalize the distance to be $1$ at the start of training.
For further details, see~\Cref{appsec:env_details}.

\paragraph{Goal Sampling.}
In all tasks, rather than assuming that we are able to perform oracle goal sampling (which can be difficult or even impossible in complex real-world environments), a fixed desired outcome is commanded as the exploration goal in each episode.
During training, the goals are relabeled using the future-style relabeling scheme from~\citet{andrychowicz2017her}.
Unlike oracle goal sampling, this approach does not assume knowledge of the set of admissible states in the environment, but is more realistic and presents a more challenging exploration problem.
To challenge the methods, we choose the desired goal to be far from the starting state. 

\paragraph{Baselines and Prior Work.}
We compare our method to hindsight experience replay (HER)~\citep{andrychowicz2017her}, a goal-conditioned method, where the learner receives a reward of $-1$ if it is within an $\epsilon$ distance from the goal, and $0$ otherwise,
universal value density estimation (UVD)~\citep{schroecker2020universal}, which also uses sparse rewards as well as a generative model of the future occupancy measure to estimate the $Q$-values, and DISCERN~\citep{wardefarley2019discern}, which learns a reward function by training a discriminator and using a clipped log-likelihood as the reward.
Lastly, we include an oracle Soft Actor--Critic (SAC) baseline that uses a manually designed reward.
For the MetaWorld tasks, this baseline uses the benchmark reward for each task.
For the remaining environments, this baseline uses the Euclidean distance between the agent's current and the desired outcome for the reward.

\paragraph{Results.}
In~\Cref{fig:sawyer-and-ant}, we see that \odac outperforms virtually every method on all tasks, consistently learning faster and often reaching a final distance that is orders of magnitude closer to the desired outcome.
The only exception is that the hand-crafted reward learns slightly faster on the 2D task, but this gap is closed within a few ten thousand steps.

\begin{figure*}[t]
     \centering
     \hspace*{-8pt}\includegraphics[width=0.345\textwidth]{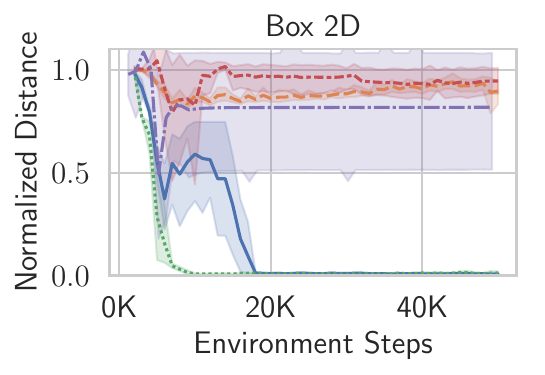}
     \hspace*{-5pt}\includegraphics[width=0.345\textwidth]{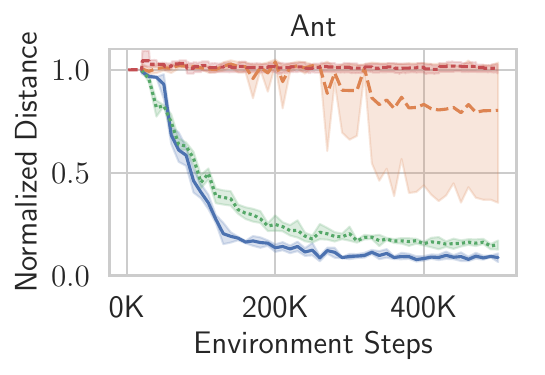}
     \hspace*{-5pt}\includegraphics[width=0.345\textwidth]{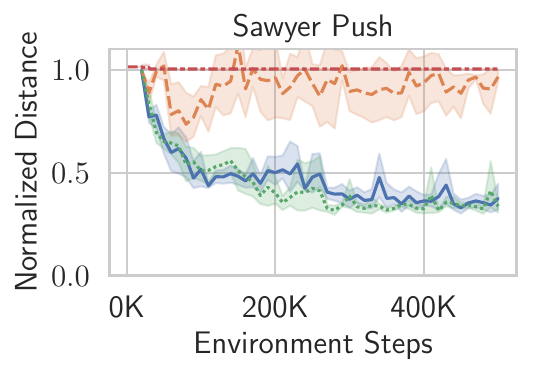}\hspace*{-10pt}
     \\
     \hspace*{-8pt}\includegraphics[width=0.345\textwidth]{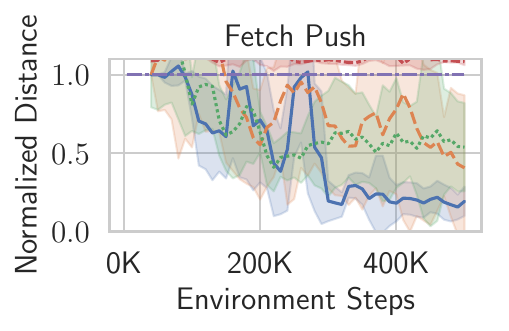}
     \hspace*{-5pt}\includegraphics[width=0.345\textwidth]{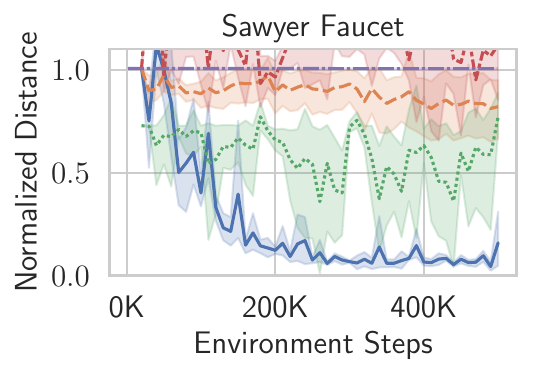}
     \hspace*{-5pt}\includegraphics[width=0.345\textwidth]{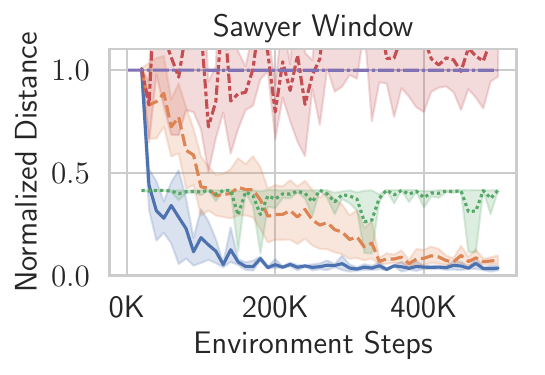}\hspace*{-10pt}
    \\
    \vspace{1pt}
     \includegraphics[width=0.87\textwidth,trim={0 0 0.4cm 0.45cm},clip]{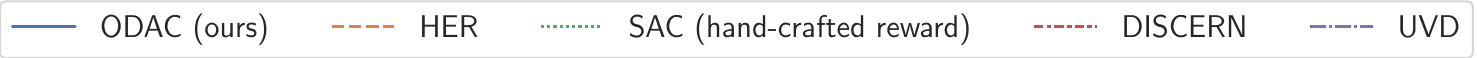}
     \caption{
     Learning curves showing final distance vs. environment steps across all six environments.
     Only \odac consistently performs well on all six tasks.
     Prior methods struggle to learn, especially in the absence of uniform goal sampling. See text for details.
     }
     \label{fig:sawyer-and-ant}
     \vspace*{-5pt}
\end{figure*}

\subsection{Ablation Study on the Effect of a Variational Discount Factor}

\setlength{\tabcolsep}{4.0pt}
\begin{wraptable}{R}{0.615\textwidth}
\vspace{-12pt}
    \centering
    \caption{
        Ablation results, showing mean final normalized distance ($\times 100$) at the end of training across 4 seeds.
        Best mean is in bold and standard error in parentheses.
        \odac is not sensitive to the dynamics models $\hat{p}_d$ but benefits from the dynamic $q_{T}$ variant.
    }
    \vspace{-6pt}
\begin{tabular}{l | c | c | c | c}
\toprule
Env & \odac & fixed $\hat{p}_d$ & fixed $q_{T}$ & fixed $q_{T}$, $\hat{p}_d$ \\
\hline
2D & 1.7 (1.20) & 1.2 (0.14) & {1.0} (0.24) & 1.3 (0.29) \\
Ant & {9} (0.48) & 11 (0.57) & 12 (0.41) & 13 (0.20) \\
Push & 35 (2.7) & {34} (1.5) & 37 (1.5) & 38 (3.1) \\
Fetch  & 19 (6) & {15} (3) & 53 (13) & 66 (15) \\
Window & 5.4 (0.62) & {5.0} (0.62) & 7.9 (0.71)  & 6.0 (0.12) \\
Faucet & {13} (4.2) & 15 (3.3) & 37 (8.3) & 38 (7.2)
\Bstrut\\
\hline
\hline
\end{tabular}
\label{tab:ablations}
\vspace*{-12pt}
\end{wraptable}

Next, we study the importance of the dynamic discount factor \mbox{$\qdeltatzero{t+1}$} and the sensitivity of our method to the dynamics model.
On all tasks, we evaluate the performance when the posterior exactly matches the prior, that is, \mbox{$\qdeltatzero{t+1} = 0.99$} (labeled ``fixed $q_{T}$'' in~\Cref{tab:ablations}).
Our analysis in~\Cref{appsec:alternative-posterior} suggests that this setting is sub-optimal, and this ablation empirically evaluate its benefits.
We also measure how the algorithm's performance depends on the accuracy of the learned dynamics model used for the reward in \odac.
To do this, we evaluate \odac with the dynamics model fixed to a multivariate Laplace distribution with a fixed variance, centered at the previous state (labeled ``fixed $\hat{p}_d$'' in~\Cref{tab:ablations}).
This ablation represents an extremely crude model, and good performance with such a model would indicate that our method does not depend on obtaining an particularly accurate model.

In~\Cref{tab:ablations}, we see that
fixing the distribution $q_{T}$ to the prior $p_{T}$ as described in Corollary~3 deteriorates performance, and that using a learned or fixed model both perform relatively well.
These results suggest that the derived optimal variational distribution \mbox{$q_{\Delta_{t+1}}^{\star}(\Delta_{t+1} = 0)$} given in~\Cref{prop:gc_optimal_variational_distribution_T} is better not only in theory but also in practice, and that \odac is not sensitive to the accuracy of the dynamics model.
Moreover, we note that a more expressive dynamics model---which could lead to a tighter variational bound---may not necessarily lead to a better variational policy if the functional form of the log-density under the dynamics model does not also provide favorable shaping.

In~\Cref{appsec:additional-exps}, we provide the full learning curves for this ablation study and present further experiments.
For example, we compare \odac to a variant in which we use the learned dynamics model for model-based planning.
We find that using the dynamics model only to compute rewards significantly outperforms the variant where it is used for both computing rewards and model-based planning.
This result suggests that \odac does not require learning a dynamics model that is accurate enough for planning, and that the derived Bellman updates are sufficient for obtaining policies that can achieve desired outcomes.
In~\Cref{fig:example-rollouts}, we also visualize $q_T$ and find that as the policy reaches an irrecoverable state, $\qdeltatzero{t+1}$ drops in value, suggesting that \odac \emph{automatically} learns a dynamic discount factor that terminates an episode when an irrecoverable state is reached.

\section{Conclusion}
\label{sec:conclusions}

We proposed a probabilistic approach for achieving desired outcomes in settings where no reward function and no termination condition are given.
We showed that by framing the problem of achieving desired outcomes as variational inference, we can derive an off-policy temporal-difference algorithm, a reward function learnable from environment interactions, and a novel Bellman backup that contains a state--action dependent dynamic discount factor for the reward and bootstrap term.

Our experimental results demonstrated that the resulting algorithm, \odac, leads to efficient outcome-driven approaches to RL.
While \odac requires choosing a dynamics model, we found that it works well even for simple dynamics models and believe that the use of more sophisticated dynamics models that incorporate epistemic uncertainty~\citep{chua18probabilisticdynamics} or domain-specific structure~\citep{ebert2017videoprediction,rudner2021fsvi,veerapaneni2020entity,zhang2019solar} is a promising avenue for future research.

\begin{ack}
We thank Marvin Zhang, Michael Janner, Abhishek Gupta, and various RAIL and OATML students for their discussions and feedback on early drafts of this paper.
Tim G. J. Rudner is funded by the Rhodes Trust, by a Qualcomm Innovation Fellowship, and by the Engineering and Physical Sciences Research Council (EPSRC).
This research was further supported by the Alan Turing Institute, the National Science Foundation, the DARPA Assured Autonomy Program, and ARL DCIST CRA W911NF-17-2-0181.
\end{ack}

\clearpage

\bibliography{references}
\bibliographystyle{plainnat}

\clearpage

\onecolumn

\begin{appendices}

\crefalias{section}{appsec}
\crefalias{subsection}{appsec}
\crefalias{subsubsection}{appsec}

\setcounter{equation}{0}
\renewcommand{\theequation}{\thesection.\arabic{equation}}

\small

\section*{\LARGE Supplementary Material}
\label{sec:appendix}

\section*{Table of Contents}
\vspace*{-10pt}
\startcontents[sections]
\printcontents[sections]{l}{1}{\setcounter{tocdepth}{2}}

\section{Proofs \& Derivations}
\label{appsec:proof_variational_objective}

\subsection{Finite- and Infinite-Horizon Variational Objectives}
\label{appsec:posterior_trajectory_inference}

In this section, we present detailed derivations and proofs for the results in Sections~\ref{sec:gc_probabilistic_framework_finite} and~\Cref{sec:gc_approximate_inference}.

\begin{customproposition}{1}{(Fixed-Time Outcome-Driven Variational Objective)}
\label{prop-app:maxent_objective_fixed_time}
Let $q_{\Trajnoso_{0:t} | \bS_{0}}(\trajnoso_{0:t} \vbar \bs_{0})$ be as defined in~\Cref{eq:warm-up-vi-family}.
Then, given any initial state $\bs_{0}$, termination time $t^\star$, and outcome $\bg$,
\begin{align}
\begin{split}
    \mathbb{D}_{\emph{\textrm{KL}}}( q_{\Trajnoso_{0:t} | \bS_{0}}(\cdot \vbar \bs_{0}) \,\|\, p_{\Trajnoso_{0:t} | \bS_{0}, \bS_{t^\star}}(\cdot \vbar \bs_{0}, \bg)
    )
    =
    \log p(\bg \vbar \bs_{0}) - \bar{\calF}(\pi, \bs_{0}, \bg),
\end{split}
\end{align}
where
\begin{align}
\begin{split}
\SwapAboveDisplaySkip
\label{eq-app:maxent_objective_fixed_time}
    \bar{\calF}(\pi, \bs_{0}, \bg)
    \defines
    \mathbb{E}_{q_{\Trajnoso_{0:t} | \bS_{0}}(\trajnoso_{0:t} \vbar \bs_{0})} \bigg[
    &\log \goaltransitiont{t} - \sum_{t'=0}^{t-1}
    \mathbb{D}_{\emph{\textrm{KL}}}(\policytdot{t'} \,||\, \policypriortdot{t'}) \bigg],
\end{split}
\end{align}
and since $\log p(\bg | \bs_{0})$ is constant in $\pi$,
\begin{align}
    \argmin_{\pi \in \Pi} \mathbb{D}_{\emph{\textrm{KL}}}( q_{\Trajnoso_{0:t} | \bS_{0}}(\cdot \vbar \bs_{0}) \,\|\, p_{\Trajnoso_{0:t} | \bS_{0}, \bS_{t^\star}}(\cdot \vbar \bs_{0}, \bg)
    )
    =
    \argmax_{\pi \in \Pi} \bar{\calF}(\pi, \bs_{0}, \bg).
\end{align}
\end{customproposition}
\begin{proof}
To find an approximation to the posterior $p_{\Trajnoso_{0:t} | \bS_{0}, \bS_{t^\star}}(\cdot \vbar \bs_{0}, \bg)$, we can use variational inference.
To do so, we consider the trajectory distribution under $p_{\Trajnoso_{0:t} | \bS_{0}, \bS_{t^\star}}(\cdot \vbar \bs_{0}, \bg)$, which by Bayes' Theorem is given by
\begin{align}
    p_{\Trajnoso_{0:t} | \bS_{0}, \bS_{t^\star}}(\trajnoso_{0:t} \vbar \bs_{0}, \bg)
    =
    \frac{\likelihoodt{t} p(\ba_{t} \vbar \bs_{t}) \prod_{t'=0}^{t-1} \dyn(\bs_{t'+1} \vbar \bs_{t}, \ba_{t}) p(\ba_{t'} \vbar \bs_{t'})}{p(\bg \vbar \bs_{0})} ,
\end{align}
where \mbox{$t=t^\star-1$}, and we denote the state--action trajectory realization from action $\ba_{0}$ to \mbox{$\ba_{t}$} by \mbox{$\trajnoso_{0:t} \defines \{ \ba_{0}, \bs_{1}, \ba_{1}, ..., \bs_{t}, \ba_{t} \}$}.
Inferring an approximation to the posterior distribution \mbox{$p_{\Trajnoso_{0:t} | \bS_{0}, \bS_{t^\star}}(\cdot \vbar \bs_{0}, \bg)$}
then becomes equivalent to finding a variational distribution $\qtrajnosodot$, which induces a trajectory distribution $q_{\Trajnoso_{0:t} | \bS_{0}}(\cdot \vbar \bs_{0})$ that minimizes the \kld from $q_{\Trajnoso_{0:t} | \bS_{0}}(\cdot \vbar \bs_{0})$ to $p_{\Trajnoso_{0:t} | \bS_{0}, \bS_{t^\star}}(\cdot \vbar \bs_{0}, \bg)$:
\begin{align}
\label{eq-app:kl_trajectories_fixed_t}
    \min_{q \in \bar{\calQ}} \DKL{q_{\Trajnoso_{0:t} | \bS_{0}}(\cdot \vbar \bs_{0})}{p_{\Trajnoso_{0:t} | \bS_{0}, \bS_{t^\star}}(\cdot \vbar \bs_{0}, \bg)}.
\end{align}
If we find a distribution $q_{\Trajnoso_{0:t} | \bS_{0}}(\cdot \vbar \bs_{0})$ for which the resulting \kld is zero, then $q_{\Trajnoso_{0:t} | \bS_{0}}(\cdot \vbar \bs_{0})$ is the exact posterior.
If the \kld is positive, then $q_{\Trajnoso_{0:t} | \bS_{0}}(\cdot \vbar \bs_{0})$ is an approximate posterior.
To solve the variational problem in~\Cref{eq-app:kl_trajectories_fixed_t}, we can define a factorized variational family
\begin{align}
    q_{\Trajnoso_{0:t} | \bS_{0}}(\trajnoso_{0:t} \vbar \bs_{0})
    \defines
    \policyt{t} \prod_{t'=0}^{t-1} q_{\bS_{t'+1} | \bS_{t'}, \bA_{t'}}(\bs_{t'+1 } \vbar \bs_{t'}, \ba_{t'}) \policyt{t'},
\end{align}
where $\bA_{0:t}$ and $\bS_{1:t}$ are latent variables over which to infer an approximate posterior distribution, and the product is from $t=0$ to $t=t^\star - 1$ to exclude the conditional distribution over the (observed) state $\bS_{t+1} = \bg$ from the variational distribution.

Returning to the variational problem in~\Cref{eq-app:kl_trajectories_fixed_t}, we can now write
\begin{align}
\begin{split}
    &\DKL{q_{\Trajnoso_{0:t} | \bS_{0}}(\cdot \vbar \bs_{0})}{p_{\Trajnoso_{0:t} | \bS_{0}, \bS_{t^\star}}(\cdot \vbar \bs_{0}, \bg)}
    \\
    &=
    \int_{\calA^{t+1}} \int_{\calS^{t}} q_{\Trajnoso_{0:t} | \bS_{0}}(\trajnoso_{0:t} \vbar \bs_{0}) \log \frac{q_{\Trajnoso_{0:t} | \bS_{0}}(\trajnoso_{0:t} \vbar \bs_{0})}{p_{\Trajnoso_{0:t} | \bS_{0}, \bS_{t^\star}}(\trajnoso_{0:t} | \bs_{0}, \bg )} d\bs_{1:t} d\ba_{0:t}
    \\
    &=
    -\bar{\calF}(\pi, \bs_{0}, \bg) + \log p(\bg | \bs_{0}),
\end{split}
\end{align}
where
\begin{align}
\begin{split}
    &\bar{\calF}(\pi, \bs_{0}, \bg)
    \\
    &\defines
    \mathbb{E}_{q_{\Trajnoso_{0:t} | \bS_{0}}(\trajnoso_{0:t} \vbar \bs_{0})}
    \Bigg[ \log \likelihoodt{t} + \log p(\ba_{t} \vbar \bs_{t}) - \log \policyt{t}
    \\
    &\qquad\quad
    + \sum_{t'=0}^{t-1} \log \policypriort{t'} + \log \transitiont{t'} - \log \policyt{t'} - \log q_{\bS_{t'+1} | \bS_{t'}, \bA_{t'}}(\bs_{t'+1} \vbar \bs_{t'}, \ba_{t'})
    \Bigg]
\end{split}
\end{align}
and
\begin{align}
    \log p(\bg | \bs_{0}) = \log \int_{\calA^{t+1}} \int_{\calS^{t}} \likelihoodt{t} p_{\Trajnoso_{0:t} | \bS_{0}}(\trajnoso_{0:t} \vbar \bs_{0}) d\bs_{1:t} d\ba_{0:t}
\end{align}
is a log-marginal likelihood.
Following~\citet{haarnoja2018sac}, we define the variational distribution over next states to be the the true transition dynamics, that is, \mbox{$q_{\bS_{t+1} | \bS_{t}, \bA_{t}}(\bs_{t+1} \vbar \bs_{t}, \ba_{t}) = \transitiont{t}$}, so that
\begin{align}
\label{eq-app:gc_variational_distribution_fixed_t}
    q_{\Trajnoso_{0:t} | \bS_{0}}(\trajnoso_{0:t} \vbar \bs_{0})
    \defines
    \policyt{t} \prod_{t'=0}^{t-1} \transitiont{t'} \policyt{t'}.
\end{align}
We can then simplify $\bar{\calF}(\pi, \bs_{0}, \bg)$ to
\begin{align}
    \bar{\calF}(\pi, \bs_{0}, \bg)
    =
    \mathbb{E}_{q_{\Trajnoso_{0:t} | \bS_{0}}(\trajnoso_{0:t} \vbar \bs_{0})} \left[
    \log \goaltransitiont{t} + \sum_{t'=0}^{t}
    \DKL{\policytdot{t'}}{\policypriortdot{t'}} \right].
\end{align}
Since $\log p(\bg | \bs_{0})$ is constant in $\pi$, solving the variational optimization problem in~\Cref{eq-app:kl_trajectories_fixed_t} is equivalent to maximizing the variational objective with respect to $\pi \in \Pi$, where $\Pi$ is a family of policy distributions.
\end{proof}

\begin{customcorollary}{1}{(Fixed-Time Outcome-Driven Reward Function)}
    \label{cor-app:maxent_finite_time}
    The objective in~\Cref{eq:maxent_objective_fixed_time} corresponds to KL-regularized reinforcement learning with a time-varying reward function given by
    \begin{align*}
        r(\bs_{t'}, \ba_{t'}, \bg, t')
        \defines
        \mathbb{I}\{ t' = t \} \log \goaltransitiont{t'} .
    \end{align*}
\end{customcorollary}
\begin{proof}
Let
\begin{align}
    r(\bs_{t'}, \ba_{t'}, \bg, t')
    \defines
    \mathbb{I}\{ t' = t \} \log \goaltransitiont{t'}
\end{align}
and note that the objective
\begin{align}
    \bar{\calF}(\pi, \bs_{0}, \bg)
    =
    \mathbb{E}_{\qtrajnosodot} \left[
    \log \goaltransitiont{t} + \sum_{t=0}^{t}
    \mathbb{D}_{\textrm{KL}}(\policytdot{t} \,\|\, \policypriortdot{t} ) \right]
\end{align}
can equivalently written as
\begin{align}
    \bar{\calF}(\pi, \bs_{0}, \bg)
    &=
    \mathbb{E}_{\qtrajnosodot} \left[
    \sum_{t'=0}^{t} r(\bs_{t'}, \ba_{t'}, \bg, t') + \sum_{t'=0}^{t}
    \DKL{\policytdot{t'}}{\policypriortdot{t'}} \right]
    \\
    &= \mathbb{E}_{\qtrajnosodot} \left[
    \sum_{t'=0}^{t} r(\bs_{t'}, \ba_{t'}, \bg, t') +
    \DKL{\policytdot{t'}}{\policypriortdot{t'}} \right],
\end{align}
which, as shown in~\citet{haarnoja2018sac}, can be written in the form of~\Cref{eq:recursive-q-def}.
\end{proof}

\begin{customproposition}{3}{(Unknown-time Outcome-Driven Variational Objective)}
\label{prop-app:objective_unknown_time_sum}
Let $q_{\Trajnoso_{0:T}, T | \bS_{0}}( \trajnoso_{0:t}, t \vbar \bs_{0}) = \qtrajnosotvar q_{T}(t)$, let $q_{T}(t)$ be a variational distribution defined on $t \in \mathbb{N}_{0}$, and let $\qtrajnosotvar$ be as defined in~\Cref{eq:warm-up-vi-family}.
Then, given any initial state $\bs_{0}$ and outcome $\bg$, we have that
\begin{align}
\begin{split}
\label{app-eq:gc_elbo_upper}
    \mathbb{D}_{\emph{\textrm{KL}}}( q_{\Trajnoso_{0:T}, T | \bS_{0}}( \cdot \vbar \bs_{0})
    \,\|\,
    p_{\Trajnoso_{0:T}, T | \bS_{0}, \bS_{T^\star}}( \cdot \vbar \bs_{0}, \bg)
    )
    =
    \log p(\bg | \bs_{0})
    -
    \calF(\pi, q_{T}, \bs_{0}, \bg),
\end{split}
\end{align}
where
\begin{align}
\begin{split}
\label{eq-app:F-definition}
    &\calF(\pi, q_{T}, \bs_{0}, \bg)
    \\
    &\defines\,
    \sum_{t=0}^\infty \qtvar \E_{\qtrajnosotvar} \Big[ \log \likelihoodt{t}
    - \mathbb{D}_{\emph{\textrm{KL}}}(q_{\Trajnoso_{0:T}, T | \bS_{0}}( \cdot \vbar \bs_{0}) \,||\, p_{\Trajnoso_{0:T}, T | \bS_{0}}( \cdot \vbar \bs_{0}) ) \Big]
\end{split}
\end{align}
and $\log p(\bg \vbar \bs_{0})$ is constant in $\pi$ and $q_{T}$.
\end{customproposition}
\begin{proof}
In general, solving the variational problem
\begin{align}
\label{eq-app:kl_2trajectories_unknown_t}
    \min_{q \in \calQ} \DKL{
    q_{\Trajnoso_{0:T}, T | \bS_{0}}( \cdot \vbar \bs_{0})
    }{
    p_{\Trajnoso_{0:T}, T | \bS_{0}, \bS_{T^\star}}( \cdot \vbar \bs_{0}, \bg)
    }
\end{align}
from~\Cref{sec:gc_approximate_inference} in closed form is challenging, but as in the fixed-time setting, we can take advantage of the fact that, by choosing a variational family parameterized by
\begin{align}
\label{eq-app:gc_variational_trajectory}
    \qtrajnosotvar \defines \policyt{t} \prod_{t'=0}^{t-1} \transitiont{t'} \, \policyt{t'},
\end{align}
with $\pi \in \Pi$, we can follow the same steps as in the proof for~\Cref{prop:maxent_objective_fixed_time} and show that given any initial state $\bs_{0}$ and outcome $\bg$,
\begin{align}
\begin{split}
    \DKL{ q_{\Trajnoso_{0:T}, T | \bS_{0}}( \cdot \vbar \bs_{0})
    }{
    p_{\Trajnoso_{0:T}, T | \bS_{0}, \bS_{T^\star}}( \cdot \vbar \bs_{0}, \bg)}
    )
    =
    \log p(\bg \vbar \bs_{0})
    -
    \calF(\pi, q_{T}, \bs_{0}, \bg)
    ,
\end{split}
\end{align}
where
\begin{align}
\begin{split}
\label{eq-app:gc_elbo_upper}
    &\calF(\pi, q_{T}, \bs_0, \bg)
    \\
    &\defines\,
    \sum_{t=0}^\infty q_{T}(t) \E_{\qtrajnosotvar} \Big[ \log \likelihoodt{t} - \DKL{q_{\Trajnoso_{0:T}, T | \bS_{0}}( \cdot \vbar \bs_{0})}{p_{\Trajnoso_{0:T}, T | \bS_{0}}( \cdot \vbar \bs_{0}) } \Big],
\end{split}
\end{align}
where $q_{\Trajnoso_{0:T}, T | \bS_{0}}(\trajnoso_{0:t}, t \vbar \bs_{0}) \defines \qtrajnosotvar q_{T}(t)$, and hence, solving the variational problem in~\Cref{eq:gc_kl_minimization} is equivalent to maximizing $ \calF(\pi, q_{T}, \bs_0, \bg)$ with respect to $\pi$ and $q_{T}$.
\end{proof}

\subsection{Recursive Variational Objective \& Outcome-Driven Bellman Backup Operator}
\label{appsec:variational_approximation}

\begin{customproposition}{4}{(Factorized Unknown-Time Outcome-Driven Variational Objective)}
\label{prop-app:objective_unknown_time_factorized}
Let $q_{\Trajnoso_{0:T}, T}( \trajnoso_{0:t}, t \vbar \bs_{0}) = \qtrajnosotvar q_{T}(t)$, let \mbox{$q_T(t) = \qdeltatone{t+1}  \prod_{t'=1}^{t} \qdeltatzero{t'}$} be a variational distribution defined on $t \in \mathbb{N}_{0}$, and let $\qtrajnosotvar$ be as defined in~\Cref{eq:warm-up-vi-family}.
Then, given any initial state $\bs_{0}$ and outcome $\bg$,~\Cref{eq-app:F-definition} can be rewritten as
\begin{align}
\begin{split}
\label{eq-app:objective_unknown_time_factorized}
    \calF(\pi, q_{T}, \bs_0, \bg)
    &=
    \E_{q_{\Trajnoso_{0} \vbar \bS_{0}}(\trajnoso_{0} \vbar \bs_{0})} \Bigg[ \sum_{{t} = 0}^\infty \left( \prod_{t'=1}^{t} \qdeltatzero{t'} \right) \Big(\rewardt{t} - \mathbb{D}_{\emph{\textrm{KL}}}(\policytdot{t} \,\|\,\policypriortdot{t}) \Big) \Bigg]
\end{split}
\end{align}
where
\begin{align}
    \rewardt{t} \defines \qdeltatone{t+1} \log \likelihoodt{t} - 
    \mathbb{D}_{\emph{\textrm{KL}}}(\qdeltatdot{t+1} \,\|\,\pdeltatdot{t+1}),
\end{align}
\end{customproposition}
\begin{proof}
Consider the variational objective $\calF(\pi, q_{T}, \bs_0, \bg)$ in~\Cref{eq-app:F-definition}:
\begin{align}
\begin{split}
    &\calF(\pi, q_{T}, \bs_0, \bg)
    \\
    &=
    \sum_{t=0}^\infty \qtvar
    \E_{\qtrajnosotvar} \Big[ \log \likelihoodt{t}
    - \DKL{q_{\Trajnoso_{0:T}, T | \bS_{0}}( \cdot \vbar \bs_{0})}{p_{\Trajnoso_{0:T}, T | \bS_{0}}( \cdot \vbar \bs_{0}) } \Big]
    \end{split}
    \\
    \begin{split}
    &=
    \sum_{{t} = 0}^\infty
    \qtvar
    \E_{\qtrajnosotvar} \left[ \log \likelihoodt{t}
    - \log \frac{\qtrajnosotvar \qtvar}{\ptrajnosotvar \ptvar}  d \trajnoso_{0:{{t}}}
    \right]
    \end{split}
    \\
    \begin{split}
    &=
    \sum_{{t} = 0}^\infty
    \qtvar
    \E_{\qtrajnosotvar} \left[ \log \likelihoodt{t}
    -  \log \frac{\qtrajnosotvar}{\ptrajnosotvar}
    \right] - \sum_{{t} = 0}^\infty \qtvar \log \frac{\qtvar}{\qtvar}.
    \end{split}
    \end{align}
Noting that $\sum_{{t} = 0}^\infty \qtvar \log \frac{\qtvar}{\qtvar} = \DKL{\qtdot}{\ptdot}$, we can write
\begin{align}
\begin{split}
    &\calF(\pi, q_{T}, \bs_0, \bg)
    \\
    &=
    \sum_{{t} = 0}^\infty
    \qtvar
    \E_{\qtrajnosotvar} \left[ \log \likelihoodt{t}
    -  \log \frac{\qtrajnosotvar}{\ptrajnosotvar}
    \right] - \DKL{\qtdot}{\ptdot}
    \end{split}
    \\
    \begin{split}
    &=
    \sum_{{t} = 0}^\infty
    \qtvar
    \E_{\qtrajnosotvar} \Big[
    \log \likelihoodt{t}
    \Big]
    \\
    &\qquad
    -
    \sum_{{t} = 0}^\infty
    \qtvar
    \E_{\qtrajnosotvar} \left[
    \log \frac{\qtrajnosotvar}{\ptrajnosotvar}
    \right] - \DKL{\qtdot}{\ptdot}.
    \label{eq-app:F-transdimensional}
    \end{split}
    \end{align}
Further noting that for an infinite-horizon trajectory distribution
\begin{align}
q_{\Trajnoso_{t'} | \bS_{t'}}(\trajnoso_{t'} \vbar \bs_{t'}) \defines \prod_{t=t'}^\infty \transitiont{t} \policyt{t},
\end{align}
trajectory realization \mbox{$\trajnoso_{t+1} \defines \{ \btau_{t'} \}_{t'=t+1}^\infty$}, and any joint probability density $f(\bs_{t}, \ba_{t})$,
\begin{align}
    &\sum_{{t} = 0}^\infty \qtvar \E_{\qtrajnosotvar} \Big[
    f(\bs_{t}, \ba_{t})
    \Big]
    \\
    &=
    \sum_{{t} = 0}^\infty \left( \int q_{\Trajnoso_{T+1} | \bS_{0}}(\trajnoso_{t+1} \vbar \bs_{0}) \left( \int_{\calS^{t} \times \calA^{t+1}} q_{\Trajnoso_{0:t} | \bS_{0}}(\trajnoso_{0:t} \vbar \bs_{0}) \qtvar f(\bs_{t}, \ba_{t}) d \trajnoso_{0:t} \right) d \trajnoso_{t+1} \right),
    \\
    &=
    \sum_{{t} = 0}^\infty \bigg( \E_{\qtrajnosotvar} \Big[
    \qtvar f(\bs_{t}, \ba_{t})
    \Big] \cdot \underbrace{ \left( \int q_{\Trajnoso_{T+1} | \bS_{0}}(\trajnoso_{t+1} \vbar \bs_{0}) d \trajnoso_{t+1} \right)}_{=1} \bigg)
    \\
    &=
    \sum_{{t} = 0}^\infty \bigg( \left(\int_{\calS^{t} \times \calA^{t+1}} q(\trajnoso_{0:t} \vbar \bs_{0}) \qtvar f(\bs_{t}, \ba_{t}) d \trajnoso_{0:t} \right) \cdot \underbrace{ \left( \int q_{\Trajnoso_{T+1} | \bS_{0}}(\trajnoso_{t+1} \vbar \bs_{0}) d \trajnoso_{t+1} \right)}_{=1} \bigg)
    \\
    &=
    \sum_{{t} = 0}^\infty \int q_{\Trajnoso_{0} | \bS_{0}}(\trajnoso_{0} \vbar \bs_{0}) \qtvar f(\bs_{t}, \ba_{t}) d \trajnoso_{0}
    \\
    &=
    \int q_{\Trajnoso_{0} | \bS_{0}}(\trajnoso_{0} \vbar \bs_{0}) \sum_{{t} = 0}^\infty \qtvar f(\bs_{t}, \ba_{t}) d \trajnoso_{0},
\end{align}
we can express~\Cref{eq-app:F-transdimensional} in terms of the infinite-horizon state--action trajectory \mbox{$q_{\Trajnoso_{0} | \bS_{0}}(\trajnoso_{0} \vbar \bs_{0}) \defines \prod_{t=0}^\infty \transitiont{t} \policyt{t}$} as
\begin{align}
\begin{split}
    &\calF(\pi, q_{T}, \bs_0, \bg)
    \\
    &=
    \int q_{\Trajnoso_{0} | \bS_{0}}(\trajnoso_{0} \vbar \bs_{0}) \sum_{{t} = 0}^\infty \qtvar \log p(\bg \vbar \bs_{{{t}}}, \ba_{{{t}}}) d \trajnoso
    \\
    &\qquad
    - \sum_{{t} = 0}^\infty \qtvar \DKL{\qtrajnosotdot}{\ptrajnosotdot} - \DKL{\qtdot}{\ptdot}
    \end{split}
    \\
    \begin{split}
    &=
    \E_{q_{\Trajnoso_{0} | \bS_{0}}(\trajnoso_{0} \vbar \bs_{0})} \bigg[ \sum_{{t} = 0}^\infty \qtvar \Big( \log p(\bg \vbar \bs_{{{t}}}, \ba_{{{t}}})
    \\
    &\qquad
    - \DKL{\qtrajnosotdot}{\ptrajnosotdot} \Big) \bigg] - \DKL{\qtdot}{\ptdot}.
    \end{split}
\end{align}
Using~\Cref{lemma-app:kl-decomp-delta} and the definition of $\qtvar$ in~\Cref{eq:qT-delta-form}, we can rewrite this objective as
\begin{align}
    \begin{split}
    &\calF(\pi, q_{T}, \bs_0, \bg)
    \\
    &=
    \E_{q_{\Trajnoso_{0} | \bS_{0}}(\trajnoso_{0} \vbar \bs_{0})} \bigg[ \sum_{{t} = 0}^\infty
    \Big(
    \prod_{t'=1}^{t} \qdeltatzero{t'}
    \Big)
    \qdeltatone{t'}
    \Big( \log p(\bg \vbar \bs_{{{t}}}, \ba_{{{t}}})
    \\
    &\qquad
    - \DKL{\qtrajnosotdot}{\ptrajnosotdot} \Big) \bigg] -
    \sum_{t=0}^\infty 
    \Big(
    \prod_{t'=1}^{t} \qdeltatzero{t'}
    \Big)
    \mathbb{D}_{{\textrm{KL}}}(\qdeltatdot{t+1} \,||\, \pdeltatdot{t+1})
    \end{split}
    \\
    \begin{split}
    &=
    \E_{q_{\Trajnoso_{0} | \bS_{0}}(\trajnoso_{0} \vbar \bs_{0})} \Bigg[ \sum_{{t} = 0}^\infty \left( \prod_{t'=1}^{t} q(\Delta_{t'}=0) \right)
    \\
    \label{eq-app:expanded-qT}
    &\qquad
    \cdot \Big(q(\Delta_{t+1}=1) \big( \log p(\bg \vbar \bs_{{{t}}}, \ba_{{{t}}}) - \DKL{\qtrajnosotdot}{\ptrajnosotdot} \big)
    \\
    &\qquad\qquad
    - \DKL{\qdeltatdot{t+1}}{\pdeltatdot{t+1}} \Big) \Bigg],
    \end{split}
\end{align}
with
\begin{align}
\begin{split}
    &\DKL{\qdeltatdot{t+1}}{\pdeltatdot{t+1}}
    \\
    &= \qdeltatzero{t+1} \log \frac{\qdeltatzero{t+1}}{\pdeltatzero{t+1}} + (1 - \qdeltatzero{t+1}) \log \frac{1 - \qdeltatzero{t+1}}{1 - \pdeltatzero{t+1}}.
\end{split}
\end{align}
Next, to re-express \mbox{$\DKL{\qtrajnosotdot}{\ptrajnosotdot}$} as a sum over Kullback-Leibler divergences between distributions over single action random variables, we note that
\begin{align}
\label{eq-app:traj-kl-given-t}
\begin{split}
    &\DKL{\qtrajnosotdot}{\ptrajnosotdot}
    \\
    &=
    \int_{\calS^{t} \times \calA^{t+1}} \qtrajnosotvar \log \frac{\qtrajnosotvar}{\ptrajnosotvar} d \trajnoso_{0:{{t}}}
    \end{split}
    \\
    \begin{split}
    &=
    \int_{\calS^{t} \times \calA^{t+1}} \qtrajnosotvar \log \frac{\prod_{t'=1}^{t} \policyt{t'}}{\prod_{t'=1}^{t} \policypriort{t'}} d \trajnoso_{0:{{t}}}
    \end{split}
    \\
    \begin{split}
    &=
    \int_{\calS^{t} \times \calA^{t+1}} \qtrajnosotvar \sum_{t'=0}^{t} \log \frac{ \policyt{t'}}{ \policypriort{t'}} d \trajnoso_{0:{{t}}}
    \end{split}
    \\
    \begin{split}
    &=
    \E_{q_{\Trajnoso_{0} | \bS_{0}}(\trajnoso_{0} \vbar \bs_{0} )} \left[ \sum_{t'=0}^{t} \int_{\calA}  \policyt{t'} \log \frac{ \policyt{t'}}{ \policypriort{t'}} d \ba_{t'} \right]
    \end{split}
    \\
    \begin{split}
    &=
    \E_{q_{\Trajnoso_{0} | \bS_{0}}(\trajnoso_{0} \vbar \bs_{0} )} \left[ \sum_{t'=0}^{t} \DKL{ \policytdot{t'}}{ \policypriortdot{t'}} \right] ,
\end{split}
\end{align}
where we have used the same marginalization trick as above to express the expression in terms of an infinite-horizon trajectory distribution, which allows us to express~\Cref{eq-app:expanded-qT} as
\begin{align}
\begin{split}
    &\calF(\pi, q_{T}, \bs_0, \bg)
    \\
    &=
    \E_{q_{\Trajnoso_{0} | \bS_{0}}(\trajnoso_{0} \vbar \bs_{0})} \Bigg[ \sum_{{t} = 0}^\infty \left( \prod_{t'=1}^{t} q(\Delta_{t'}=0) \right)
    \\
    &\qquad
    \cdot \bigg(q_{\Delta_{t+1}}(\Delta_{t+1}=1) \left( \log p(\bg \vbar \bs_{{{t}}}, \ba_{{{t}}}) - \E_{q_{\Trajnoso_{0} | \bS_{0}}(\trajnoso_{0} \vbar \bs_{0} )} \left[ \sum_{t'=0}^{t} \DKL{ \policytdot{t'}}{ \policypriortdot{t'}} \right] \right)
    \\
    &\qquad\qquad
    - \DKL{\qdeltatdot{t+1}}{\pdeltatdot{t+1}} \bigg) \Bigg].
\end{split}
\end{align}
Rearranging and dropping redundant expectation operators, we can now express the objective as
\begin{align}
    \begin{split}
    &\calF(\pi, q_{T}, \bs_0, \bg)
    \\
    &=
    \E_{q_{\Trajnoso_{0} | \bS_{0}}(\trajnoso_{0} \vbar \bs_{0})} \Bigg[ \sum_{{t} = 0}^\infty \left( \prod_{t'=1}^{t} q_{\Delta_{t+1}}(\Delta_{t'}=0) \right)
    \\
    &\qquad
    \cdot \bigg(q_{\Delta_{t+1}}(\Delta_{t+1}=1) \left( \log p(\bg \vbar \bs_{{{t}}}, \ba_{{{t}}}) - \E_{q_{\Trajnoso_{0} | \bS_{0}}(\trajnoso_{0} \vbar \bs_{0} )} \left[ \sum_{t'=0}^{t} \DKL{ \policytdot{t'}}{ \policypriortdot{t'}} \right] \right)
    \\
    &\qquad\qquad
    - \DKL{\qdeltatdot{t+1}}{\pdeltatdot{t+1}} \bigg) \Bigg].
    \end{split}
    \\
    \begin{split}
    &=
    \E_{q_{\Trajnoso_{0} | \bS_{0}}(\trajnoso_{0} \vbar \bs_{0})} \Bigg[ \sum_{{t} = 0}^\infty \left( \prod_{t'=1}^{t} q_{\Delta_{t'}}(\Delta_{t'}=0) \right)
    \\
    &\qquad
    \cdot \Big( q_{\Delta_{t+1}}(\Delta_{t+1}=1) \log p(\bg \vbar \bs_{{{t}}}, \ba_{{{t}}}) - \DKL{\qdeltatdot{t+1}}{\pdeltatdot{t+1}} \Big) \Bigg]
    \\
    &\qquad\qquad
    - \sum_{{t} = 0}^\infty \underbrace{\left( \prod_{t'=1}^{t} q_{\Delta_{t'}}(\Delta_{t'}=0) q_{\Delta_{t+1}}(\Delta_{t+1}=1) \right)}_{=\qtvar} \E_{q_{\Trajnoso_{0} | \bS_{0}}(\trajnoso_{0} \vbar \bs_{0} )} \left[ \sum_{t'=0}^{t} \DKL{ \policytdot{t'}}{ \policypriortdot{t'}} \right],
    \label{eq-app:expanded-qT-twice}
    \end{split}
\end{align}
whereupon we note that the negative term can be expressed as
\begin{align}
\begin{split}
    &\sum_{t=0}^\infty \qtvar \E_{q_{\Trajnoso_{0} | \bS_{0}}(\trajnoso_{0} \vbar \bs_{0} )} \left[ \sum_{t'=0}^{t} \DKL{\policytdot{t'}}{ \policypriortdot{t'}} \right]
    \\
    &=
    \E_{q_{\Trajnoso_{0} | \bS_{0}}(\trajnoso_{0} \vbar \bs_{0} )} \left[ \sum_{t=0}^\infty \sum_{t'=0}^{t} \qtvar \DKL{\policytdot{t'}}{ \policypriortdot{t'}} \right]
    \end{split}
    \\
    \begin{split}
    &=
    \E_{q_{\Trajnoso_{0} | \bS_{0}}(\trajnoso_{0} \vbar \bs_{0} )} \left[ \sum_{t=0}^{\infty} q(T \geq t) \DKL{\policytdot{t}}{ \policypriortdot{t}} \right]
    \end{split}
    \\
    \label{eq-app:simplified-action-kls}
    \begin{split}
    &=
    \E_{q_{\Trajnoso_{0} | \bS_{0}}(\trajnoso_{0} \vbar \bs_{0} )} \Bigg[ \sum_{t=0}^{\infty} \underbrace{ \left( \prod_{t'=1}^{t} \qdeltatzero{t'} \right) }_{\text{(by~\Cref{lemma-app:survival_q_delta})}} \DKL{\policytdot{t}}{ \policypriortdot{t}} \Bigg],
\end{split}
\end{align}
where the second line follows from expanding the sums and regrouping terms.
By substituting the expression in~\Cref{eq-app:simplified-action-kls} into~\Cref{eq-app:expanded-qT-twice}, we obtain an objective expressed entirely in terms of distributions over single-index random variables:
\begin{align}
    \begin{split}
    &\calF(\pi, q_{T}, \bs_{0}, \bg)
    \\
    &=
    \E_{q_{\Trajnoso_{0} | \bS_{0}}(\trajnoso_{0} \vbar \bs_{0})} \Bigg[ \sum_{{t} = 0}^\infty \left( \prod_{t'=1}^{t} \qdeltatzero{t'} \right)
    \\
    &\qquad
    \cdot \left(\qdeltatone{t+1} \log \likelihoodt{t} - \DKL{\qdeltatdot{t+1}}{\pdeltatdot{t+1}} - \DKL{\policytdot{t}}{ \policypriortdot{t}} \right) \Bigg]
    \end{split}
    \\
    &=
    \E_{q_{\Trajnoso_{0} | \bS_{0}}(\trajnoso_{0} \vbar \bs_{0})} \Bigg[ \sum_{{t} = 0}^\infty \left( \prod_{t'=1}^{t} \qdeltatzero{t'} \right) \Big(\rewardt{t} - \DKL{\policytdot{t}}{ \policypriortdot{t}} \Big) \Bigg],
\end{align}
where we defined
\begin{align}
    \rewardt{t} \defines \qdeltatone{t+1} \log \likelihoodt{t} - \DKL{\qdeltatdot{t+1}}{\pdeltatdot{t+1}},
\end{align}
which concludes the proof.
\end{proof}

\begin{customtheorem}{1}{(Outcome-Driven Variational Inference)}
\label{thm-app:gc_variational_problem}
    Let $q_{T}(t)$ and \mbox{$q_{\Trajnoso_{0:t} | T}(\trajnoso_{0:t} \vbar t, \bs_{0})$} be as defined in~\Cref{eq:warm-up-vi-family} and~\Cref{eq:qT-delta-form},
    and define
    \begin{align}
        V_{\text{{}}}^\pi(\bs_{t}, \bg; q_{T})
        &\defines \E_{\policyt{t}} \hspace{-1pt} \left[ Q_{\text{{}}}^{\pi}(\bs_{t}, \ba_{t}, \bg ; q_{T}) \right] - \mathbb{D}_{\emph{\textrm{KL}}}(\policytdot{t} \,\|\,\policypriortdot{t}),
        \label{eq-app:thm-v-defn}
        \\
        Q_{\text{{}}}^{\pi}(\bs_t, \ba_t, \bg; q_{T})
        &\defines
        \rewardt{t} +
        q(\Delta_{t+1} = 0)
        \E_{\dyn(\bs_{t+1} \vbar \bs_t, \ba_t)} \hspace{-1pt}\left[ V^\pi(\bs_{t+1}, \bg ; \pi, q_{T})
        \right],
        \label{eq-app:thm-q-defn}
        \\
        \rewardt{t}
        &\defines
        \qdeltatone{t+1} \log \goaltransitiont{t}
        - \mathbb{D}_{\emph{\textrm{KL}}}(\qdeltatdot{t+1} \,\|\, \pdeltatdot{t+1}).
    \end{align}
    Then given any initial state $\bs_{0}$ and outcome $\bg$,
    \begin{align*}
        \mathbb{D}_{\emph{\textrm{KL}}}( q_{\Trajnoso_{0:T}, T | \bS_{0}}( \cdot \vbar \bs_{0})
        \,\|\,
        p_{\Trajnoso_{0:T}, T | \bS_{0}, \bS_{T^\star}}( \cdot \vbar \bs_{0}, \bg)
        )
        =
        -\calF(\pi, q_{T}, \bs_0, \bg) + C
        =
        -V^\pi(\bs_0, \bg; q_{T}) + C,
    \end{align*}
    where $C \defines \log p(\bg \vbar \bs_{0})$ is independent of $\pi$ and $q_{T}$, and hence maximizing $V^\pi(\bs_{0}, \bg ; \pi, q_{T})$ is equivalent to
    minimizing~\Cref{eq:gc_kl_minimization}.
    In other words,
    \begin{align*}
        &\argmin_{\pi \in \Pi, q_{T} \in \calQ_{T}} \{
        \mathbb{D}_{\emph{\textrm{KL}}}( q_{\Trajnoso_{0:T}, T | \bs_{0}}( \cdot \vbar \bs_{0})
        \,\|\,
        p_{\Trajnoso_{0:T}, T | \bS_{0}, \bS_{T^\star}}( \cdot \vbar \bs_{0}, \bg)
        )
        \}
        \\
        =
        &\argmax_{\pi \in \Pi, q_{T} \in \calQ_{T}}\calF(\pi, q_{T}, \bs_0, \bg)
        \\
        =
        &\argmax_{\pi \in \Pi, q_{T} \in \calQ_{T}} V^\pi(\bs_0, \bg; q_{T}).
    \end{align*}
\end{customtheorem}
\begin{proof}
Consider the objective derived in Proposition~4,
\begin{align}
\begin{split}
    &\calF(\pi, q_{T}, \bs_{0}, \bg)
    \\
    &=
    \E_{q_{\Trajnoso_{0} | \bS_{0}}(\trajnoso_{0} \vbar \bs_{0})} \Bigg[ \sum_{{t} = 0}^\infty \left( \prod_{t'=1}^{t} \qdeltatzero{t'} \right)
    \\
    &\qquad
    \cdot \underbrace{\left(\qdeltatone{t+1} \log \likelihoodt{t} - \DKL{\qdeltatdot{t+1}}{\pdeltatdot{t+1}} \right)}_{\defines \, \rewardt{t}} - \DKL{ \policyt{t}}{ p(\ba_{{t}} | \bs_{{t}})} \Bigg],
\end{split}
\end{align}
and recall that, by~\Cref{prop:gc_optimal_variational_distribution_T},
\begin{align}
    \mathbb{D}_{{\textrm{KL}}}( q_{\Trajnoso_{0:T}, T | \bS_{0}}( \cdot \vbar \bs_{0})
        \,\|\,
        p_{\Trajnoso_{0:T}, T | \bS_{0}, \bS_{T^\star}}( \cdot \vbar \bs_{0}, \bg)
        )
        =
        -\calF(\pi, q_{T}, \bs_0, \bg) + \log p(\bg | \bs_{0}).
\end{align}
Therefore, to prove the result that
\begin{align*}
    \mathbb{D}_{{\textrm{KL}}}( q_{\Trajnoso_{0:T}, T | \bS_{0}}( \cdot \vbar \bs_{0})
    \,\|\,
    p_{\Trajnoso_{0:T}, T | \bS_{0}, \bS_{T^\star}}( \cdot \vbar \bs_{0}, \bg)
    )
    =
    -V^\pi(\bs_0, \bg; q_{T}) + \log p(\bg | \bs_{0}),
\end{align*}
we just need to show that \mbox{$\calF(\pi, q_{T}, \bs_0, \bg) = V_{\text{{}}}^\pi(\bs_{0}, \bg; q_{T})$} for $V_{\text{{}}}^\pi(\bs_{0}, \bg; q_{T})$ as defined in the theorem.
To do so, we start from the objective $\calF(\pi, q_{T}, \bs_0, \bg)$ and  and unroll it for $t=0$:
\begin{align}
\begin{split}
    &\calF(\pi, q_{T}, \bs_{0}, \bg)
    \\
    &=
    \E_{q_{\Trajnoso_{0} | \bS_{0}}(\trajnoso_{0} \vbar \bs_{0})} \Bigg[ \sum_{{t} = 0}^\infty \left( \prod_{t'=1}^{t} \qdeltatzero{t'} \right) \rewardt{t} - \DKL{ \policyt{t}}{ p(\ba_{{t}} | \bs_{{t}})} \Bigg]
    \end{split}
    \\
    \begin{split}
    &=
    \E_{\policyt{0}} \Bigg[\rewardt{0} +  \E_{q(\btau_{1} \vbar \bs_{0}, \ba_{0})} \Bigg[ \sum_{t = 1}^\infty  \prod_{t'=1}^{t} \qdeltatzero{t'} \Big( \rewardt{t}
    \\
    &\qquad
    - \DKL{ \policytdot{t}}{\policypriortdot{t}} \Big)
    \Bigg] \Bigg] - \DKL{ \policytdot{0}}{\policypriortdot{0}}.
    \end{split}
\end{align}
With this expression at hand, we now define
\begin{align}
\begin{split}
\label{eq-app:q-function_sum}
    &Q_{\text{{sum}}}^{\pi}(\bs_{{{0}}}, \ba_{{{0}}}, \bg; q_{T})
    \\
    &
    \defines
    r(\bs_{{{0}}}, \ba_{{{0}}}, \bg ; q_{\Delta}) +   \E_{q(\btau_{1} | \bs_{0}, \ba_{0})} \Bigg[ \sum_{t = {{1}}}^\infty \prod_{t'=1}^{t} \qdeltatzero{t'} \Big( \rewardt{t} - \DKL{\policytdot{t}}{\policypriortdot{t}} \Big)
    \Bigg],
\end{split}
\end{align}
and note that
\mbox{$\calF(\pi, q_{T}, \bs_{0}, \bg) = \E_{\policyt{0}} [Q_{\text{{sum}}}^{\pi}(\bs_{0}, \ba_{0}, \bg; q_{T})] - \mathbb{D}_{{\textrm{KL}}}(\policytdot{0} \,\|\,\policypriortdot{0}) = V_{\text{{}}}^\pi(\bs_{0}, \bg; q_{T})$}, as per the definition of $V_{\text{{}}}^\pi(\bs_{0}, \bg; q_{T})$.
To prove the theorem from this intermediate result, we now have to show that $Q_{\text{{sum}}}^{\pi}(\bs_{0}, \ba_{0}, \bg; q_{T})$ as defined in~\Cref{eq-app:q-function_sum} can in fact be expressed recursively as \mbox{$Q_{\text{{sum}}}^{\pi}(\bs_t, \ba_t, \bg; q_{T}) = Q_{\text{{}}}^{\pi}(\bs_{{{0}}}, \ba_{{{0}}}, \bg; q_{T})$} with
\begin{align}
    Q_{\text{{}}}^{\pi}(\bs_{{{0}}}, \ba_{{{0}}}, \bg; q_{T})
        &=
        \rewardt{t} +
        q(\Delta_{t+1} = 0)
        \E_{\dyn(\bs_{t+1} \vbar \bs_t, \ba_t)} \hspace{-1pt}\left[ V^\pi(\bs_{t+1}, \bg ; \pi, q_{T})
        \right].
\end{align}
To see that this is the case, first, unroll $Q_{\text{{}}}^{\pi}(\bs_{{{0}}}, \ba_{{{0}}}, \bg; q_{T})$ for $t=1$,
\begin{align}
    \begin{split}
        &Q_{\text{{sum}}}^{\pi}(\bs_{{{0}}}, \ba_{{{0}}}, \bg; q_{T})
        \\
        &~~
        =
        r(\bs_{{{0}}}, \ba_{{{0}}}, \bg ; q_{\Delta}) + \E_{q(\btau_{1} | \bs_{0}, \ba_{0})} \Bigg[ \sum_{t = {{1}}}^\infty \prod_{t'=1}^{t} \qdeltatzero{t'} \Big( \rewardt{t} - \DKL{\policytdot{t}}{\policypriortdot{t}} \Big)
        \Bigg]
        \end{split}
        \\
        \begin{split}
        &~~
        =
        r(\bs_{{{0}}}, \ba_{{{0}}}, \bg ; q_{\Delta}) + \E_{p_{d}(\bs_{1} | \ba_{0}, \ba_{0})} \Bigg[ \E_{q(\btau_{1} | \bs_{0}, \ba_{0})} \Bigg[ \sum_{t = {{1}}}^\infty \prod_{t'=1}^{t} \qdeltatzero{t'} \Big( \rewardt{t}
        \\
        &\qquad
        - \DKL{\policytdot{t}}{\policypriortdot{t}} \Big)
        \Bigg] \Bigg]
        \end{split}
        \\
        \begin{split}
        &~~
        =
        r(\bs_{{{0}}}, \ba_{{{0}}}, \bg ; q_{\Delta}) + \E_{p_{d}(\bs_{1} | \ba_{0}, \ba_{0})} \Bigg[ \E_{\policyt{1}} \Bigg[ \qdeltatzero{1} \left( \rewardt{1} - \DKL{\policytdot{1}}{\policypriortdot{1}} \right)
        \\
        &~~\qquad
        + \E_{q(\btau_{2} | \bs_{1}, \ba_{1})} \Bigg[ \sum_{t = {{2}}}^\infty \prod_{t'=2}^{t} \qdeltatzero{t'} \Big( \rewardt{t} - \DKL{\policytdot{t}}{\policypriortdot{t}} \Big) \Bigg]
        \Bigg] \Bigg] ,
    \end{split}
    \end{align}
    and note that we can rearrange this expression to obtain the recursive relationship
    \begin{align}
    \begin{split}
        &Q_{\text{{sum}}}^{\pi}(\bs_{{{0}}}, \ba_{{{0}}}, \bg; q_{T})
        \\
        &~~
        =
        r(\bs_{{{0}}}, \ba_{{{0}}}, \bg ; q_{\Delta}) + \qdeltatzero{1} \E_{\transitiont{0}} \Bigg[ - \DKL{\policytdot{1}}{\policypriortdot{1}}
        \\
        &~~\qquad
        + \E_{\policyt{1}} \Bigg[ \rewardt{1} + \E \Bigg[ \sum_{t = {{2}}}^\infty \left( \prod_{t'=2}^t \qdeltatzero{t'} \right) \Big( \rewardt{t}
        \\
        &\qquad\qquad
        - \DKL{\policytdot{t}}{\policypriortdot{}} \Big) \Bigg] 
    \Bigg] \Bigg],
    \end{split}
    \end{align}
    where the innermost expectation is taken with respect to $q(\btau_{2} | \bs_{1}, \ba_{1})$.
    With this result and noting that
    \begin{align}
    \begin{split}
        &Q_{\text{{sum}}}^{\pi}(\bs_{{{1}}}, \ba_{{{1}}}, \bg; q_{T})
        \\
        &=
        \rewardt{1} + \E \Bigg[ \sum_{t = {{2}}}^\infty \left( \prod_{t'=2}^t \qdeltatzero{t'} \right) \Big( \rewardt{t} - \DKL{\policytdot{t}}{\policypriortdot{}} \Big) \Bigg],
    \end{split}
    \end{align}
    where the expectation is again taken with respect to $q(\btau_{2} | \bs_{1}, \ba_{1})$, we see that
    \begin{align}
    \begin{split}
        &Q_{\text{{sum}}}^{\pi}(\bs_{{{0}}}, \ba_{{{0}}}, \bg; q_{T})
        \\
        &~~=
        r(\bs_{{{0}}}, \ba_{{{0}}}, \bg ; q_{\Delta}) + \qdeltatzero{1}   \E_{\transitiont{0}} \Big[ \E_{\pi(\ba_{{{1}}} | \bs_{{{1}}})} \left[ Q_{\text{{sum}}}^{\pi}(\bs_{1}, \ba_{{{1}}}, \bg; q_{T}) \right]
        \\
        &\qquad
        - \DKL{\policytdot{1}}{\policypriortdot{1}} \Big]
    \end{split}
    \\
    \begin{split}
    &~~=
    r(\bs_{{{0}}}, \ba_{{{0}}}, \bg ; q_{\Delta}) + \qdeltatzero{1} \E_{\dyn(\bs_{1} | \bs, \ba)} \Big[ V^\pi_{\text{{}}}(\bs_{{{1}}} , \bg ; q_{T}) \Big] ,
    \end{split}
\end{align}
for $V_{\text{{}}}(\bs_{t+1}, \bg ; q_{T})$ as defined above, as desired.
In other words, we have that
\begin{align}
    \calF(\pi, q_{T}, \bs_{0}, \bg) = \E_{\policyt{0}} [Q_{\text{{sum}}}^{\pi}(\bs_{0}, \ba_{0}, \bg; q_{T})] - \mathbb{D}_{{\textrm{KL}}}(\policytdot{0} \,\|\,\policypriortdot{0}) = V_{\text{{}}}^\pi(\bs_{0}, \bg; q_{T}).
\end{align}

Combining this result with~\Cref{prop:gc_optimal_variational_distribution_T} and Proposition~4, we finally conclude that
    \begin{align}
        \mathbb{D}_{{\textrm{KL}}}( q_{\Trajnoso_{0:T}, T | \bS_{0}}( \cdot \vbar \bs_{0})
        \,\|\,
        p_{\Trajnoso_{0:T}, T | \bS_{0}, \bS_{T^\star}}( \cdot \vbar \bs_{0}, \bg)
        )
        =
        -\calF(\pi, q_{T}, \bs_0, \bg) + C
        =
        -V^\pi(\bs_0, \bg; q_{T}) + C,
    \end{align}
    where $C \defines \log p(\bg \vbar \bs_{0})$ is independent of $\pi$ and $q_{T}$.
    Hence, maximizing $V^\pi(\bs_{0}, \bg ; \pi, q_{T})$ is equivalent to
    minimizing the objective in~\Cref{eq:gc_kl_minimization}.
    In other words,
    \begin{align}
    \begin{split}
        &\argmin_{\pi \in \Pi, q_{T} \in \calQ_{T}} \{
        \mathbb{D}_{{\textrm{KL}}}( q_{\Trajnoso_{0:T}, T | \bS_{0}}( \cdot \vbar \bs_{0})
        \,\|\,
        p_{\Trajnoso_{0:T}, T | \bS_{0}, \bS_{T^\star}}( \cdot \vbar \bs_{0}, \bg)
        )
        \}
        \\
        =
        &\argmax_{\pi \in \Pi, q_{T} \in \calQ_{T}}\calF(\pi, q_{T}, \bs_0, \bg)
        =
        \argmax_{\pi \in \Pi, q_{T} \in \calQ_{T}} V^\pi(\bs_0, \bg; q_{T}).
    \end{split}
    \end{align}
    This concludes the proof.
\end{proof}

\begin{customcorollary}{2}{(Fixed-Discount Outcome-Driven Variational Inference)}
\label{cor-app:gc_variational_problem_naive}
    Let \mbox{$q_{T} = p_{T}$}, assume that $p_{T}$ is a Geometric distribution with parameter $\gamma \in (0,1)$.
    Then the inference problem in~\Cref{eq:gc_kl_minimization} of finding a goal-directed variational trajectory distribution simplifies to maximizing the following recursively defined variational objective with respect to $\pi$:
    \begin{align}
    \begin{split}
        \bar{V}^\pi_{\text{{}}}(\bs_{0} , \bg ; \gamma)
        &\defines
        \E_{\policyt{0}} \left[ Q_{\text{{}}}(\bs_{0}, \ba_{0}, \bg ; \gamma ) \right] - \mathbb{D}_{\emph{\textrm{KL}}}(\policytdot{0} \,\|\, \policypriortdot{}{0})),
    \end{split}
    \end{align}
    where
    \begin{align}
    \begin{split}
        &\bar{Q}^\pi_{\text{{}}}(\bs_{0}, \ba_{0}, \bg ; \gamma) \defines\,
        (1 - \gamma) \log \goaltransitiont{0} + \gamma \E_{\dyn(\bs_{1} | \bs_{0}, \ba_{0})} \big[ V_{\text{{}}}(\bs_{1}, \bg ; \gamma) \big].
    \end{split}
    \end{align}
\end{customcorollary}
\begin{proof}
    The result follows immediately when replacing $q_{\Delta}$ in~\Cref{thm:gc_variational_problem} by $p_{\Delta}$ and noting that \mbox{$\DKL{p_{\Delta}}{p_{\Delta}}=0$}.
\end{proof}

\subsection{Optimal Variational Posterior over $T$}
\label{appsec:alternative-posterior}

\begin{customproposition}{2}{(Optimal Variational Distribution over $T$)}
\label{prop-app:gc_optimal_variational_distribution_T}
    The optimal variational distribution $q_{T}^\star$ with respect to~\Cref{eq:odac_variational_objective} is defined recursively in terms of $q_{\Delta_{t+1}}^\star({\Delta_{t+1}}=0) \forall t \in \mathbb{N}_{0}$ by
    \begin{align}
    \begin{split}
    \label{eq-app:optimal-qt}
        q_{\Delta_{t+1}}^\star(\Delta_{t+1} = 0; \pi, Q^\pi)
        =
        \sigmoid\left(\hspace*{-1pt} \Lambda(\bs_{t}, \pi, q_{T}, Q^\pi) + \sigmoid^{-1}\hspace{-1pt}\left(\pdeltatzero{t+1} \right)
        \hspace*{-1pt}\right)\hspace*{-2pt},
    \end{split}
    \end{align}
    where
    \begin{align*}
    \Lambda(\bs_{t}, \pi, q_{T}, Q^\pi)
    \defines
    \E_{\policyt{t+1} \transitiont{t} \policyt{t}}[ Q_{\text{{}}}^{\pi}(\bs_{t+1}, \ba_{t+1}, \bg; q_{T}) - \log \goaltransitiont{t}]
    \end{align*}
    and $\sigmoid(\cdot)$ is the sigmoid function, that is, \mbox{$\sigmoid(x) = \frac{1}{e^{-x}+1}$} and \mbox{$\sigmoid^{-1}(x) = \log \frac{x}{1-x}$}.
\end{customproposition}
\begin{proof}
Consider $\calF(\pi, q_{T}, \bs_{0}, \bg)$:
\begin{align}
\begin{split}
    \calF(\pi, q_{T}, \bs_{t}, \bg)
    &=
    \E_{\pi(\ba_{t} | \bs_{t})}[Q_{\text{{}}}^{\pi}(\bs_{{{t}}}, \ba_{{{t}}}, \bg; q_{T})]
    \\
    &=
    \E_{\pi(\ba_{t} | \bs_{t})}[ r(\bs_{t}, \ba_{t} , \bg ; q_{\Delta}) + \qdeltatzero{t+1} \E \big[ V_{\text{{}}}(\bs_{t+1}, \bg ; q_{T}) \big] ].
\end{split}
\end{align}
Since the variational objective $\calF(\pi, q_{T}, \bs_{t}, \bg)$ can be expressed recursively as
\begin{align*}
    V_{\text{{}}}^\pi(\bs_{t}, \bg; q_{T})
        &\defines
        \E_{\policyt{t}} \hspace{-1pt} \left[ Q_{\text{{}}}(\bs_{t}, \ba_{t}, \bg ; q_{T}) \right] - \DKL{\policytdot{t}}{\policypriortdot{t}},
\end{align*}
with
\begin{align*}
    Q_{\text{{}}}^{\pi}(\bs_t, \ba_t, \bg; q_{T})
    &=
    \rewardt{t}
    +
    \qdeltatzero{t+1}
    \E_{\transitiont{t}} \hspace{-1pt}\left[ V^\pi(\bs_{t+1}, \bg ; q_{T})
    \right],
    \\
    \rewardt{t}
    &=
    \qdeltatone{t+1} \log \goaltransitiont{t} - \DKL{\qdeltatdot{t+1}}{\pdeltatdot{t+1}},
\end{align*}
and since $\DKL{\qdeltatdot{t+1}}{\pdeltatdot{t+1}}$ is strictly convex in $\qdeltatzero{t+1}$, we can find the globally optimal Bernoulli distribution parameters $\qdeltatzero{t+1}$ for all $t \in \mathbb{N}_{0}$ recursively.
That is, it is sufficient to solve the problem
\begin{align}
    q_{\Delta_{t+1}}^\star(\Delta_{t+1} = 0) \defines \argmax_{\qdeltatzero{t+1} }\left\{ \calF(\pi, q_{T}, \bs_0, \bg) \right\} = \argmax_{\qdeltatzero{t+1} }\left\{ \calF(\pi, q_{\Delta_1}, \dots, \qdeltatdot{t+1}, \dots, \bs_0, \bg) \right\}
\end{align}
for a fixed $t+1$.
To do so, we take the derivative of $\calF(\pi, q_{\Delta_1}, \dots, \qdeltatdot{t+1}, \dots, \bs_0, \bg)$, which---defined recursively---is given by
\begin{align}
\begin{split}
    &\E_{\policyt{t}} \hspace{-1pt} \left[ Q_{\text{{}}}(\bs_{t}, \ba_{t}, \bg ; q_{T} \right] - \DKL{\policytdot{t}}{\policypriortdot{t}}
    \\
    =&
    \E_{\policyt{t}} \hspace{-1pt} \left[ \rewardt{t}
    +
    \qdeltatzero{t+1}
    \E_{\transitiont{t}} \hspace{-1pt}\left[ V^\pi(\bs_{t+1}, \bg ; q_{T})
    \right]
    \right]
    \\
    &\qquad
    \qquad
    - \DKL{\policytdot{t}}{\policypriortdot{t}}
    \end{split}
    \\
    \begin{split}
    =&
    \E_{\policyt{t}} \hspace{-1pt} \bigg[ \qdeltatone{t+1} \log \goaltransitiont{t} - \DKL{\qdeltatdot{t+1}}{\pdeltatdot{t+1}}
    \\
    &\qquad\qquad
    + \qdeltatzero{t+1}
    \E_{\transitiont{t}} \hspace{-1pt}\left[ V^\pi(\bs_{t+1}, \bg ; q_{T})
    \right]
    \bigg] - \DKL{\policytdot{t}}{\policypriortdot{t}}
    \end{split}
    \\
    \begin{split}
    =&
    \E_{\policyt{t}} \hspace{-1pt} \bigg[ ( 1 - \qdeltatzero{t+1} ) \log \goaltransitiont{t} - \DKL{\qdeltatdot{t+1}}{\pdeltatdot{t+1}}
    \\
    &\qquad\qquad
    + \qdeltatzero{t+1}
    \E_{\transitiont{t}} \hspace{-1pt}\left[ V^\pi(\bs_{t+1}, \bg ; q_{T})
    \right]
    \bigg] - \DKL{\policytdot{t}}{\policypriortdot{t}},
    \end{split}
\end{align}
with respect to $\qdeltatzero{t+1}$ and set it to zero, which yields
\begin{align}
\begin{split}
    0
    &=
    - \E_{\policyt{t}} \hspace{-1pt} \left[ \log \likelihoodt{t} + \E_{\policyt{t+1} \transitiont{t}} [Q_{\text{{}}}^{\pi}(\bs_{t+1}, \ba_{t+1}, \bg; q_{T})] \right]
    \\
    &\qquad\quad  + \log \frac{1-q_{\Delta_{t+1}}^\star(\Delta_{t+1} = 0)}{1-\pdeltatzero{t+1}} - \log \frac{q_{\Delta_{t+1}}^\star(\Delta_{t+1} = 0)}{\pdeltatzero{t+1}}.
\end{split}
\end{align}

Rearranging, we get
\begin{align}
\begin{split}
    \frac{q_{\Delta_{t+1}}^\star(\Delta_{t+1} = 0)}{1 - q_{\Delta_{t+1}}^\star(\Delta_{t+1} = 0)}
    =
    \exp\left( \E [Q_{\text{{}}}^{\pi}(\bs_{t+1}, \ba_{t+1}, \bg; q_{T}) - \log \likelihoodt{t} ] + \log \frac{\pdeltatzero{t+1}}{1 - \pdeltatzero{t+1}} \right),
\end{split}
\end{align}
where the expectation is taken with respect to $\policyt{t+1} \transitiont{t}\policyt{t}$ and the $Q$-function depends on $q(\Delta_{t'})$ with $t' > t$, but not on \mbox{$q_{\Delta_{t+1}}^\star(\Delta_{t+1} = 0)$}.
Solving for \mbox{$q_{\Delta_{t+1}}^\star(\Delta_{t+1} = 0)$}.
Solving for $q_{\Delta_{t+1}}^\star(\Delta_{t+1} = 0)$, we obtain
\begin{align}
    \begin{split}
    &q_{\Delta_{t+1}}^\star(\Delta_{t+1} = 0)
    \\
    &~~
    = \frac{\exp(\E_{p_{\pi p_{d}} \policyt{t}} [Q_{\text{{}}}^{\pi}(\bs_{t+1}, \ba_{t+1}, \bg; q_{T}) - \log \likelihoodt{t} ] + \log \frac{\pdeltatzero{t+1}}{1 - \pdeltatzero{t+1}})}{1 + \exp(\E_{p_{\pi p_{d}} \policyt{t}} [Q_{\text{{}}}^{\pi}(\bs_{t+1}, \ba_{t+1}, \bg; q_{T}) - \log \likelihoodt{t} ] + \log \frac{\pdeltatzero{t+1}}{1 - \pdeltatzero{t+1}})}
    \end{split}
    \\
    &~~
    =\sigmoid\Big(\E_{p_{\pi p_{d}}} [ Q_{\text{{}}}^{\pi}(\bs_{t+1}, \ba_{t+1}, \bg; q_{T})] - 
    \E_{\policyt{t}}[\log \likelihoodt{t}] + \sigmoid^{-1}\left(\pdeltatzero{t+1}\right)\Big),
\end{align}
where $p_{\pi p_{d}} \defines \policyt{t+1} \transitiont{t}$, $\sigmoid(\cdot)$ is the sigmoid function with \mbox{$\sigmoid(x) = \frac{1}{e^{-x}+1}$} and \mbox{$\sigmoid^{-1}(x) = \log \frac{x}{1-x}$}.
This concludes the proof.
\end{proof}
\begin{remark}
As can be seen from~\Cref{prop:gc_optimal_variational_distribution_T}, the optimal approximation to the posterior over $T$ trades off short-term rewards via $\E_{\policyt{t}} [r(\bs_{t}, \ba_{t}, \bg; q_{\Delta})]$, long-term rewards via $\E_{\policyt{t+1} \transitiont{t}} [Q_{\text{{}}}^{\pi}(\bs_{t+1}, \ba_{t+1}, \bg; q_{T})]$, and the prior log-odds of not achieving the outcome at a given point in time conditioned on the outcome not having been achieved yet, $\frac{\pdeltatzero{t+1}}{1-\pdeltatzero{t+1}}$.
\end{remark}

\subsection{Outcome-Driven Policy Iteration}
\label{appsec:proof_policy_iteration_theorem}

\begin{customtheorem}{2}{(Variational Outcome-Driven Policy Iteration)}
    Assume $|\calA| <\infty$ and that the MDP is ergodic.
    \begin{enumerate}[leftmargin=15pt]
    \item Outcome-Driven Policy Evaluation (ODPE): Given policy $\pi$ and a function $Q^{0}: \calS \times \calA \times \calS \rightarrow \R$, define $Q^{i+1}_{\text{{}}} = \calT^{\pi} Q^{i}_{\text{{}}}$. Then the sequence $Q^{i}_{\text{{}}}$ converges to the lower bound in~\Cref{thm:gc_variational_problem}.
    \item Outcome-Driven Policy Improvement (ODPI): The policy
    \begin{align}
        \pi^+
        = \argmax_{\pi' \in \Pi} \left\{ \E_{\pi'(\ba_{t} \vbar \bs_{t})} \left[ Q^{\pi}(\bs_{t}, \ba_{t}, \bg; q_{T}) \right] - \mathbb{D}_{\emph{\textrm{KL}}}(\pi'( \cdot \vbar \bs_{t}) \,||\, \policypriortdot{t}) \right\}
    \end{align}
    and the variational distribution over $T$ recursively defined in terms of
    \begin{align}
    \begin{split}
        &q^+(\Delta_{t+1} = 0 \vbar \bs_{0} ; \pi, Q^\pi)
        \\
        &~~=
        \sigmoid\Big( \E_{\policyt{t+1} \transitiont{t}}[ Q_{\text{{}}}^{\pi}(\bs_{t+1}, \ba_{t+1}, \bg; q_{T}) ] - \E_{\policyt{t}}[ \log \likelihoodt{t} ]
        \\
        &\qquad\qquad
        + \sigmoid^{-1}\left(\pdeltatzero{t+1}\right) \Big)
    \end{split}
    \end{align}
    improve the variational objective.
    In other words, $\calF(\pi^+, q_T, \bs_0) \geq \calF(\pi, q_T, \bs_0)$ and $\calF(\pi, q_T^+, \bs_0) \geq \calF(\pi, q_T, \bs_0)$ for all $\bs_0 \in \calS$.
    \item Alternating between ODPE and ODPI converges to a policy $\pi^\star$ and a variational distribution over $T$, $q_{T}^\star$, such that $\Qpgc^{\pi^\star}(\bs, \ba, \bg; q_{T}^\star) \geq \Qpgc^{\pi}(\bs, \ba, \bg; q_{T})$ for all $(\pi, q_{T}) \in \Pi \times \calQ_{T}$ and any \mbox{$(\bs, \ba) \in \calS \times \calA$}.
    \end{enumerate}
\end{customtheorem}
\begin{proof}
Parts of this proof are adapted from the proof given in~\citet{haarnoja2018sac}, modified for the Bellman operator proposed in~\Cref{def:gc_bellman_operator}.
\begin{enumerate}[leftmargin=15pt]
\item Outcome-Driven Policy Evaluation (ODPE):
Instead of absorbing the entropy term into the $Q$-function, we can define an entropy-augmented reward as
\begin{align}
\begin{split}
    r^\pi(\bs_{t}, \ba_{t}, \bg; q_{\Delta}) &\defines \qdeltatone{t+1} \log \goaltransitiont{t} - \DKL{\qdeltatdot{t+1}}{\pdeltatdot{t+1}}
    \\
    &\qquad
    + \qdeltatzero{t+1} \E_{\transitiont{t}} [ \DKL{\policytdot{t+1}}{\policypriortdot{t+1}} ] .
\end{split}
\end{align}
We can then write an update rule according to~\Cref{def:gc_bellman_operator} as
\begin{align}
\begin{split}
    \tilde{Q}(\bs_{t}, \ba_{t}, \bg; q_{T}) &\leftarrow r^\pi(\bs_{t}, \ba_{t}, \bg; q_{\Delta}) \\
    &\qquad
    + \qdeltatzero{t+1} \E_{\policyt{t+1} \transitiont{t}} [\tilde{Q}(\bs_{t+1}, \ba_{t+1}, \bg; q_{T})],
\end{split}
\end{align}
where $\qdeltatzero{t+1} \leq 1$.
This update is similar to a Bellman update~\citep{sutton1998rl}, but with a discount factor given by $\qdeltatzero{t+1}$.
In general, this discount factor $\qdeltatzero{t+1}$ can be computed dynamically based on the current state and action, such as in~\Cref{eq:optimal-qt}.
As discussed in~\citet{white2017unifying}, this Bellman operator is still a contraction mapping so long as the Markov chain induced by the current policy is ergodic and there exists a state such that $\qdeltatzero{t+1} < 1$. 
The first condition is true by assumption.
The second condition is true since $\qdeltatzero{t+1}$ is given by~\Cref{eq:optimal-qt}, which is always strictly between $0$ and $1$.
Therefore, we apply convergence results for policy evaluation with transition-dependent discount factors~\citep{white2017unifying} to this contraction mapping, and the result immediately follows.

\item
Outcome-Driven Policy Improvement (ODPI):
Let $\pi_{\text{old}} \in \Pi$ and let $Q^{\pi_{\text{old}}}$ and $V^{\pi_{\text{old}}}$ be the outcome-driven state and state-action value functions from~\Cref{def:gc_bellman_operator}, let $q_{T}$ be some variational distribution over $T$, and let $\pi_{\text{new}}$ be given by
\begin{align}
    \pi_{\text{new}}(\ba_{t} | \bs_{t})
    &= \argmax_{\pi' \in \Pi} \left\{ \E_{\pi'(\ba_{t} \vbar \bs_{t})} \left[ Q^{\pi_{\text{old}}}(\bs_{t}, \ba_{t}, \bg; q_{T}) \right] - \DKL{\pi'( \cdot \vbar \bs_{t})}{\policypriortdot{t}} \right\}
    \\
    &= \argmax_{\pi' \in \Pi} \calJ_{\pi_{\text{old}}}(\pi'(\ba_{t}, \bs_{t}), q_{T}).
\end{align}
Then, it must be true that \mbox{$\calJ_{\pi_{\text{old}}}(\pi_{\text{old}}(\ba_{t} | \bs_{t}); q_{T}) \leq \calJ_{\pi_{\text{old}}}(\pi_{\text{new}}(\ba_{t} | \bs_{t}); q_{T})$}, since one could set \mbox{$\pi_{\text{new}} = \pi_{\text{old}} \in \Pi$}.
Thus,
\begin{align}
\begin{split}
    &\E_{\pi_{\text{new}}(\ba_{t} | \bs_{t})} \left[ Q^{\pi_{\text{old}}}(\bs_{t}, \ba_{t}, \bg; q_{T}) \right] - \DKL{\pi_{\text{new}}( \cdot \vbar \bs_{t})}{\policypriortdot{t}}
    \\
    &\qquad \geq \E_{\pi_{\text{old}}(\ba_{t} | \bs_{t})} \left[ Q^{\pi_{\text{old}}}(\bs_{t}, \ba_{t}, \bg; q_{T}) \right] - \DKL{\pi_{\text{old}}( \cdot \vbar \bs_{t})}{\policypriortdot{t}},
\end{split}
\end{align}
and since
\begin{align}
    V^{\pi_{\text{old}}}(\bs_{t}, \bg; q_{T}) = \E_{\pi_{\text{old}}(\ba_{t} | \bs_{t})} \left[ Q^{\pi_{\text{old}}}(\bs_{t}, \ba_{t}, \bg; q_{T}) \right] - \DKL{\pi_{\text{old}}( \cdot \vbar \bs_{t})}{\policypriortdot{t}},
\end{align}
we get
\begin{align}
\label{eq-app:bound_Q}
    \E_{\pi_{\text{new}}(\ba_{t} | \bs_{t})} \left[ Q^{\pi_{\text{old}}}(\bs_{t}, \ba_{t}, \bg; q_{T}) \right] - \DKL{\pi_{\text{new}}( \cdot \vbar \bs_{t})}{\policypriortdot{t}} \geq V^{\pi_{\text{old}}}(\bs_{t}, \bg; q_{T}).
\end{align}
We can now write the Bellman equation as
\begin{align}
    \begin{split}
    Q^{\pi_{\text{old}}}&(\bs_{t}, \ba_{t}, \bg; q_{T})
    \\
    &= \qdeltatone{t+1} \log \goaltransitiont{t} + \qdeltatzero{t+1} \E_{\transitiont{t}} [ V^{\pi_{\text{old}}}(\bs_{t+1}, \bg; q_{T}) ]
    \end{split}
    \\
    \begin{split}
    &\leq \qdeltatone{t+1} \log \goaltransitiont{t}
    \\
    &\qquad
    + \qdeltatzero{t+1} \E_{p(\bs_{t'} | \bs_{t}, \ba_{t})} [  \E_{\pi_{\text{new}}(\ba_{t'} | \bs_{t'})} \left[ Q^{\pi_{\text{old}}}(\bs_{t'}, \ba_{t'}, \bg; q_{T}) \right]
    \\
    &\qquad\qquad
    - \DKL{\pi_{\text{new}}( \cdot \vbar \bs_{t'})}{\policypriortdot{t'}} ],
    \end{split}
    \\
    &\,\,\,\vdots \nonumber
    \\
    &\leq Q^{\pi_{\text{new}}}(\bs_{t}, \ba_{t}, \bg; q_{T})
\end{align}
where we defined $t' \defines t+1$, repeatedly applied the Bellman backup operator defined in~\Cref{def:gc_bellman_operator} and used the bound in~\Cref{eq-app:bound_Q}.
Convergence follows from Outcome-Driven Policy Evaluation above.

\item
Locally Optimal Variational Outcome-Driven Policy Iteration:
Define $\pi^{i}$ to be a policy at iteration $i$.
By ODPI for a given $q_{T}$, the sequence of state-action value functions $\{ Q^{\pi^{i}}(q_{T}) \}_{i=1}^\infty$ is monotonically increasing in $i$.
Since the reward is finite and the negative \kld
is upper bounded by zero, $Q^{\pi}(q_{T})$ is upper bounded for $\pi \in \Pi$ and the sequence $\{ \pi^{i} \}_{i=1}^\infty$ converges to some $\pi^\star$.
To see that $\pi^\star$ is an optimal policy, note that it must be the case that $\calJ_{\pi^\star}(\pi^\star(\ba_{t} | \bs_{t}); q_{T}) > \calJ_{\pi^\star}(\policyt{t}; q_{T})$ for any $\pi \in \Pi$ with $\pi \neq \pi^\star$.
By the argument used in ODPI above, it must be the case that the outcome-driven state-action value of the converged policy is higher than that of any other non-converged policy in $\Pi$, that is, $Q^{\pi^\star}(\bs_{t}, \ba_{t}; q_{T}) > Q^{\pi}(\bs_{t}, \ba_{t}; q_{T})$ for all \mbox{$\pi \in \Pi$} and any $q_{T}^i \in \calQ_{T}$ and \mbox{$(\bs, \ba) \in \calS \times \calA$}.
Therefore, given $q_{T}$, $\pi^\star$ must be optimal in $\Pi$, which concludes the proof.

\item
Globally Optimal Variational Outcome-Driven Policy Iteration:
Let $\pi^{i}$ be a policy and let $q_T^i$ be variational distributions over $T$ at iteration $i$.
By Locally Optimal Variational Outcome-Driven Policy Iteration, for a \emph{fixed} $q_{T}^i$ with \mbox{$q_{T}^i = q_{T}^j \forall i,j\in \mathbb{N}_{0}$}, the sequence of $\{(\pi^i, q_T^i)\}_{i=1}^\infty$ increases the objective~\Cref{app-eq:gc_elbo_upper} at each iteration and converges to a stationary point in $\pi^i$, where $Q^{\pi^\star}(\bs_{t}, \ba_{t}; q_{T}^i) > Q^{\pi}(\bs_{t}, \ba_{t}; q_{T}^i)$ for all \mbox{$\pi \in \Pi$} and any $q_{T}^i \in \calQ_{T}$ and \mbox{$(\bs, \ba) \in \calS \times \calA$}.
Since the objective in~\Cref{app-eq:gc_elbo_upper} is concave in $q_{T}$, it must be the case that for, $q_{T}^{\star^i} \in \calQ_{T}$, the optimal variational distribution over $T$ at iteration $i$, defined recursively by
\begin{align*}
    q^{\star^i}(\Delta_{t+1}
    = 0 ; \pi^i, Q^{\pi^i})
    =
    \sigmoid \Big(
    & \E_{\policyt{t+1} \transitiont{t}}[ Q_{\text{{}}}^{\pi^i}(\bs_{t+1}, \ba_{t+1}, \bg; q_{T}(\pi^i, Q^{\pi^i})) ]
    \\
    &\qquad
    - \E_{\policyt{t}}[ \log \likelihoodt{t} ]
    + \sigmoid^{-1}(\pdeltatzero{t+1}
    \Big),
\end{align*}
for $t \in \mathbb{N}_{0}$, $Q^{\pi}(\bs_{t}, \ba_{t}; q_{T}^\star) > Q^{\pi}(\bs_{t}, \ba_{t}; q_{T})$ for all \mbox{$\pi \in \Pi$} and any \mbox{$(\bs, \ba) \in \calS \times \calA$}.
Note that $q_{T}$ is defined implicitly in terms of $\pi^i$ and $Q^{\pi^i}$, that is, the optimal variational distribution over $T$ at iteration $i$ is defined as a function of the policy and $Q$-function at iteration $i$.
Hence, it must then be true that for $Q^{\pi^\star}(\bs_{t}, \ba_{t}; q_{T}^\star) > Q^{\pi^\star}(\bs_{t}, \ba_{t}; q_{T})$ for all $q_{T}^\star(\pi^\star, Q^{\pi^\star}) \in \calQ_{T}$ and for any \mbox{$\pi^\star \in \Pi$} and \mbox{$(\bs, \ba) \in \calS \times \calA$}.
In other words, for an optimal policy and corresponding $Q$-function, there exists an optimal variational distribution over $T$ that maximizes the $Q$-function, given the optimal policy.
Repeating locally optimal variational outcome-driven policy iteration under the new variational distribution $q_{T}^\star(\pi^\star, Q^{\pi^\star})$ will yield an optimal policy $\pi^{\star\star}$ and computing the corresponding optimal variational distribution, $q_{T}^{\star\star}(\pi^{\star\star}, Q^{\pi^{\star\star}})$ will further increase the variational objective such that for $\pi^{\star\star}) \in \Pi$ and $q_{T}^{\star\star}(\pi^{\star\star}, Q^{\pi^{\star\star}}) \in \calQ_{T}$, we have that
\begin{align}
     Q^{\pi^{\star\star}}(\bs_{t}, \ba_{t}; q_{T}^{\star\star}) > Q^{\pi^{\star\star}}(\bs_{t}, \ba_{t}; q_{T}^{\star}) > Q^{\pi^\star}(\bs_{t}, \ba_{t}; q_{T}^\star) > Q^{\pi^\star}(\bs_{t}, \ba_{t}; q_{T})
\end{align}
for any \mbox{$\pi^\star \in \Pi$} and \mbox{$(\bs, \ba) \in \calS \times \calA$}.
Hence, global optimal variational outcome-driven policy iteration increases the variational objective at every step.
Since the objective is upper bounded (by virtue of the rewards being finite and the negative \kld being upper bounded by zero) and the sequence of $\{(\pi^i, q_T^i)\}_{i=1}^\infty$ increases the objective~\Cref{app-eq:gc_elbo_upper} at each iteration, by the monotone convergence theorem, the objective value converges to a supremum and since the objective function is concave the supremum is unique.
Hence, since the supremum is unique and obtained via global optimal variational outcome--driven policy iteration on $(\pi, q_{T}) \in \Pi \times \calQ_{T}$, the sequence of $\{(\pi^i, q_T^i)\}_{i=1}^\infty$ converges to a unique stationary point $(\pi^\star, q_{T}^\star) \in \Pi \times \calQ_{T}$, where $Q^{\pi^\star}(\bs_{t}, \ba_{t}; q_{T}^\star) > Q^{\pi}(\bs_{t}, \ba_{t}; q_{T}^i)$ for all \mbox{$\pi \in \Pi$} and any $q_{T}^i \in \calQ_{T}$ and \mbox{$(\bs, \ba) \in \calS \times \calA$}.
\end{enumerate}
\end{proof}

\begin{customcorollary}{3}{(Optimality of Variational Outcome Driven Policy Iteration)}
Variational Outcome-Driven Policy Iteration on $(\pi, q_{T}) \in \Pi \times \calQ_{T}$ results in an optimal policy at least as good or better than any optimal policy attainable from policy iteration on $\pi \in \Pi$ alone.
\end{customcorollary}

\begin{remark}
The convergence proof of ODPE assumes a transition-dependent discount factor~\citep{white2017unifying}, because the variational distribution used in~\Cref{eq:optimal-qt} depends on the next state and action as well as on the desired outcome.
\end{remark}

\subsection{Lemmas}

\begin{lemma}
\label{lemma-app:survival}
Let $q(T=t) \defines q(T = t | T \geq t) \prod_{i=1}^{t} q(T \neq i-1 | T \geq i-1)$ be a discrete probability distribution with support $\mathbb{N}_{0}$.
Then for any $t \in \mathbb{N}_{0}$, we have that 
\begin{align}
    q(T \geq t)
    =
    \sum_{i=t}^{\infty} q(T = i | T \geq i) \prod_{j=1}^{i} q(T \neq j-1 | T \geq j-1)
    =
    \prod_{i=1}^{t} q(T \neq i-1 | T \geq i-1).
\end{align}
\end{lemma}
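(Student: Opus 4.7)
The plan is to prove the two equalities separately. The first is essentially bookkeeping from the definition, while the second is a straightforward induction on $t$ that exploits a telescoping identity for the tail probability.

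For the first equality, I would observe that by the law of total probability for a distribution with support in $\mathbb{N}_0$,
\[
    q(T \geq t) = \sum_{i=t}^{\infty} q(T = i),
\]
and then substitute the given factorized definition of $q(T = i)$, namely $q(T=i) = q(T = i \vbar T \geq i) \prod_{j=1}^{i} q(T \neq j-1 \vbar T \geq j-1)$. This yields the claimed sum expression directly, with no further manipulation needed.

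For the second equality, I would proceed by induction on $t$. The base case $t = 0$ is trivial since $q(T \geq 0) = 1$ by the support assumption, and the empty product on the right is also $1$ by convention. For the inductive step, assuming the identity at $t$, I would use the decomposition
\[
    q(T \geq t+1) = q(T \geq t) - q(T = t),
\]
together with the fact that from the definition $q(T = t) = q(T = t \vbar T \geq t) \cdot q(T \geq t)$ (since the product $\prod_{j=1}^{t} q(T \neq j-1 \vbar T \geq j-1)$ equals $q(T \geq t)$ by the inductive hypothesis). This gives
\[
    q(T \geq t+1) = q(T \geq t) \cdot \bigl( 1 - q(T = t \vbar T \geq t) \bigr) = q(T \geq t) \cdot q(T \neq t \vbar T \geq t),
\]
and applying the inductive hypothesis to $q(T \geq t)$ yields the desired product over $i = 1, \dots, t+1$.

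The only potentially subtle step is verifying that the telescoping uses the definition of $q(T=t)$ consistently with the inductive hypothesis, i.e., that $q(T=t) = q(T=t\vbar T \geq t) \cdot \prod_{j=1}^{t} q(T \neq j-1 \vbar T \geq j-1)$ can indeed be rewritten as $q(T=t\vbar T \geq t) \cdot q(T \geq t)$ at each inductive step. Since this is exactly what the inductive hypothesis provides, the argument closes without difficulty; there is no real obstacle beyond careful indexing.
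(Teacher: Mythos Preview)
Your proposal is correct and follows essentially the same approach as the paper: both prove the product formula by induction on $t$, with the base case handled by the empty product and the inductive step obtained by peeling off the $t$-th term from the tail sum and factoring out $q(T\geq t)$. Your presentation is slightly more streamlined---you use $q(T\geq t+1)=q(T\geq t)-q(T=t)$ directly, whereas the paper writes out the corresponding sums explicitly before subtracting---but the underlying telescoping argument is identical.
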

\begin{proof}
We proof the statement by induction on $t$.

\underline{Base case}: For $t=0$, $q(T \geq 0) = 1$ by definition of the empty product.

\underline{Inductive case}:
Note that $q(T \leq t) = \prod_{i=1}^{t} q(T = i-1 | T \geq i-1)$.
Show that
\begin{align}
    q(T \geq t) = \prod_{i=1}^{t} q(T \neq i-1 | T \geq i-1) \Longrightarrow q(T \geq t+1) = \prod_{i=1}^{t+1} q(T \neq i-1 | T \geq i-1).
\end{align}
Consider $q(T \geq t+1) = \sum_{i=t+1}^{\infty} q(T = i | T \geq i) \prod_{j=1}^{i} q(T \neq j-1 | T \geq j-1)$.
To proof the inductive hypothesis, we need to show that the following equality is true:
\begin{align}
    &\sum_{i=t+1}^{\infty} q(T = i | T \geq i) \prod_{j=1}^{i} q(T \neq j-1 | T \geq j-1) =  \prod_{i=1}^{t+1} q(T \neq i-1 | T \geq i-1)
    \\
    \begin{split}
    \Longleftrightarrow
    &\sum_{i=t}^\infty q(T = i | T \geq i) \prod_{j=1}^{i} q(T \neq j-1 | T \geq j-1) - q(T = t | T \geq t) \prod_{j=1}^{t} q(T \neq j-1 | T \geq j-1)
    \\
    &= q(T \neq t | T \geq t) \prod_{i=1}^{t} q(T \neq i-1 | T \geq i-1).
    \label{eq:survival_inductive_case_part1}
    \end{split}
    \end{align}
    By the inductive hypothesis, \begin{align}
        q(T \geq t) = \sum_{i=t}^\infty q(T = i | T \geq i) \prod_{j=1}^{i} q(T \neq j-1 | T \geq j-1) = \prod_{i=1}^{t} q(T \neq i-1 | T \geq i-1),
    \end{align}    
    and so
    \begin{align}
    \textrm{\Cref{eq:survival_inductive_case_part1}} \Longleftrightarrow
    &\prod_{j=1}^{t} q(T \neq j | T \geq j) - q(T \neq t+1 | T \geq t+1) \prod_{j=1}^{t} q(T = j | T \geq j)
    \\
    \qquad &=
    q(T \neq t | T \geq t) \prod_{i=1}^{t} q(T \neq i-1 | T \geq i-1).
    \end{align}
    Factoring out $\prod_{i=1}^{t} q(T \neq i-1 | T \geq i-1)$, we get
    \begin{align}
    \Longleftrightarrow
    \prod_{j=1}^{t} q(T \neq j-1 | T \geq j-1) \underbrace{\left( 1 - q(T = t | T \geq t) \right)}_{=q(T \neq t | T \geq t)} &= q(T \neq t | T \geq t) \prod_{j=1}^{t} q(T = j-1 | T \geq j-1)
    \\
    \Longleftrightarrow
    q(T \neq t | T \geq t) \prod_{j=1}^{t} q(T \neq j-1 | T \geq j-1) &= q(T \neq t | T \geq t) \prod_{j=1}^{t} q(T \neq j-1 | T \geq j-1),
\end{align}
which proves the inductive hypothesis.
\end{proof}

\begin{lemma}
\label{lemma-app:survival_q_delta}
Let $q_T(t)$ and $p_T(t)$ be discrete probability distributions with support $\mathbb{N}_{0}$, let $\Delta_{t}$ be a Bernoulli random variable, with success defined as $T=t+1$ given that $T \geq t$, and let $q_{\Delta_{t}}$ be a discrete probability distribution over $\Delta_{t}$ for $t \in \mathbb{N}\backslash \{ 0 \}$, so that
\begin{align}
\label{eq-app:q-delta-formal-definition_lemma1}
\begin{split}
    \qdeltatzero{t+1}
    &\defines
    q(T \neq t \vbar T \geq t)
    \\
    \qdeltatone{t+1}
    &\defines
    q(T = t \vbar T \geq t).
\end{split}
\end{align}
Then we can write $q(T=t) = \qdeltatone{t+1} \prod_{i=1}^{t} \qdeltatzero{i}$ for any $t \in \mathbb{N}_{0}$ and have that 
\begin{align}
    q(T \geq t)
    =
    \sum_{i=t}^{\infty} \qdeltatone{i+1} \prod_{j=1}^{i} \qdeltatzero{j}
    =
    \prod_{i=1}^{t} \qdeltatzero{i}.
\end{align}
\end{lemma}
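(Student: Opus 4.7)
The plan is to reduce this lemma directly to Lemma~\ref{lemma-app:survival} (the preceding survival lemma) via a change of notation, since the two statements describe the same decomposition of a discrete distribution with support $\mathbb{N}_0$ into its hazard/survival factors, just with one expressed in the Bernoulli-$\Delta$ parameterization.

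\textbf{Step 1: Translate the notation.} First I would read off, from the definitions in~\Cref{eq-app:q-delta-formal-definition_lemma1}, the identifications
\begin{align*}
q_{\Delta_{t+1}}(\Delta_{t+1} = 1) &= q(T = t \mid T \geq t), \\
q_{\Delta_{i}}(\Delta_{i} = 0) &= q(T \neq i-1 \mid T \geq i-1).
\end{align*}
These are just rewrites of the definitions, with the index shift $i \mapsto i-1$ on the ``not yet terminated'' factor accounted for by the fact that $\Delta_i$ indexes the event at step $i-1$.

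\textbf{Step 2: Derive the first equality.} By the standard chain-rule decomposition of a discrete distribution on $\mathbb{N}_0$ in terms of its conditional hazards, one has
\begin{align*}
q(T = t) = q(T = t \mid T \geq t) \prod_{i=1}^{t} q(T \neq i-1 \mid T \geq i-1).
\end{align*}
Substituting the identifications from Step~1 yields $q(T=t) = q_{\Delta_{t+1}}(\Delta_{t+1}=1)\prod_{i=1}^{t} q_{\Delta_i}(\Delta_i=0)$, which is the first claim.

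\textbf{Step 3: Derive the survival equality via Lemma~\ref{lemma-app:survival}.} For the second claim, I would simply apply Lemma~\ref{lemma-app:survival} (already proven) to the distribution $q(T=t)$. Lemma~\ref{lemma-app:survival} gives
\begin{align*}
q(T \geq t) = \sum_{i=t}^{\infty} q(T = i \mid T \geq i) \prod_{j=1}^{i} q(T \neq j-1 \mid T \geq j-1) = \prod_{i=1}^{t} q(T \neq i-1 \mid T \geq i-1),
\end{align*}
and substituting the identifications from Step~1 (with $i$ playing the role of the outer index and $j$ the role of the inner product index) gives exactly $q(T \geq t) = \sum_{i=t}^{\infty} q_{\Delta_{i+1}}(\Delta_{i+1}=1)\prod_{j=1}^{i} q_{\Delta_j}(\Delta_j=0) = \prod_{i=1}^{t} q_{\Delta_i}(\Delta_i=0)$.

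\textbf{Main obstacle.} There is no real mathematical obstacle here, since the content is entirely subsumed by Lemma~\ref{lemma-app:survival}. The only thing to be careful about is the index bookkeeping: the Bernoulli variable $\Delta_{t+1}$ corresponds to the event at \emph{time $t$} (not $t+1$), so the product $\prod_{i=1}^{t} q_{\Delta_i}(\Delta_i=0)$ really does encode survival through times $0,1,\ldots,t-1$. Verifying that this index convention matches the definitions in~\Cref{eq-app:q-delta-formal-definition_lemma1} and the statement of Lemma~\ref{lemma-app:survival} is the only step that requires care; once that alignment is in place, the lemma follows immediately.
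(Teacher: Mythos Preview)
Your proposal is correct and matches the paper's own proof essentially line for line: the paper also simply invokes Lemma~\ref{lemma-app:survival} and then substitutes $q(T=i\mid T\geq i)\mapsto q_{\Delta_{i+1}}(\Delta_{i+1}=1)$ and $q(T\neq j-1\mid T\geq j-1)\mapsto q_{\Delta_j}(\Delta_j=0)$, with the index-shift bookkeeping you flagged being the only thing to check.
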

\begin{proof}
By~\Cref{lemma-app:survival}, we have that for any $t \in \mathbb{N}_{0}$
\begin{align}
    q(T \geq t)
    =
    \sum_{i=t}^{\infty} q(T = i | T \geq i) \prod_{j=1}^{i} q(T \neq j-1 | T \geq j-1)
    =
    \prod_{i=1}^{t} q(T \neq i-1 | T \geq i-1).
\end{align}
The result follows by replacing $q(T = i | T \geq i)$ by $\qdeltatone{i+1}$, $q(T \neq j-1 | T \geq j-1)$ by $\qdeltatzero{j}$, and $q(T \neq i-1 | T \geq i-1)$ by $\qdeltatzero{i}$.
\end{proof}

\newcommand{\onem}[1]{(1 - #1)}
\begin{lemma}
\label{lemma-app:kl-decomp1}
Let $q_T(t)$ and $p_T(t)$ be discrete probability distributions with support $\mathbb{N}_{0}$.
Then for any $k \in \mathbb{N}_{0}$,
\begin{align}
\begin{split}
\label{eq-app:recursive-kl-decomp-any-k}
    &\E_{t \sim q(T \vbar T \geq k)}
    \left[
        \log \frac{q(T=t \vbar T \geq k)}{p(T=t \vbar T \geq k)}
    \right]
    \\
    &\qquad
    =
    f(q, p, k)
    +
    q(T \neq k \vbar T \geq k)
    \E_{t \sim q(T \vbar T \geq k+1)}
    \left[
        \log \frac{q(T=t \vbar T \geq k+1)}{p(T=t \vbar T \geq k+1)}
    \right].
\end{split}
\end{align}
\end{lemma}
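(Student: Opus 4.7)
The plan is to decompose the expectation on the left-hand side by splitting the summation over $t \geq k$ into the ``current step'' $t = k$ and the ``future'' $t \geq k+1$, and then exploit the tower property of conditional probability to reindex the future terms in terms of the distribution conditional on $T \geq k+1$. The decomposition I expect to emerge is $f(q, p, k) = D_{\text{KL}}(q_{\Delta_{k+1}} \,\|\, p_{\Delta_{k+1}})$, i.e., $f$ is the ``one-step'' KL between the $\Delta_{k+1}$ marginals of $q_T$ and $p_T$. This is consistent with the recurrence expected from the factorization $q_T(t) = q(\Delta_{t+1}{=}1) \prod_{t'=1}^{t} q(\Delta_{t'}{=}0)$ used earlier in the paper.

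First I would observe the key algebraic identity: for any $t \geq k+1$, conditioning on $T \geq k$ vs $T \geq k+1$ differs only by a normalizer, so
\begin{equation*}
q(T = t \mid T \geq k) = q(T \neq k \mid T \geq k) \cdot q(T = t \mid T \geq k+1),
\end{equation*}
and likewise for $p$. I would then rewrite the left-hand expectation as the sum of the $t=k$ contribution and the $t\geq k+1$ contribution, applying the above identity inside the logarithm in the tail. The log of the product splits into a ``one-step'' piece $\log \tfrac{q(T \neq k\mid T \geq k)}{p(T \neq k\mid T \geq k)}$, which is constant in $t$ and so factors out of the tail sum (with $\sum_{t \geq k+1} q(T=t\mid T\geq k+1) = 1$), plus the ``remaining'' log-ratio $\log \tfrac{q(T=t\mid T\geq k+1)}{p(T=t\mid T \geq k+1)}$, which is precisely the integrand of the expectation appearing on the right-hand side.

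Collecting terms, the isolated $t=k$ contribution combines with the factored-out ``one-step'' piece from the tail to yield the full binary KL divergence
\begin{equation*}
q(T=k\mid T\geq k)\log\tfrac{q(T=k\mid T\geq k)}{p(T=k\mid T\geq k)} + q(T\neq k\mid T\geq k)\log\tfrac{q(T\neq k\mid T\geq k)}{p(T\neq k\mid T\geq k)},
\end{equation*}
which is $D_{\text{KL}}(q_{\Delta_{k+1}}\,\|\,p_{\Delta_{k+1}})$ by the notation set in~\Cref{eq-app:q-delta-formal-definition_lemma1}. The remaining term is exactly the tail factor $q(T\neq k \mid T\geq k)$ times the expectation under $q(T\mid T\geq k+1)$, matching the right-hand side.

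I do not anticipate a serious obstacle; the entire argument is a careful application of the chain rule for conditional probability and the definition of KL divergence, with the only subtle point being the bookkeeping when separating the $t=k$ term from the $t\geq k+1$ sum. The minor care needed is to verify that $q(T=k\mid T\geq k)$ equals $\qdeltatone{k+1}$ and $q(T\neq k\mid T\geq k)$ equals $\qdeltatzero{k+1}$ under the definitions in the preceding lemma, so that the answer can be cleanly stated as $f(q,p,k) = D_{\text{KL}}(q_{\Delta_{k+1}}\,\|\,p_{\Delta_{k+1}})$; this is where it will plug into the larger derivation of~\Cref{prop-app:objective_unknown_time_factorized}.
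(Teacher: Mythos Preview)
Your proposal is correct and follows essentially the same approach as the paper: both split the expectation into the $t=k$ term and the $t\geq k+1$ tail (the paper phrases this as the law of total expectation), apply the identity $q(T=t\mid T\geq k)=q(T\neq k\mid T\geq k)\,q(T=t\mid T\geq k+1)$ (and likewise for $p$) inside the tail, and collect the resulting constant log-ratio with the $t=k$ term to obtain the binary KL $f(q,p,k)=D_{\mathrm{KL}}(q_{\Delta_{k+1}}\,\|\,p_{\Delta_{k+1}})$. Your identification of $f$ matches the paper's definition exactly.
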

\begin{proof}
Consider $\E_{t \sim q(T \vbar T \geq k)}
    \left[
        \log \frac{q(T=t \vbar T \geq k)}{p(T=t \vbar T \geq k)}
    \right]$ and note that by the law of total expectation we can rewrite it as
\begin{align}
    \nonumber
    &\E_{t \sim q(T \vbar T \geq k)}
    \left[
        \log \frac{q(T=t \vbar T \geq k)}{p(T=t \vbar T \geq k)}
    \right]
    \\
    \begin{split}
    &~
    =
    q(T=k \vbar T \geq k)
    \E_{t \sim q(T \vbar T = k)}
    \left[
        \log \frac{q(T=t \vbar T \geq k)}{p(T=t \vbar T \geq k)}
    \right]
    \\
    &\qquad
    + 
    q(T \neq k \vbar T \geq k)
    \E_{t \sim q(T \vbar T \geq k+1)}
    \left[
        \log \frac{q(T=t \vbar T \geq k)}{p(T=t \vbar T \geq k)}
    \right]
    \end{split}
    \\
    &~
    =
    \label{eq-app:kl-decomp-one-e}
    q(T=k \vbar T \geq k)
        \log \frac{q(T=k \vbar T \geq k)}{p(T=k \vbar T \geq k)}
    + 
    q(T \neq k \vbar T \geq k)
    \E_{t \sim q(T | T \geq k+1)}
    \left[
        \log \frac{q(T=t \vbar T \geq k)}{p(T=t \vbar T \geq k)}
    \right].
\end{align}
For all values of $T \geq k+1$, we have that
\begin{align}
    q(T=t \vbar T \geq k)
        &= q(T=t \vbar T \geq k+1) q(T \neq k \vbar T \geq k)
    \\
    p(T=t \vbar T \geq k)
        &= p(T=t \vbar T \geq k+1) p(T \neq k \vbar T \geq k)
\end{align}
and so we can rewrite the expectation in~\Cref{eq-app:kl-decomp-one-e} as
\begin{align}
    \E_{t \sim q(T \vbar T \geq k+1)}
    \left[
        \log \frac{q(T=t \vbar T \geq k)}{p(T=t \vbar T \geq k)}
    \right]
    &=
    \E_{t \sim q(T \vbar T \geq k+1)}
    \left[
        \log \frac{q(T=t \vbar T \geq k)}{p(T=t \vbar T \geq k)}
        + \log \frac{q(T \neq k \vbar T \geq k)}{p(T \neq k \vbar T \geq k)}
    \right]
    \\
    &=\label{eq-app:kl-decomp-tmp1}
    \E_{t \sim q(T \vbar T \geq k+1)}
    \left[
        \log \frac{q(T=t \vbar T \geq k)}{p(T=t \vbar T \geq k)}
    \right]
    + \log \frac{q(T \neq k \vbar T \geq k)}{p(T \neq k \vbar T \geq k)}
\end{align}
Combining~\Cref{eq-app:kl-decomp-tmp1} with~\Cref{eq-app:kl-decomp-one-e}, we have
\begin{align}
\begin{split}
    &\E_{t \sim q(T \vbar T \geq k)}
    \left[
        \log \frac{q(T=t \vbar T \geq k)}{p(T=t \vbar T \geq k)}
    \right]
    \\
    &\qquad
    =
    \underbrace{
    q(T=k \vbar T \geq k)
        \log \frac{q(T=k \vbar T \geq k)}{p(T=k \vbar T \geq k)}
    +
    q(T \neq k \vbar T \geq k)
        \log \frac{q(T \neq k \vbar T \geq k)}{p(T \neq k \vbar T \geq k)}
    }_{
        \defines f(q, p, k)
    }
    \\
    &\qquad\qquad
    +
    q(T \neq k \vbar T \geq k)
    \E_{t \sim q(T \vbar T \geq k+1)}
    \left[
        \log \frac{q(T=t \vbar T \geq k+1)}{p(T=t \vbar T \geq k+1)}
    \right],
\end{split}
\end{align}
which concludes the proof.
\end{proof}

\begin{lemma}
\label{lemma-app:kl-decomp2}
Let $q_T(t)$ and $p_T(t)$ be discrete probability distributions with support $\mathbb{N}_{0}$.
Then the \kld from $q_{T}$ to $p_{T}$ can be written as
\begin{align}
\label{eq-app:kl-decomp-general}
    \mathbb{D}_{\emph{\textrm{KL}}}(\qtdot \,||\, \ptdot) = \sum_{t=0}^\infty
        q(T \geq t)
        f(q_{T}, p_{T}, t)
\end{align}
where $f(q_{T}, p_{T}, t)$ is shorthand for
\begin{align}
\begin{split}
    f(q_{T}, p_{T}, t)
    &=
    q(T=t \vbar T \geq t)
    \log \frac{q(T=t \vbar T \geq t)}{p(T=t \vbar T \geq t)}
    +
    q(T \neq t \vbar T \geq t)
    \log \frac{q(T \neq t \vbar T \geq t)}{p(T \neq t \vbar T \geq t)}.
\end{split}
\end{align}
\end{lemma}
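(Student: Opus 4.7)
The statement is a standard chain-rule expansion of the KL divergence between two distributions on $\mathbb{N}_{0}$, reindexed so that the $t$-th summand is the conditional Bernoulli KL between the ``stop at $t$ vs.\ continue past $t$'' laws induced by $q_T$ and $p_T$, weighted by the survival probability $q(T \geq t)$. The most economical route is to iterate the one-step recursion already established in \Cref{lemma-app:kl-decomp1}, then invoke \Cref{lemma-app:survival} to identify the accumulated products with survival probabilities. A direct expansion is available as a backup.

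\textbf{Main steps.} Observe first that the $k=0$ instance of the LHS of \Cref{eq-app:recursive-kl-decomp-any-k} is exactly $D_{\textrm{KL}}(q_T \,\|\, p_T)$, since $q(T \geq 0) = 1$ forces both conditional distributions to coincide with the unconditional ones. Apply \Cref{lemma-app:kl-decomp1} once to peel off the term $f(q_T, p_T, 0)$ plus a weighted residual expectation at $k = 1$; iterate this on successive residuals and obtain by induction on $N$
\begin{align*}
D_{\textrm{KL}}(q_T \,\|\, p_T)
&= \sum_{t=0}^{N-1} \Big(\prod_{j=0}^{t-1} q(T \neq j \vbar T \geq j)\Big) f(q_T, p_T, t) \\
&\quad + \Big(\prod_{j=0}^{N-1} q(T \neq j \vbar T \geq j)\Big) \, \E_{t \sim q(T \vbar T \geq N)}\!\left[\log \tfrac{q(T = t \vbar T \geq N)}{p(T = t \vbar T \geq N)}\right].
\end{align*}
\Cref{lemma-app:survival} (with the reindexing $i = j+1$) identifies the prefactor in the sum with $q(T \geq t)$ and the prefactor of the residual expectation with $q(T \geq N)$, so the right-hand side already has the claimed form modulo the residual.

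\textbf{Main obstacle and closing the argument.} The one subtle step is sending $N \to \infty$ and showing that the residual $q(T \geq N) \cdot D_{\textrm{KL}}(q_T(\,\cdot \vbar T \geq N) \,\|\, p_T(\,\cdot \vbar T \geq N))$ vanishes, since a priori the conditional KL could grow as the survival probability shrinks. I would resolve this via non-negativity: each $f(q_T, p_T, t)$ is itself a Bernoulli KL divergence, so every summand $q(T \geq t) f(q_T, p_T, t)$ is non-negative, and therefore the partial sums are monotone non-decreasing and bounded above by $D_{\textrm{KL}}(q_T \,\|\, p_T)$. Monotone convergence then forces the residual to tend to zero whenever the total KL is finite, and gives $+\infty$ on both sides when it diverges. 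As a sanity check that avoids the limit altogether, one can expand $D_{\textrm{KL}}(q_T \,\|\, p_T) = \sum_{t} q(T=t) \log[q(T=t)/p(T=t)]$ directly, factor $q(T=t) = q(T=t \vbar T \geq t)\, q(T \geq t)$ using \Cref{lemma-app:survival}, split the log via the same factorization, exchange the order of summation on the $\sum_{j < t}$ piece, and use $\sum_{t > j} q(T = t) = q(T \geq j+1) = q(T \neq j \vbar T \geq j)\, q(T \geq j)$ to recombine the two pieces into $\sum_{t} q(T \geq t) f(q_T, p_T, t)$, as required.
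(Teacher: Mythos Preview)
Your proposal is correct and follows essentially the same route as the paper: start from the $k=0$ case, iterate \Cref{lemma-app:kl-decomp1} to peel off successive $f(q_T,p_T,t)$ terms, and invoke \Cref{lemma-app:survival} to rewrite the accumulated products as $q(T\geq t)$. If anything, you are more careful than the paper, which simply writes out the first few terms of the expansion and passes to the infinite sum without explicitly arguing that the residual vanishes; your non-negativity/monotone-convergence remark fills that gap.
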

\begin{proof}
Note that $q(T=k)$ denotes the probability that the distribution $q$ assigns to the event $T=k$ and $q(T \geq m)$ denotes the tail probability, that is, $q(T \geq m) = \sum_{t=m}^\infty q(T = t)$.
We will write $q(T | T \geq m)$ to denote the conditional distribution of $q$ given $T \geq m$, that is, $q(T = k | T \geq m) = \mathbbm{1}[k \geq m] q(T = k) / q(T \geq m)$.
We will use analogous notation for $p$.

By the definition of the \kld and using the fact that, since the support is lowerbounded by $T=0$, \mbox{$q(T=0) = q(T=0 \vbar T \geq 0)$}, we have
\begin{align}
    \DKL{\qtdot}{\ptdot}
    &=
    \E_{t \sim q(T)}
    \left[
        \log \frac{q(T=t)}{p(T=t)}
    \right]
    =
    \E_{t \sim q(T \vbar T \geq 0)}
    \left[
        \log \frac{q(T=t \vbar T \geq 0)}{p(T=t \vbar T \geq 0)}
    \right].
\end{align}
Using~\Cref{lemma-app:kl-decomp1} with $k=0, 1, 2, 3, \dots$, we can expand the above expression to get
\begin{align}
    &\DKL{\qtdot}{\ptdot}
    \\
    &=
    f(q_{T}, p_{T}, 0)
    +
    q(T \neq 0 \vbar T \geq 0)
    \E_{t \sim q(T \vbar T \geq 1)}
    \left[
        \log \frac{q(T=t \vbar T \geq 1)}{p(T=t \vbar T \geq 1)}
    \right]
    \\
    \begin{split}
    &=
    f(q, p, 0) +
        q(T \neq 0 \vbar T \geq 1)
        f(q_{T}, p_{T}, 1)
    \\
    &\qquad+
        q(T \neq 0 \vbar T \geq 0)
        q(T \neq 1 \vbar T \geq 1)
        \E_{t \sim q(T \vbar T \geq 2)}
        \left[
            \log \frac{q(T=t \vbar T \geq 2)}{p(T=t \vbar T \geq 2)}
        \right]
    \end{split}
    \\
    \begin{split}
    &=
    \underbrace{
        1
    }_{= q(T \geq 0)}
    \cdot
    f(q, p, 0)
    \\
    &\qquad+
    \underbrace{
        q(T \neq 0 \vbar T \geq 0)
    }_{= q(T \geq 1)}
    f(q, p, 1)
    \\
    &\qquad+
    \underbrace{
        q(T \neq 0 \vbar T \geq 0)
        q(T \neq 1 \vbar T \geq 1)
    }_{= q(T \geq 2)}
    f(q_{T}, p_{T}, 2)
    \\
    &\qquad+
    \underbrace{
        q(T \neq 0 \vbar T \geq 0)
        q(T \neq 1 \vbar T \geq 1)
        q(T \neq 2 \vbar T \geq 2)
    }_{= q(T \geq 3)}
    \E_{t \sim q(T \vbar T \geq 3)}
    \left[
        \log \frac{q(T=t \vbar T \geq 3)}{p(T=t \vbar T \geq 3)}
    \right]
    \end{split}
    \\
    &=
    \sum_{t=0}^\infty
    q(T \geq t)
    f(q_{T}, p_{T}, t),
\end{align}
where $f(q_{T}, p_{T}, t)$ is shorthand for
\begin{align}
\begin{split}
    f(q_{T}, p_{T}, t)
    &=
    q(T=t \vbar T \geq t)
    \log \frac{q(T=t \vbar T \geq t)}{p(T=t \vbar T \geq t)}
    +
    q(T \neq t \vbar T \geq t)
    \log \frac{q(T \neq t \vbar T \geq t)}{p(T \neq t \vbar T \geq t)}.
\end{split}
\end{align}
and we used the fact that, by~\Cref{lemma-app:survival},
\begin{align}
\label{eq-app:prod-of-delta}
    q(T \geq t)
    =
    \prod_{k=1}^{t}
    q(T \neq k-1 \vbar T \geq k-1).
\end{align}
This completes the proof.
\end{proof}

\begin{lemma}
\label{lemma-app:kl-decomp-delta}
Let $q_T(t)$ and $p_T(t)$ be discrete probability distributions with support $\mathbb{N}_{0}$, let $\Delta_{t}$ be a Bernoulli random variable, with success defined as $T=t$ given that $T \geq t$, and let $q_{\Delta_{t}}$ and $p_{\Delta_{t}}$ be discrete probability distributions over $\Delta_{t}$ for $t \in \mathbb{N}_{0} \backslash \{ 0 \}$, so that
\begin{align}
    \qdeltatzero{t+1} \defines q(T \neq t \vbar T \geq t)
    \qquad
    &\qdeltatone{t+1} \defines q(T = t \vbar T \geq t)
    \label{eq-app:q-delta-formal-definition}
    \\
    \pdeltatzero{t+1} \defines p(T \neq t \vbar T \geq t)
    \qquad
    &\pdeltatone{t+1} \defines p(T = t \vbar T \geq t).
\end{align}
Then the \kld from $\qtdot$ to $\ptdot$ can be written as
\begin{align}
\label{eq-app:kl-decomp-delta}
    \mathbb{D}_{\emph{\textrm{KL}}}(\qtdot \,||\, \ptdot)
    =
    \sum_{t=0}^\infty 
    \Big(
    \prod_{k=1}^{t} \qdeltatzero{t}
    \Big)
    \mathbb{D}_{\emph{\textrm{KL}}}(\qdeltatdot{t+1} \,||\, \pdeltatdot{t+1})
\end{align}
\end{lemma}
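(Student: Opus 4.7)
The plan is to derive this statement as a straightforward corollary of the two previous results, Lemma~\ref{lemma-app:kl-decomp2} and Lemma~\ref{lemma-app:survival_q_delta}, by simply rewriting the expressions in those lemmas in the Bernoulli notation introduced in the statement. No new calculation is required.

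First, I would invoke Lemma~\ref{lemma-app:kl-decomp2} to expand the Kullback--Leibler divergence as
\begin{align*}
    D_{\text{KL}}(\qtdot \,\|\, \ptdot)
    = \sum_{t=0}^\infty q(T \geq t)\, f(q_{T}, p_{T}, t),
\end{align*}
with $f(q_T, p_T, t)$ as defined there. The key observation is that, under the definitions of $\qdeltatdot{t+1}$ and $\pdeltatdot{t+1}$ given in the hypothesis (equation~\ref{eq-app:q-delta-formal-definition}), the function $f(q_T, p_T, t)$ is exactly the KL divergence between the Bernoulli distributions $\qdeltatdot{t+1}$ and $\pdeltatdot{t+1}$: each $q(T = t \vbar T \geq t)$ and $q(T \neq t \vbar T \geq t)$ is precisely $\qdeltatone{t+1}$ or $\qdeltatzero{t+1}$ respectively, and similarly for $p$. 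Hence $f(q_T, p_T, t) = D_{\text{KL}}(\qdeltatdot{t+1} \,\|\, \pdeltatdot{t+1})$.

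Next I would apply Lemma~\ref{lemma-app:survival_q_delta} to rewrite the survival factor $q(T \geq t) = \prod_{i=1}^{t} \qdeltatzero{i}$ (where the product for $t=0$ is empty and equal to one). Substituting both identifications into the expansion from Lemma~\ref{lemma-app:kl-decomp2} yields exactly~\eqref{eq-app:kl-decomp-delta}, completing the proof.

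There is essentially no obstacle here: the statement is a translation of Lemma~\ref{lemma-app:kl-decomp2} into the $\Delta$-variable notation, once Lemma~\ref{lemma-app:survival_q_delta} is used to rewrite the tail probabilities. The only thing to be careful about is matching indices correctly---in particular, that the Bernoulli variable $\Delta_{t+1}$ corresponds to the event ``$T = t$ given $T \geq t$'' as specified in the lemma's hypothesis, so that the summand at index $t$ in~\eqref{eq-app:kl-decomp-general} matches the summand at index $t$ in~\eqref{eq-app:kl-decomp-delta}.
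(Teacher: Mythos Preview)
Your proposal is correct and matches the paper's own proof essentially line for line: the paper likewise invokes Lemma~\ref{lemma-app:kl-decomp2} for the expansion, identifies $f(q_T,p_T,t)$ with $D_{\text{KL}}(\qdeltatdot{t+1}\,\|\,\pdeltatdot{t+1})$ via the definitions in~\eqref{eq-app:q-delta-formal-definition}, and rewrites $q(T\geq t)$ as the product $\prod_{k=1}^{t}\qdeltatzero{k}$ using the survival-probability lemma. The only cosmetic difference is that the paper cites Lemma~\ref{lemma-app:survival} plus the $\Delta$-definitions directly, whereas you cite the already-translated Lemma~\ref{lemma-app:survival_q_delta}; the content is identical.
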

\begin{proof}
The result follows from~\Cref{lemma-app:kl-decomp2},
\Cref{eq-app:prod-of-delta},
\Cref{eq-app:q-delta-formal-definition}, and the definition of $f$.

In detail, from~\Cref{lemma-app:survival},
and~\Cref{eq-app:q-delta-formal-definition}
we have that
\begin{align}
    \label{eq-app:kl-decomp-delta-tmp1}
        q(T \geq t)
    =
    \prod_{k=1}^{t}
    q(T \neq k-1 \vbar T \geq k-1)
    =
    \prod_{k=1}^{t}
    \qdeltatzero{k}.
\end{align}
From the definition of $f(q_{T}, p_{T}, t)$, we have
\begin{align}
    f(q_{T}, p_{T}, t)
    &=
    q(T=t \vbar T \geq t)
    \log \frac{q(T=t \vbar T \geq t)}{p(T=t \vbar T \geq t)}
    +
    q(T \neq t \vbar T \geq t)
    \log \frac{q(T \neq t \vbar T \geq t)}{p(T \neq t \vbar T \geq t)}
    \\
    &=
    \qdeltatzero{t+1}
    \log \frac{\qdeltatzero{t+1}}{\pdeltatzero{t+1}}
    +
    q(\Delta_{t+1}=1)
    \log \frac{\qdeltatone{t+1}}{\pdeltatone{t+1}}
    \\
    \label{eq-app:kl-decomp-delta-tmp2}
    &=
    \DKL{\qdeltatdot{t+1}}{\pdeltatdot{t+1}}.
\end{align}
Combining
\Cref{eq-app:kl-decomp-delta-tmp1}, 
\Cref{eq-app:kl-decomp-delta-tmp2}, 
and
\Cref{eq-app:kl-decomp-general}
completes the proof.
\end{proof}

\section{Additional Experiments}\label{appsec:additional-exps}

\subsection{Further Ablation Study Results}

We show the full ablation learning curves in~\Cref{app-fig:ablation-learning-curves}.
We see that \odac consistently performs the best, and that \odac with a fixed model also performs well.
However, on a few tasks, and in particular the Fetch Push and Sawyer Faucet tasks, we see that using a fixed $q_T$ hurts the performance, suggesting that our derived formula in~\Cref{eq:optimal-qt} results in better empirical performance.

\begin{figure*}[h!]
     \vspace*{-10pt}
     \centering
     \includegraphics[height=3.5cm]{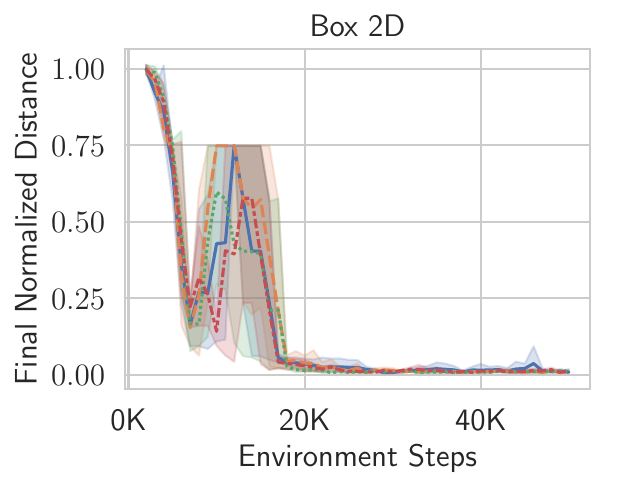}
     \includegraphics[height=3.5cm]{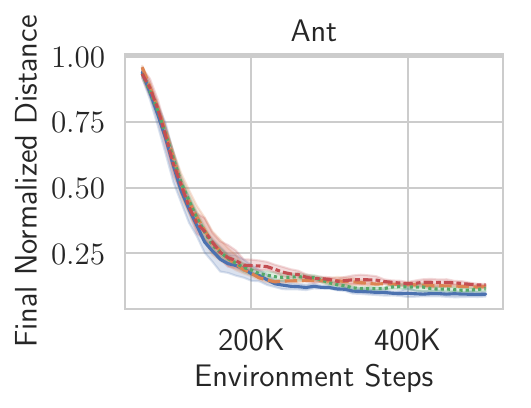}
     \includegraphics[height=3.5cm]{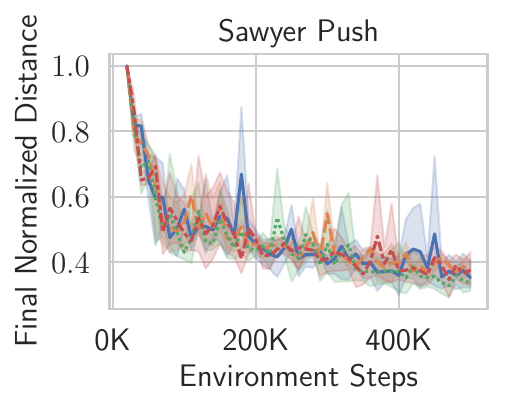}
    \\
     \includegraphics[height=3.5cm]{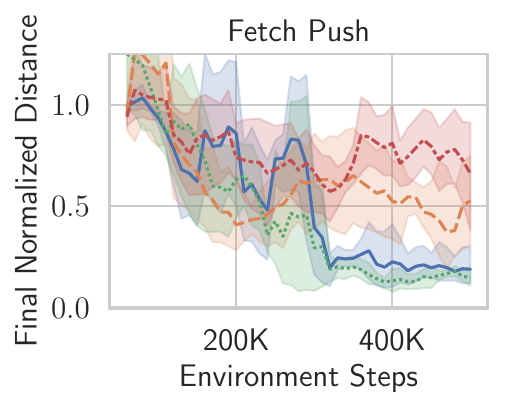}
     \includegraphics[height=3.5cm]{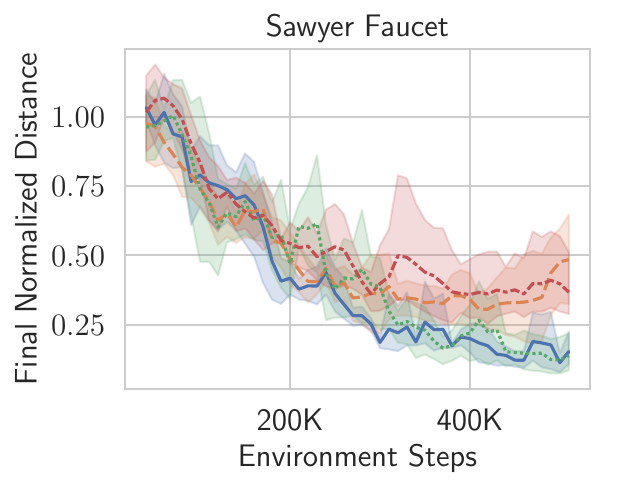}
     \includegraphics[height=3.5cm]{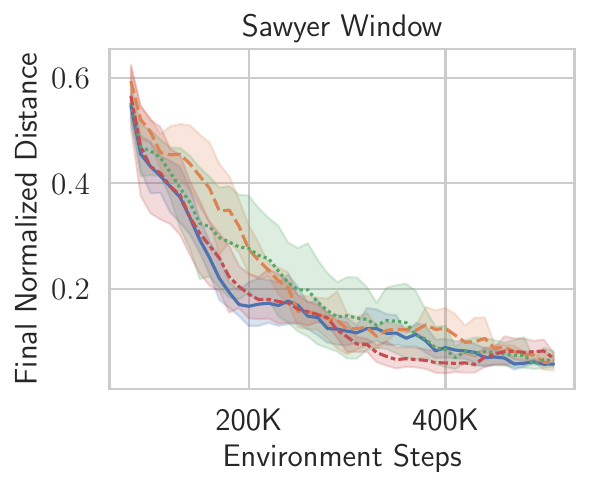}
     \\
    \vspace{1pt}
     \includegraphics[width=0.8\textwidth,trim={0 0 0.4cm 0.45cm},clip]{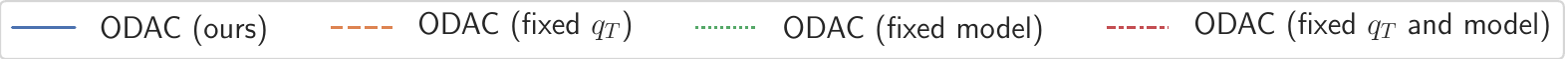}
     \caption{
     Ablation results across all six environments.
     We see that using our derived $q_T$ optimality equation is important for best performance across all six tasks and that \odac is not sensitive to the quality of the dynamics model.
     }
     \label{app-fig:ablation-learning-curves}
\end{figure*}

\begin{figure*}[h!]
  \centering
  \vspace*{-5pt}
    \begin{subfigure}[b]{\linewidth}
        \centering
        \captionsetup[subfigure]{labelformat=empty}
        \begin{subfigure}[b]{0.3\linewidth}
            \includegraphics[width=0.9\linewidth, trim=5 7 5 4,clip]{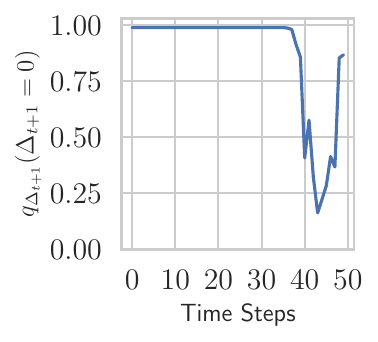}
        \end{subfigure}
        \hfill
        \begin{subfigure}[b]{0.2\linewidth}
            \includegraphics[width=\linewidth,trim=0 0 20 20,clip]{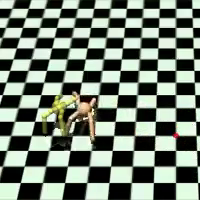}
            \caption*{$t=30$}
        \end{subfigure}
        \hfill
        \begin{subfigure}[b]{0.2\linewidth}
            \includegraphics[width=\linewidth,trim=0 0 20 20,clip]{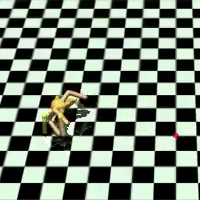}
            \caption*{$t=40$}
        \end{subfigure}
        \hfill
        \begin{subfigure}[b]{0.2\linewidth}
            \includegraphics[width=\linewidth,trim=0 0 20 20,clip]{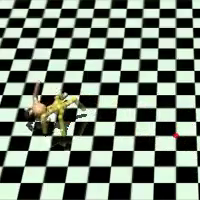}
            \caption*{$t=50$}
        \end{subfigure}
    \label{fig:ant-rollout}
    \end{subfigure}
    \caption{
        The inferred $\qdeltatzero{t+1}$ versus time during an example trajectory in the Ant environment.
        As the ant robot falls over,
        $\qdeltatzero{t+1}$ drops in value.
        We see that the optimal posterior $q_{\Delta_{t+1}}^{\star}(\Delta_{t+1} = 0)$ given in~\Cref{prop:gc_optimal_variational_distribution_T} automatically assigns a high likelihood of terminating when this irrecoverable state is first reached, effectively acting as a dynamic discount factor.
    }
    \label{fig:example-rollouts}
    \vspace*{-10pt}
\end{figure*}

\subsection{Comparisons under Oracle Goal Sampling}

\begin{wrapfigure}{R}{0.6\textwidth}
\vspace*{-16pt}
    \centering
    \hspace*{-4pt}\includegraphics[width=1.03\linewidth]{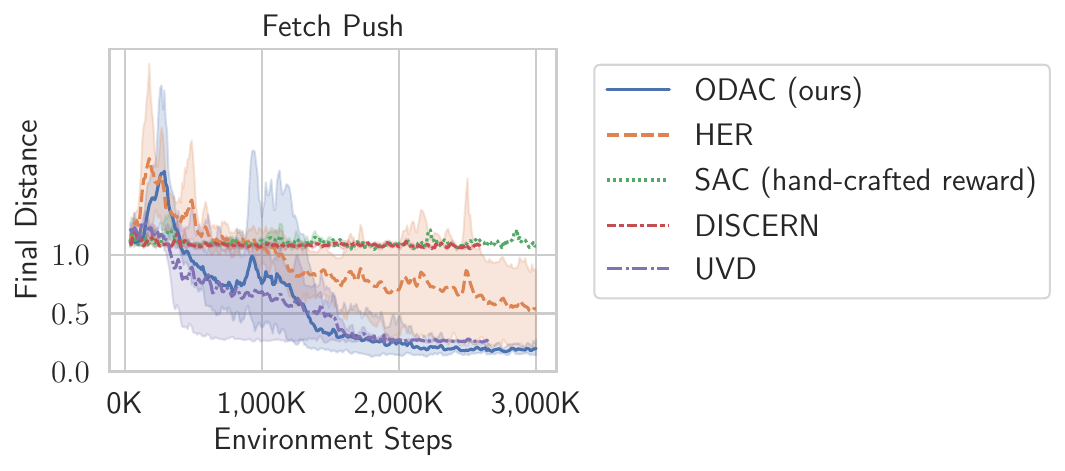}
    \vspace*{-15pt}
    \caption{
        Comparison of different methods for desired outcomes $\bg$ sampled uniformly from the set of admissible states.
        }
    \label{app-fig:oracle-goals-fetch}
\vspace*{-15pt}
\end{wrapfigure}

For exploration,~\citet{andrychowicz2017her} explore the benefits of HER either using a single, fixed goal during exploration (see Section 4.3 of~\citet{andrychowicz2017her}) or using oracle goal sampling, that is, during exploration, a new goal is sampled each episode from a uniform distribution over the set of all reachable goals in the environment.
As such, oracle goal sampling requires knowledge of the environment to sample several reachable goals.
For example, in the 2D box experiment (\Cref{fig:2d-env-picture}), points inside the grey block in the center are not reachable goal states, and this additional information must be available when performing oracle goal sampling.

To demonstrate the impact of sampling the desired outcome $\bg$ during exploration, we evaluate \odac and related methods on the Fetch task when using oracle goal sampling.
As shown in~\Cref{app-fig:oracle-goals-fetch}, the performances of UVD and \odac are similar and both outperform other methods.
These results suggest that UVD depends more heavily on sampling outcomes from the set of desired outcomes than \odac.
The significant decrease in performance when the desired outcome $\bg$ is fixed may be due to the fact that uniformly sampling $\bg$ implicitly provides a curriculum for learning.
For example, in the Box 2D environment, goal states sampled above the box can train the agent to move around the obstacle, making it easier to learn how to reach the other side of the box.
Without this guidance, prior methods often ``get stuck'' on the other side of the box.
In contrast, \odac consistently performs well in this more challenging setting, suggesting that the log-likelihood signal provides good guidance to the policy.

As shown in~\Cref{fig:sawyer-and-ant}, \odac performs well on both this setting and the harder setting where the desired outcome $\bg$ was fixed during exploration, suggesting that \odac does not rely as heavily on the uniform sampling of $\bg$ to learn a good policy than do other methods.

\subsection{Comparison to Model-Based Planning}

\odac learns a dynamics model but does not use it for planning and instead relies on the derived Bellman updates to obtain a policy.
However, a natural question is whether or not the method would benefit from using this model to perform model-based planning, as in~\citet{janner2019mbpo}.
We assess this by comparing \odac with model-based baseline that uses a 1-step look-ahead.
In particular, we follow the training procedure in~\citet{janner2019mbpo} with $k=1$.
To ensure a fair comparison, we use the exact same dynamics model architecture as in \odac and match the update-to-environment step ratio to be 4-to-1 for both methods.

\Cref{table:dyna} shows the final distance to the goal (best results in bold).
Using the same dynamics model, \odac, which does not use the dynamics model to perform planning and only uses it to compute rewards, outperforms the model-based planning method.
While a better model might lead to better performance for the model-based baseline, these results suggest that \odac is not sensitive to model quality to the same degree as model-based planning methods.

\setlength{\tabcolsep}{13.0pt}
\begin{table}[h!]
\centering
\vspace*{-5pt}
\captionof{table}{
Normalized final distances (lower is better) across four random seeds, multiplied by a factor of $100$.
}
\begin{tabular}{l|c|c}
\toprule
    Environment &       \odac (Mean + Standard Error)   &       Dyna (Mean + Standard Error) \\
\hline
 Box 2D &    0.74 (0.091) &  0.87 (0.058) \\
    Ant &         33 (27) &    102 (0.83) \\
 Sawyer Faucet &        14 (6.3) &       100 (5) \\
  Fetch Push &        12 (3.7) &      96 (3.8) \\
   Sawyer Push &        58 (8.7) &     96 (0.39) \\
 Sawyer Window &       4.4 (1.5) &      116 (14)
\Bstrut\\
\hline
\midrule
\end{tabular}
\label{table:dyna}
\end{table}

\subsection{Reward Visualization}

We visualize the reward for the Box 2D environment in~\Cref{app-fig:reward-vis}.
We see that over the course of training, the reward function initially flattens out near $\bg$, making learning easier by encouraging the policy to focus on moving just out of the top left corner of the environment.
Later in training (around 16,000 steps), the policy learns to move out of the top left corner, and we see that the reward changes to have a stronger reward gradient near $\bg$.
We also note that the reward are much more negative for being far $\bg$ at the end of training: the top left region changes from having a penalty of $-1.6$ to $-107$.
Overall, these visualizations show that the reward function automatically changes during training and provides a strong reward signal for different parts of the state space depending on the behavior of the policy.

\begin{figure}[t!]
    \centering
    \vspace*{-20pt}
    \begin{subfigure}{\textwidth}
      \centering
      \includegraphics[width=0.33\linewidth]{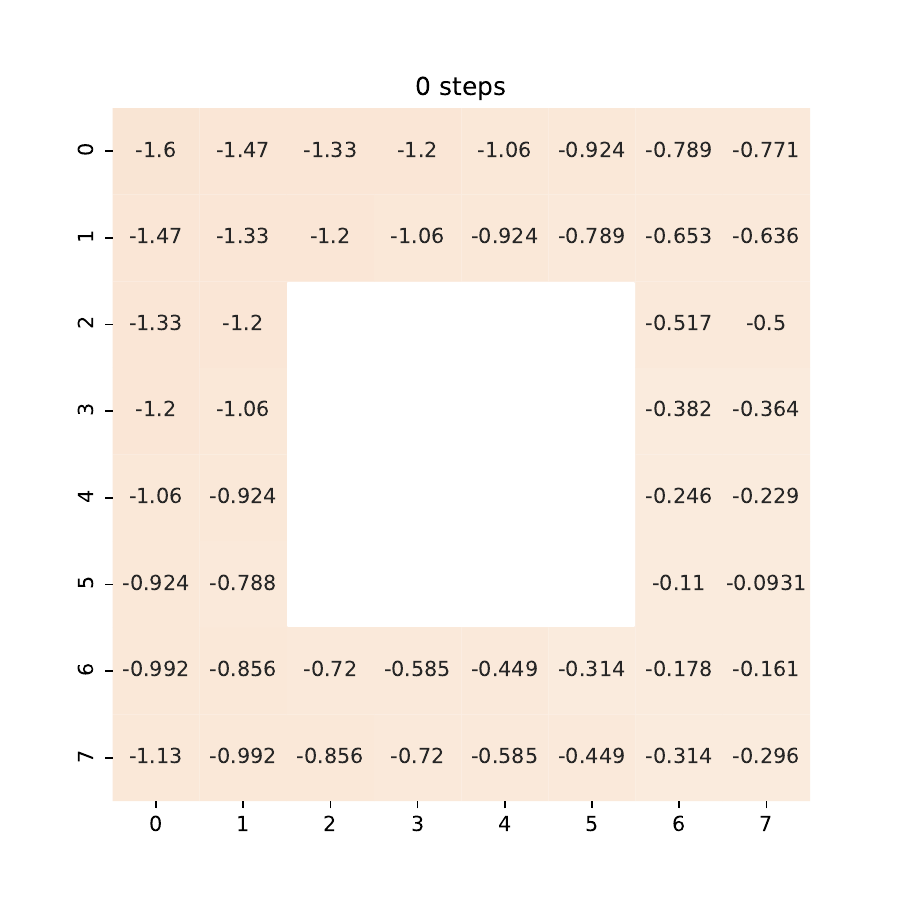}
      \includegraphics[width=0.33\linewidth]{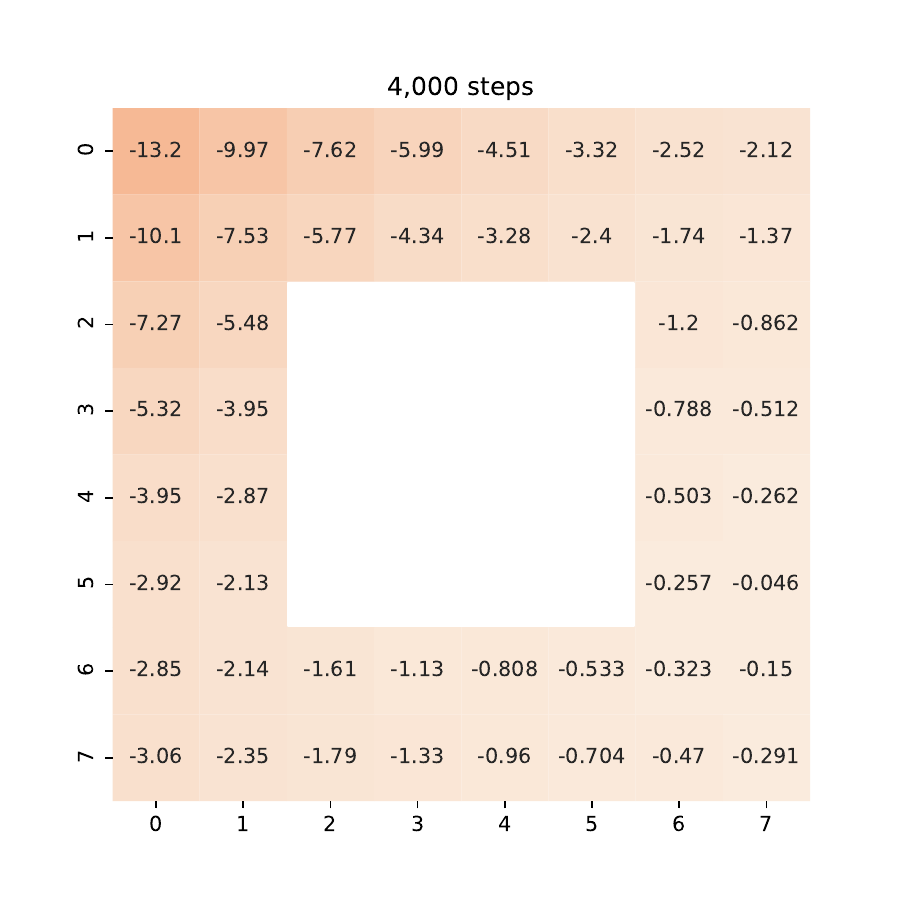}
      \includegraphics[width=0.33\linewidth]{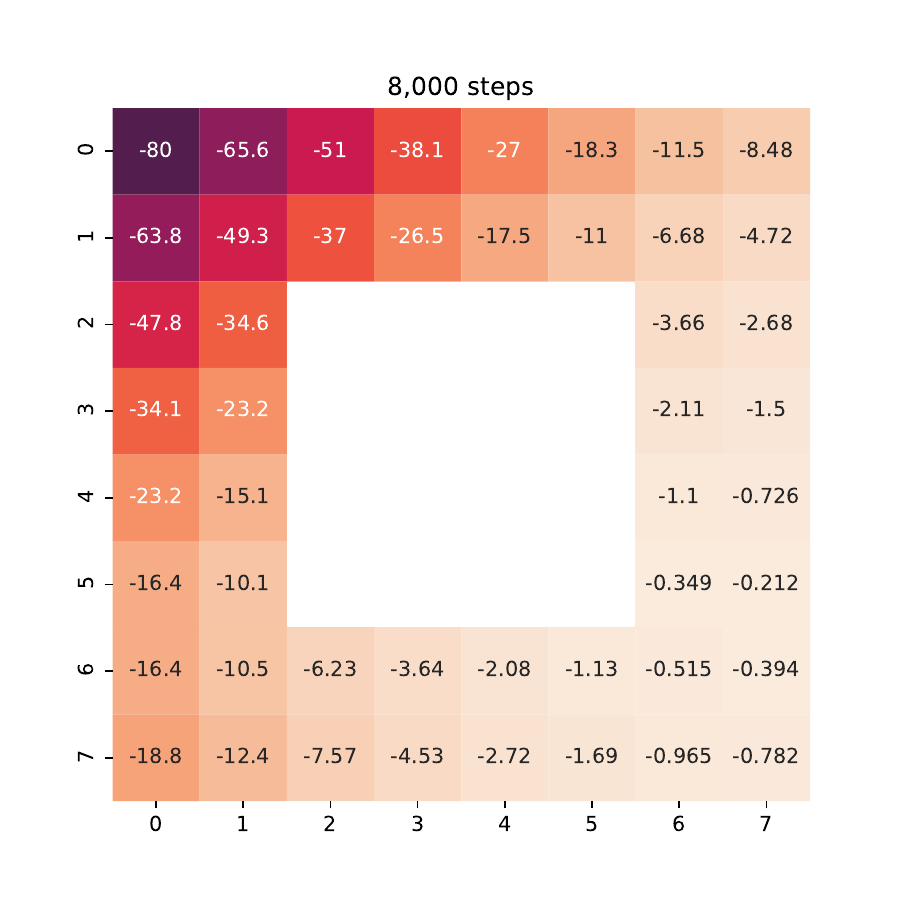}
      \includegraphics[width=0.33\linewidth]{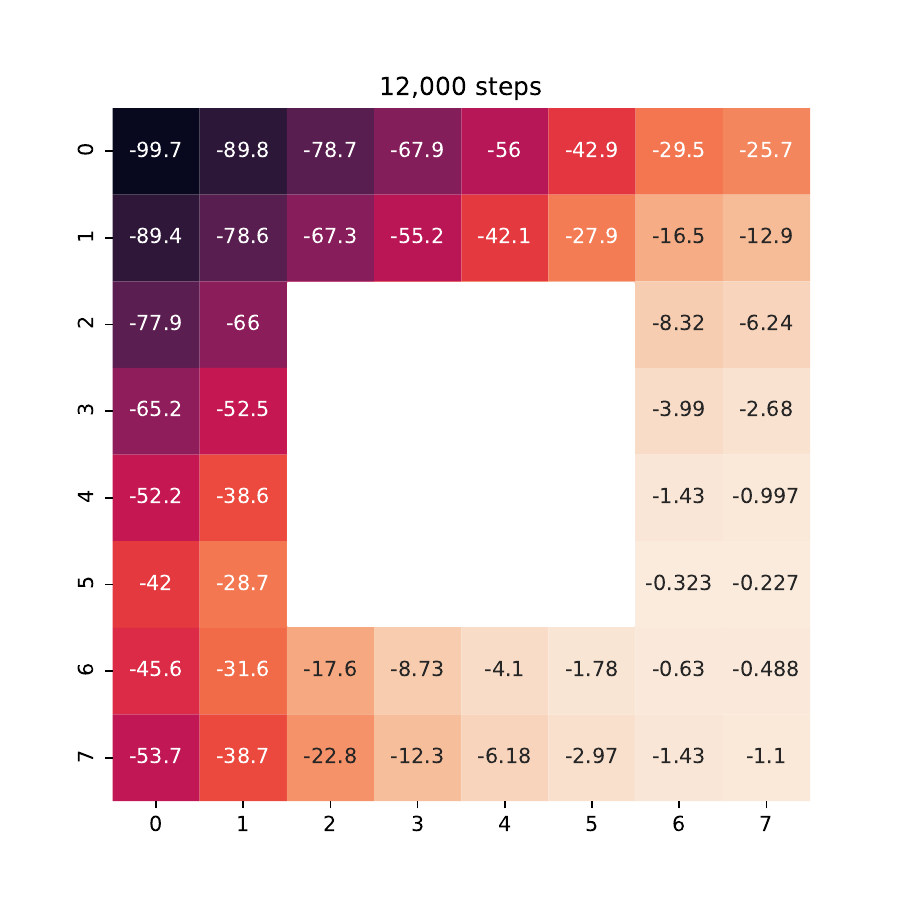}
      \includegraphics[width=0.33\linewidth]{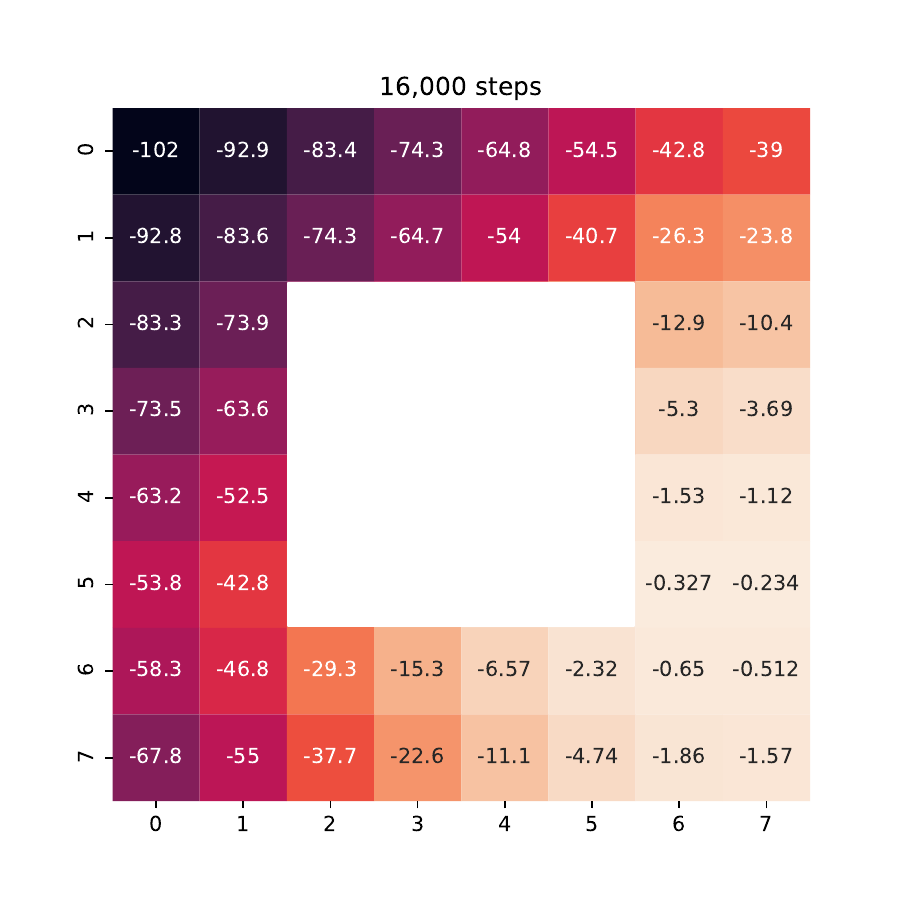}
      \includegraphics[width=0.33\linewidth]{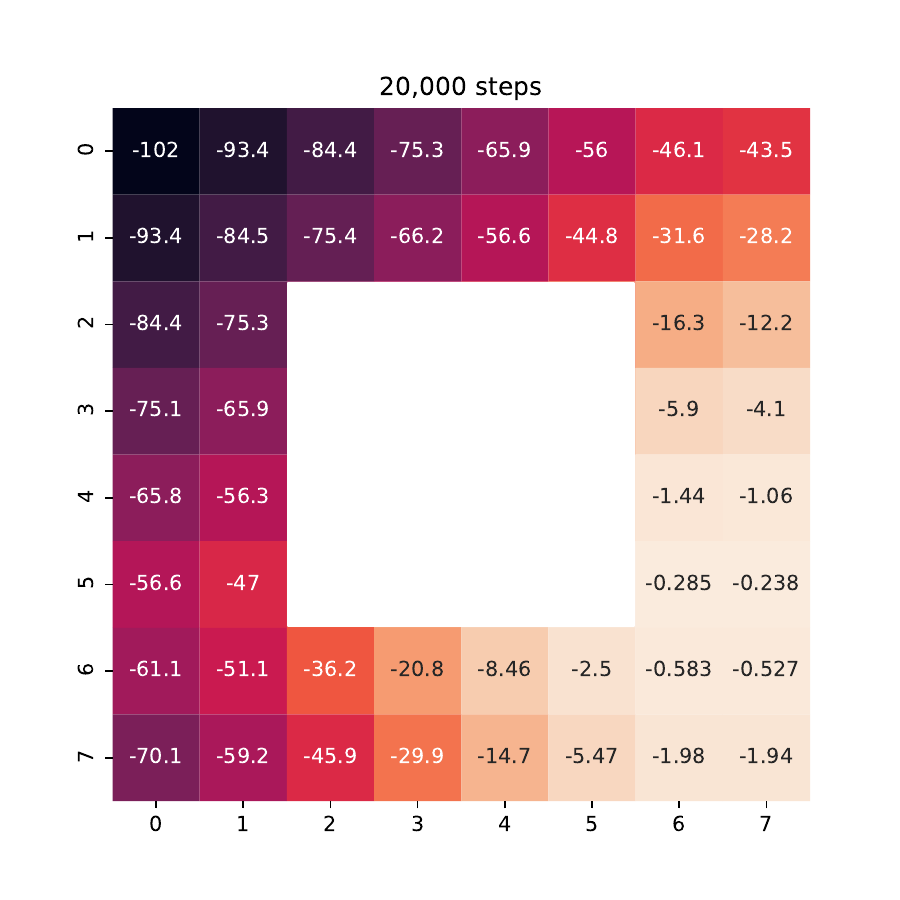}
      \includegraphics[width=0.5\linewidth]{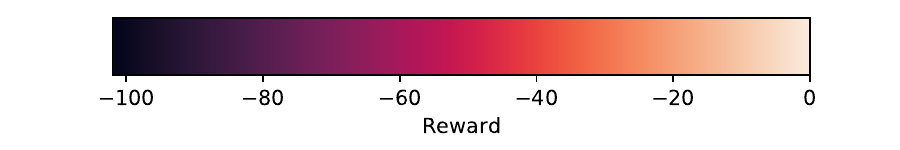}
      \caption{
        Reward Visualization
      }
    \end{subfigure}
    \\
    \begin{subfigure}{0.48\textwidth}
      \centering
      \includegraphics[height=5cm]{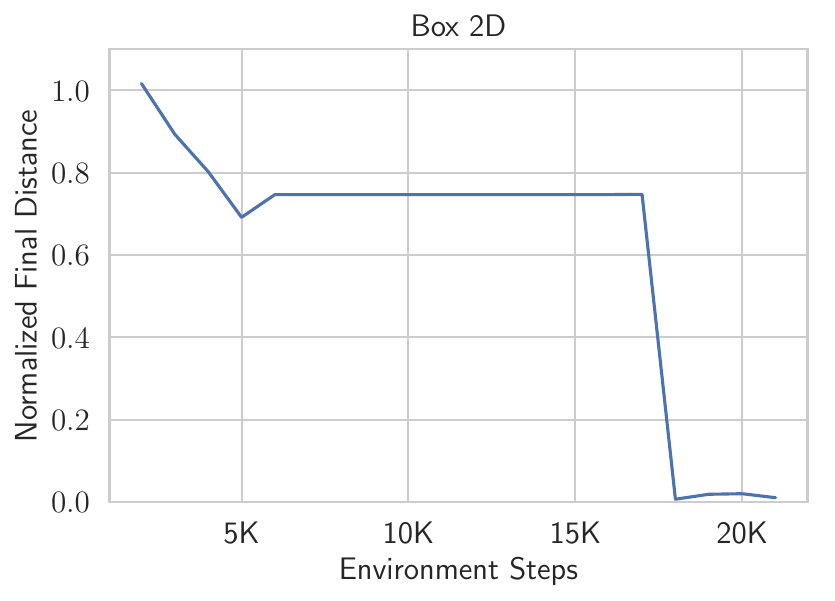}
      \caption{
        Learning Curve associated with Reward Visualized
      }
      \label{app-fig:box2d-learning-curve}
    \end{subfigure}
    \begin{subfigure}{0.48\textwidth}
      \centering
      \vspace*{13.5pt}
      \includegraphics[trim={0 0 0 0 0}, clip, height=4.5cm]{figures/box2d_two_ways.pdf}
      \caption{
        Environment Visualization
      }
      \label{app-fig:box2d-vis}
    \end{subfigure}
    \caption{
        We visualize the rewards over the course of training on a single random seed for the Box 2D environment.
        To visualize the reward, we discretize the continuous state space and evaluate
    $\rewardt{t}$
    for
    $\ba = \vec{0}$
    at different states.
    As shown in~\Cref{app-fig:box2d-vis}, the desired outcome $\bg$ is near the bottom right and the states in the center are invalid.
    After 4-8 thousand environment steps, the reward is more flat near $\bg$, and only provides a reward gradient far from $\bg$.
    After 20 thousand environment steps, the reward gradient is much larger again near the end, and the penalty for being in the top left corner has changed from $-1.6$ to $-107$.
    }
    \label{app-fig:reward-vis}
\end{figure}

\clearpage

\section{Experimental Details}\label{appsec:exp-details}

\subsection{Environment}\label{appsec:env_details}

\paragraph{Ant.}
This Ant domain is based on the ``Ant-V3'' OpenAI Gym~\citep{brockman2016openai} environment, with three modifications:
the gear ratio is reduced from $150$ to $120$,
the contact force sensors are removed from the state,
and there is no termination condition and the episode only terminates after a fixed amount of time.
In this environment, the state space is $23$ dimensional,
consistent of the XYZ coordinate of the center of the torso,
the orientation of the ant (in quaternion),
and the angle and angular velocity of all 8 joints.
The action space is 8-dimensional and corresponds to the torque to apply to each joint.
The desired outcome consists of the desired XYZ, orientation, and joint angles at a position that is 5 meters down and to the right of the initial position.
This desired pose is shown in~\Cref{fig:sawyer-and-ant}.

\paragraph{Sawyer Push.}
In this environment, the state and goal space is $4$ dimensional and the action space is $2$ dimension.
The state and goal consists of the XY end effector (EE) and the XY position of the puck.
The object is on a 40cm x 50cm table and starts 20 cm in front of the hand.
The goal puck position is fixed to 15 cm forward and 30 cm to the right of the initial hand position, while the goal hand position is 5cm behind and 20 cm to the right of the initial hand position.
The action is the change in position in each XY direction, with a maximum change of 3 cm per direction at each time step.
The episode horizon is 100.

\paragraph{Sawyer Window and Faucet.}
In this environment, the state and goal space is $6$ dimensional and the action space is $2$ dimension.
The state and goal consists of the XYZ end effector (EE) and the XYZ position of the window or faucet end endpoint.
The hand is initialized away from the window and faucet.
The EE goal XYZ position is set to the initial window or faucet position.
The action is the change in position in each XYZ direction.
For the window task, the goal positions is to close the window, and for the faucet task, the goal position is to rotate the faucet $90$ degrees counter-clockwise from above.

\paragraph{Box 2D.}
In this environment, the state is a $4x4$ with a $2x2$ box in the middle.
The policy is initialized to to $(-3.5, -2)$ and the desired outcome is $(3.5, 2)$.
The action is the XY velocity of the agent, with wall collisions taken into account and maximum velocity of $0.2$ in each direction.
To make the environment stochastic, we add Gaussian noise to actions with mean zero and standard deviation that's 10\% of the maximum action magnitude.

\paragraph{Tabular Box 2D (\Cref{fig:heatmaps}).}

We implemented a tabular version of \textsc{odac} and applied it to the 2D environment shown in~\Cref{fig:2d-env-picture}.
We discretize the environment into an $8 \times 8$ grid of states.
The action correspond to moving up, down, left, or right.
If probability $1-\epsilon$, this action is taken.
If the agent runs into a wall or boundary, the agent stays in its current state.
With probability $\epsilon = 0.1$, the commanded action is ignored and a neighboring state grid (including the current state) is uniformly sampled as the next state.
The policy and $Q$-function are represented with look-up tables and randomly initialized.
The entropy reward is weighted by $0.01$ and the time prior $p_{T}$ is geometric with parameter $0.5$.
The dynamics model, $p_d^{(0)}$ is initialized to give a uniform probability to each states for every state and action.
Each iteration, we simulate data collection by updating the dynamics model with the running average update
$
    p_d^{(t+1)} = 0.99 p_d^{(t)} + 0.01 p_d,
$
where $p_d$ is the true dynamics and update the policy and $Q$-function according to 
\Cref{eq:odpi} and
\Cref{eq:variational-q-t-bellman-update}, respectively.
\Cref{fig:heatmaps} shows that, in contrast to the binary-reward setting, the learned reward provides shaping for the policy, which solves the task within 100 iterations.

\subsection{Algorithm}\label{appsec:algo_details}
Pseudocode for the complete algorithm is shown in~\Cref{algo:odac}.

\begin{algorithm}
\caption{Outcome-Driven Actor Critic}
\label{algo:odac}
\begin{algorithmic}
\REQUIRE Policy $\pi_\ppi$, $Q$-function $Q_\pq$, dynamics
model $p_\pdyn$, replay buffer $\calR$, and map from state to achieved goal $f$.
\FOR{$n=0, \dots N-1$ episodes}
    \STATE Sample initial state $\bs_0$ from environment.
    \STATE Sample goal $\bg$ from environment.
    \FOR{$t=0, \dots, H-1$ steps}
        \STATE Get action $\ba_t \sim \pi_\ppi(\bs_t, \bg)$.
        \STATE Get next state $\bs_{t+1} \sim p(\cdot \vbar \bs_t, \ba_t)$.
        \STATE Store $(\bs_t, \ba_t, \bs_{t+1}, \bg)$ into replay buffer $\calR$.
        \STATE Sample transition $(\bs, \ba, \bs', \bg) \sim \calR$.
        \STATE Compute reward $r = \log p_\pdyn(\bg \vbar \bs, \ba) - \DKL{q_{\Delta}(\cdot | \bs_{t}, \ba_{t})}{p(\Delta)}$.
        \STATE Compute $q(\Delta_{t} = 0 | \bs, \ba)$ using~\Cref{eq:optimal-qt}.
        \STATE Update $Q_\pq$ using~\Cref{eq:q-loss} and data $(\bs, \ba, \bs', \bg, r)$.
        \STATE Update $\pi_\ppi$ using~\Cref{eq:pi-loss} and data $(\bs, \ba, \bg)$.
        \STATE Update $p_\pdyn$ using~\Cref{eq:dyn-loss} and data $(\bs, \ba, \bg)$.
    \ENDFOR
    \FOR{$t=0,...,H -1$ steps}
        \FOR{$i=0,...,k-1$ steps}
            \STATE Sample future state $\bs_{h_i}$, where $t < h_i \leq H-1$.
            \STATE Store $(\bs_t, \ba_t, \bs_{t+1}, f(\bs_{h_i}))$ into
            $\mathcal R$.
        \ENDFOR
    \ENDFOR
\ENDFOR
\end{algorithmic}
\end{algorithm}

\subsection{Implementation Details}
\label{appsec:implementation_details}

\paragraph{Dynamics model.}
For the Ant and Sawyer experiments, we train a neural network to output the
mean and standard deviation of a Laplace distribution.
This distribution is then used to model the distribution over the \textit{difference} between the current state and the next state, which we found to be more reliable than predicting the next state.
So, the overall distribution is given by a Laplace distribution with learned mean $\mu$ and fixed standard deviation $\sigma$ computed via
\begin{align*}
    p_\pdyn = \text{Laplace}(
        \mu = g_\pdyn(\bs, \ba) + f(\bs),
        \sigma = 0.00001
    )
\end{align*}
where $g$ is the output of a network and $f$ is a function that maps a state into a goal.

For the 2D Navigation experiment, we use a Gaussian distribution.
The dynamics neural network has hidden units of size $[64, 64]$ with a ReLU hidden activations.
For the Ant and Sawyer experiments, there is no output activation.
For the linear-Gaussiand and 2D Navigation experiments, we have a tanh output, so that the mean and standard
To bound the standard deviation outputted by the network, the standard-deviation tanh is multiplied by two
with the standard deviation be between limited to between

\paragraph{Reward normalization.}
Because the different experiments have rewards of very different scale, we
normalize the rewards by dividing by a running average of the maximum reward
magnitude.
Specifically, for every reward $r$ in the $i$th batch of data, we replace the reward with
\begin{align*}
    \hat{r} = r / C_i
\end{align*}
where we update the normalizing coefficient $C_i$ using each batch of reward $\{r_b\}_{b=1}^B$:
\begin{align*}
    C_{i+1} \leftarrow (1-\lambda) \times C_i + \lambda \max_{b \in [1, \dots, B]} |r_b|
\end{align*}
and $C_i$ is initialized to $1$.
In our experiments, we use $\lambda = 0.001$.

\paragraph{Target networks.}
To train our Q-function, we use the technique from~\citet{fujimoto2018td3} in which we train two separate Q-networks with target networks and take the minimum over two to compute the bootstrap value.
The target networks are updated using a slow, moving average of the parameters after every batch of data:
\begin{align*}
    \bar{\pq}_{i+1} = (1 - \tau) \bar{\pq}_i + \times \pq_i.
\end{align*}
In our experiments, we used $\tau=0.001$.

\paragraph{Automatic entropy tuning.}
We use the same technique as in~\citet{haarnoja2018applications} to weight the rewards against the policy entropy term.
Specifically, we pre-multiply the entropy term in
\begin{align*}
    \hat{V}(\bs', \bg)
    \approx
    Q_{\pqtarget}(\bs', \ba', \bg) - \log \pi(\ba' | \bs' ; \bg),
\end{align*}
by a parameter $\alpha$ that is updated to ensure that the policy entropy is above a minimum threshold.
The parameter $\alpha$ is updated by taking a gradient step on the following function with each batch of data:
\begin{align*}
    \calF_\alpha(\alpha)
    = - \alpha \left(
        \log \pi(\ba \vbar \bs, \bg)
        + \calH_\text{target}
    \right)
\end{align*}
and where $\calH_\text{target}$ is the target entropy of the policy.
We follow the procedure in~\citet{haarnoja2018applications} to choose $\calH_\text{target}$ and choose $\calH_\text{target} = -D_\text{action}$, where $D_\text{action}$ is the dimension of the action space.

\paragraph{Exploration policy.}
Because \odac is an off-policy algorithm, we are free to use any exploration
policy.
It may be beneficial to add
For the Ant and Sawyer tasks, we simply sample current policy.
For the 2D Navigation task, at each time step, the policy takes a random action with
probability 0.3 and repeats its

\paragraph{Evaluation policy.}
For evaluation, we use the mean of the learned policy for selecting actions.

\clearpage

\paragraph{Hyperparameters.}
\Cref{table:general-hyperparams} lists the hyperparameters that were shared across the experiments.
\Cref{table:env-hyperparams} lists hyper-parameters specific to each environment.

\setlength{\tabcolsep}{15.3pt}
\begin{table*}[h!]
    \centering
    \caption{Environment specific hyper-parameters.}
    \vspace*{-5pt}
    \begin{tabular}{l|c|c}
    \toprule
    Environment & horizon & $Q$-function and policy network sizes (hidden units)
    \\
    \hline
    Box 2D & 100 & [64, 64] \\
    Ant & 100 & [400, 300] \\
    Fetch Push & 50 & [64, 64] \\
    Sawyer Push & 100 & [400, 300] \\
    Sawyer Window & 100 & [400, 300] \\
    Sawyer Faucet & 100 & [400, 300]\\
    \hline
    \midrule
    \end{tabular}
\label{table:env-hyperparams}
\end{table*}

\setlength{\tabcolsep}{50.0pt}
\begin{table*}[h!]
    \centering
    \caption{General hyperparameters used for all experiments.}
    \vspace*{-5pt}
    \begin{tabular}{l|c}
    \toprule
    \textbf{Hyperparameter} & \textbf{Value}\\
    \hline
    \# training batches per environment step & $1$\\
    batch size & $256$\\
    discount Factor & $0.99$\\
    policy hidden activation & ReLU\\
    $Q$-function hidden activation & ReLU\\
    replay buffer size & $1$ million\\
    hindsight relabeling strategy & future\\
    hindsight relabeling probability & 80\%\\
    target network update speed $\tau$ & 0.001\\
    reward scale update speed $\lambda$ & 0.001\\
    \hline
    \midrule
    \end{tabular}
\label{table:general-hyperparams}
\end{table*}

\end{appendices}

\end{document}